\definecolor{Gray}{gray}{0.9}
\definecolor{GrayH}{gray}{0.7}
\newcommand{\ind}{\mathds{1}}
\newcommand{\dd}{\mathrm{d}}
\newcommand{\diam}{\mathrm{diam}}
\newtheorem{theorem}{Theorem}
\newtheorem{lemma}{Lemma}
\newtheorem*{theorem*}{Theorem}
\newtheorem{corollary}{Corollary}
\newtheorem*{claim*}{Claim}
\newtheorem*{fact*}{Fact}
\newtheorem*{observation*}{Observation}
\theoremstyle{definition}
\newtheorem{definition}{Definition}
\newtheorem{remark}{Remark}
\newtheorem*{remark*}{Remark}
\newtheorem*{example*}{Example}
\DeclareMathOperator*{\argmin}{arg\,min}
\newcommand{\iidsim}{\overset{\text{iid}}{\sim}}
\newcommand{\michael}[1]{{\leavevmode\color{red}{Michael:\ #1}}}
\global\long\def\dimh{\mathrm{dim}_{\mathrm{H}}}%
\global\long\def\diam{\mathrm{diam}}
\icmltitlerunning{Generalization Bounds using Lower Tail Exponents in Stochastic Optimizers}
\begin{document}

\twocolumn[
\icmltitle{Generalization Bounds using Lower Tail Exponents in Stochastic Optimizers}

\begin{icmlauthorlist}
\icmlauthor{Liam Hodgkinson}{icsiberk}
\icmlauthor{Umut \c{S}im\c{s}ekli}{inria}
\icmlauthor{Rajiv Khanna}{purdue}
\icmlauthor{Michael W. Mahoney}{icsiberk}
\end{icmlauthorlist}

\icmlaffiliation{icsiberk}{ICSI and Department of Statistics, University of California, Berkeley, USA}
\icmlaffiliation{inria}{INRIA --- D\'{e}partement d'Informatique de l'\'{E}cole Normale Sup\'{e}rieure, PSL Research University, Paris, France}
\icmlaffiliation{purdue}{Department of Computer Science, Purdue University, Indiana, USA}

\icmlcorrespondingauthor{Liam Hodgkinson}{liam.hodgkinson@gmail.com}

\icmlkeywords{Machine Learning, ICML}

\vskip 0.3in
]

\printAffiliationsAndNotice{}  %

\begin{abstract}

Despite the ubiquitous use of stochastic optimization algorithms in machine learning, the precise impact of these algorithms and their dynamics on generalization performance in realistic non-convex settings is still poorly understood. While recent work has revealed connections between generalization and heavy-tailed behavior in stochastic optimization, this work mainly relied on continuous-time approximations; and a rigorous treatment for the original discrete-time iterations is yet to be performed. To bridge this gap, we present novel bounds linking generalization to the \emph{lower tail exponent} of the transition kernel associated with the optimizer around a local minimum, in \emph{both} discrete- and continuous-time settings. To achieve this, we first prove a data- and algorithm-dependent generalization bound in terms of the celebrated Fernique--Talagrand functional applied to the trajectory of the optimizer. Then, we specialize this result by exploiting the Markovian structure of stochastic optimizers, and derive bounds in terms of their (data-dependent) transition kernels. We support our theory with empirical results from a variety of neural networks, showing correlations between generalization error and lower tail exponents.

\end{abstract}

\section{Introduction}

Fundamental to the operation of modern machine learning is \emph{stochastic optimization}: the process of minimizing an objective function via the simulation of random elements. 
Its practical utility is matched by its theoretical depth; for decades, optimization theorists have sought to explain the surprising generalization ability of stochastic gradient descent (SGD) and its various extensions for non-convex problems --- most recently in the context of neural networks and deep learning. 
Classical convex optimization-centric approaches fail to explain this phenomenon.

\begin{figure}[t]
    \centering
    \includegraphics[width=0.23\textwidth]{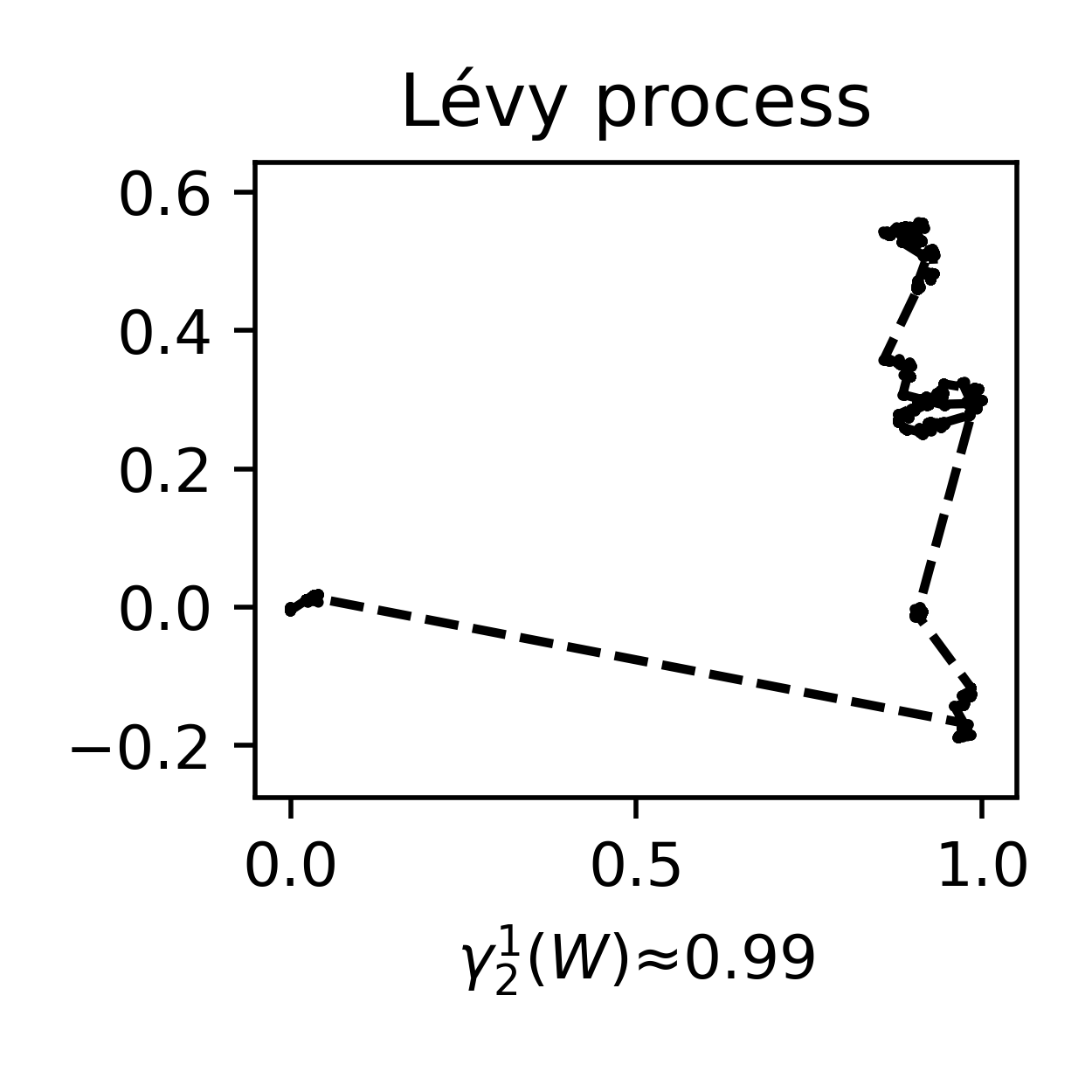}
    \includegraphics[width=0.23\textwidth]{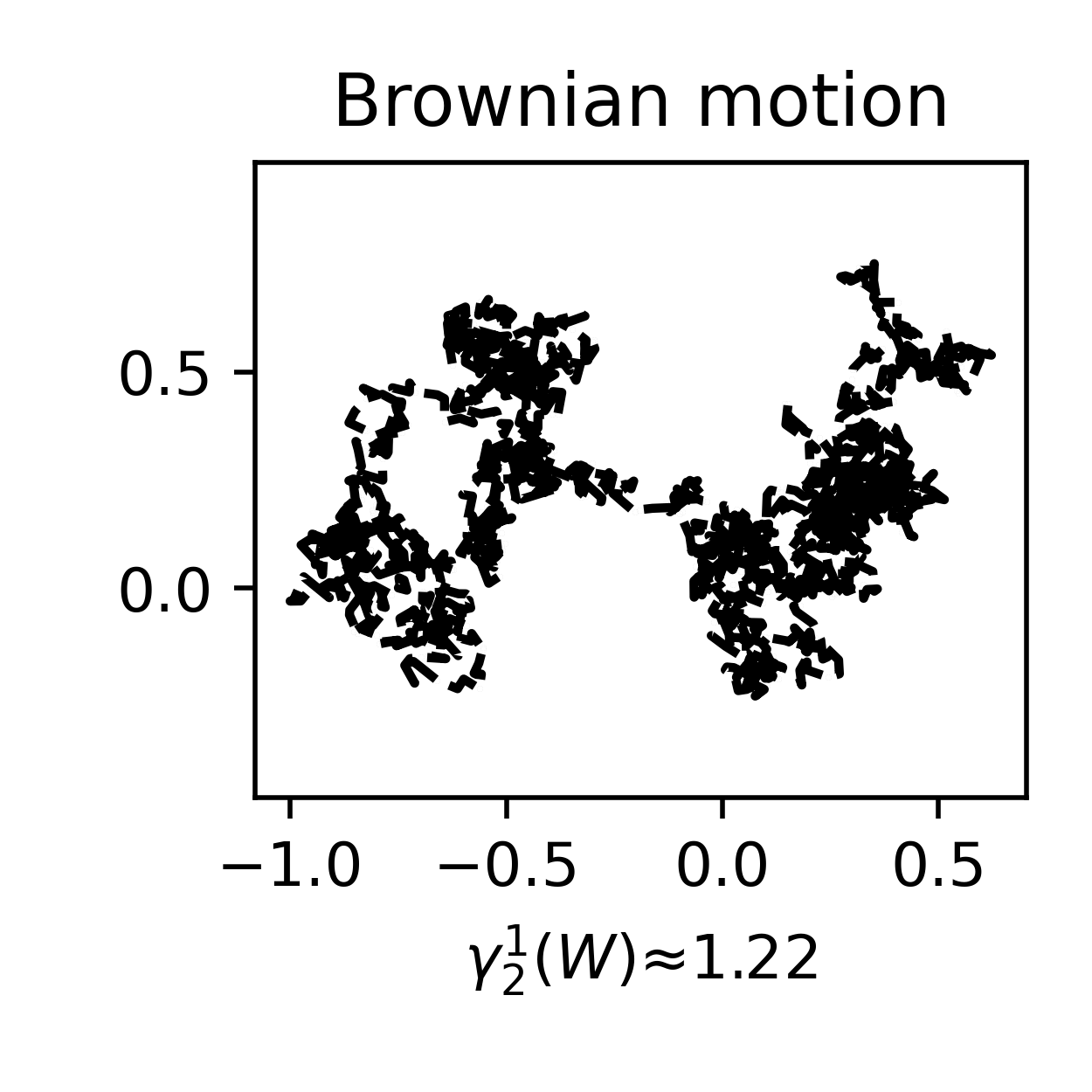}
    \vspace{-.5cm}
    \caption{Discrete sample path approximations of a heavy-tailed $\alpha$-stable L\'{e}vy process ($\alpha = 1.5$), and standard Brownian motion. Estimates of our normalized Fernique--Talagrand functional $\gamma_2^{1}\left(\cdot\right)$ is reported under each figure (see Section~\ref{sec:ftnorm}). Observe this functional is reduced with smaller tail index and
    ``tighter clustering" of the trajectory.    }
    \label{fig:BrownLevy}
    \vspace{-.5cm}
\end{figure}

There has been an increasing number of attempts for developing generalization bounds for non-convex learning settings.  
This work has approached the problem from different perspectives, such as information theory, compression/sparsity/intrinsic dimension, or implicit (algorithmic) regularization (details to be provided in Section~\ref{sec:relwork}). 
Among these approaches, a promising direction has been to consider \emph{optimization trajectories}, rather than single point estimates obtained during (or at the end of) the optimization process (e.g., \citet{neyshabur2017exploring,xu2017information,arora2018stronger}).  This addresses a plausible concern that single points may not necessarily be able to capture all the information regarding generalization.
There has also been significant empirical evidence (using a wide variety of approaches) supporting this idea \cite{jastrzkebski2017three,xing2018walk,martin2018implicit_JRNL,martin2020predicting_JRNL,Jastrzebski2020,jastrzebski2021catastrophic}.
Of particular interest to us are the recent empirical developments linking \emph{heavy-tailed fluctuations} in  optimization trajectories to generalization performance \citep{simsekli2019tail,gurbuzbalaban2020heavy,hodgkinson2020multiplicative}.

The heavy-tailed dynamics observed in SGD exhibit qualitatively different behavior from Gaussian dynamics (see Figure \ref{fig:BrownLevy}), and it typically coincides with improved performance \cite{martin2019traditional}. As a first step in providing theoretical justification for these observations, \citet{simsekli2020hausdorff} used fractal dimension theory to prove a generalization bound involving the tail exponent of the iterates obtained from a stochastic optimizer. While they brought a new perspective, their analysis unfortunately assumes a continuous-time Feller process model as an approximation for the optimizer trajectories, and it is unclear how their techniques can be extended beyond this setting. Of course, optimization procedures used in machine learning are not continuous-time, and a rigorous treatment for the original discrete-time setting is still missing.

In this study, we address this issue and present a new mathematical framework that is sufficiently flexible to treat the discrete-time setting and to recover (and improve) the results of \citet{simsekli2020hausdorff} in the continuous-time setting.
Similar to \citet{simsekli2020hausdorff}, our underlying strategy is to explicitly include \emph{all} the information surrounding the dynamics of the optimizer near the optimum as it applies to generalization performance.
  Therefore, the quantity of interest in this work is an \emph{accumulated generalization gap} over the trajectory of the optimizer: 
\begin{equation}
\label{eq:AccError}
\sup_{t \in [t_1,t_2]} |\mathcal{E}_n(W_t)| = \sup_{w \in \{W_t\}_{t=t_1}^{t_2}} |\mathcal{E}_n(w)|,
\end{equation}
where $\{W_t\}_{t=t_1}^{t_2}$ denotes the optimizer trajectory from `time' $t_1$ to $t_2$,  $\mathcal{E}_n(w) := \mathcal{R}_n(w) - \mathcal{R}(w)$ is the generalization gap (or the generalization error), and $\mathcal{R}$ and $\mathcal{R}_n$ denote the population and empirical risk functions, respectively. As the generalization error towards the end of training is of greatest interest, $t_1$ will typically be chosen so that $\{W_t\}_{t=t_1}^{t_2}$ lies in the domain of attraction of a local optimum. %
The advantage of considering~\eqref{eq:AccError} is that the presence of the supremum enables tools from analytic probability theory surrounding uniform error bounds. Therefore, to bound and estimate (\ref{eq:AccError}), we will draw from this literature --- in particular, a certain functional of  \citet{fernique1971regularite} and  \citet{talagrand1996majorizing}.%

\subsection{Contributions}

Recalling that any practical stochastic optimization algorithm can be written as a Markov process \cite{hodgkinson2020multiplicative}, our main contribution is encapsulated in the following informal theorem, with the precise statement resulting from combining Theorem \ref{thm:NormFL} and Corollary \ref{cor:BulkExponent} (see Section~\ref{sec:main}). 

\begin{theorem*}[Informal]
Assume that an optimizer satisfies the following in the neighborhood of a local minimum $w^\ast$ : there is a \emph{lower tail exponent} $\alpha$ such that for any $k$,
\[\mathbb{P}(\|W_{k+1}-w^\ast\|\leq r \,\vert\, W_k = w^\ast) \approx \mathcal{O}(r^\alpha),\mbox{ as } r\to 0^+,\]
where $W_k$ is the $k^{\text{th}}$ iterate of the optimizer. Then, the expected accumulated generalization gap (\ref{eq:AccError}) in the neighborhood of $w^\ast$ is approximately bounded by the sum of $C\sqrt{\alpha / n}$ and the mutual information between the data and the trajectory of the weights, where $C > 0$ is a constant.%
\end{theorem*}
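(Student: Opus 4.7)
The plan is to combine Theorem \ref{thm:NormFL}---the data-dependent generalization bound stated in terms of the normalized Fernique--Talagrand functional $\gamma_2^1$ of the optimizer trajectory---with a localized covering-number estimate derived from the lower tail exponent hypothesis. Concretely, Theorem \ref{thm:NormFL} already reduces $\E\sup_{t\in[t_1,t_2]}|\Ecal_n(W_t)|$ to controlling $\E\,\gamma_2^1(\{W_t\}_{t=t_1}^{t_2})$ plus a mutual information correction (the latter absorbing the data-dependence of the trajectory via a standard change-of-measure / Donsker--Varadhan argument). All that remains is to bound the expected Fernique--Talagrand functional in terms of $\alpha$ and $n$.

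To do this, I would first majorize $\gamma_2^1$ by a Dudley-type entropy integral of the form
\begin{equation*}
\gamma_2^1(T) \lesssim \frac{1}{\sqrt{n}}\int_0^{\diam(T)}\sqrt{\log N(T,\|\cdot\|,\veps)}\,\rmd\veps,
\end{equation*}
where $T=\{W_t\}_{t=t_1}^{t_2}$ and $N(\cdot,\|\cdot\|,\veps)$ denotes the $\veps$-covering number in the Euclidean metric. The lower tail exponent hypothesis $\pr(\|W_{k+1}-w^\ast\|\leq r\mid W_k=w^\ast)\asymp r^\alpha$ says that the one-step kernel is $\alpha$-Ahlfors regular near $w^\ast$; exploiting the Markov property, this pointwise behavior propagates to an expected occupation bound on small balls, and a counting argument then yields $N(T,\|\cdot\|,\veps)\lesssim (t_2-t_1)\,\veps^{-\alpha}$ up to lower-order terms, provided the iterates remain in the basin of attraction of $w^\ast$. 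Substituting into the Dudley bound and evaluating the integral gives $\E\,\gamma_2^1\lesssim\sqrt{\alpha/n}$ up to logarithmic and diameter-dependent constants, which is exactly the rate claimed in the informal theorem.

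The main obstacle is the covering-number step: translating a purely one-step, pointwise tail exponent at a single point $w^\ast$ into a global bound on the $\veps$-covering number of the entire random trajectory. This needs, in order, (i) a hitting/occupation estimate showing the chain concentrates near $w^\ast$ so that the local exponent dominates; (ii) a packing argument that converts the single-step probability of landing in $B(w^\ast,\veps)$ into a count of $\veps$-separated iterates, for which (quasi-)stationarity of the kernel near $w^\ast$ is convenient; and (iii) careful bookkeeping of constants so that one recovers the sharp $\sqrt{\alpha/n}$ rate rather than a weaker polynomial-in-$\alpha$ dependence. Once these are in place, combining with the mutual information term from Theorem \ref{thm:NormFL} and taking expectations yields the informal statement, whose precise form is Corollary \ref{cor:BulkExponent}.
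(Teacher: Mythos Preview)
Your high-level strategy matches the paper's: reduce to the normalized Fernique--Talagrand functional plus mutual information via Theorem~\ref{thm:NormFL}, then control $\gamma_2^\rho$ by a Dudley entropy integral, bound the covering number of the trajectory through the transition kernel (the paper does this via a stopping-time/sojourn-time lemma, your ``packing argument'' step~(ii)), and finally invoke the lower-tail exponent to get $N_\veps\lesssim\veps^{-\alpha}$, which integrates to $\sqrt{\alpha}$. The random-walk approximation with a total-variation error term that the paper uses in Corollary~\ref{cor:BulkExponent} is exactly your steps~(i) and~(ii).

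There is, however, a bookkeeping slip you should fix. The normalized FT functional $\gamma_2^\rho(T)$ has \emph{no dependence on $n$}: the Dudley bound reads $\gamma_2^\rho(T)\lesssim\rho^{-1}\int_0^{\rho}\sqrt{\log N_\veps(T)}\,\rmd\veps$, with no $1/\sqrt{n}$ in front, and the upper limit is $\rho$, not $\diam(T)$ (this truncation is what makes the functional ``normalized'' and keeps the integral finite). Consequently the correct intermediate conclusion is $\E\,\gamma_2^\rho\lesssim\sqrt{\alpha}$; the $1/\sqrt{n}$ factor enters only afterwards, when you plug $\gamma_2^\rho$ into the bound \eqref{eq:GenBoundExp} of Theorem~\ref{thm:NormFL}. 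As written, your displayed inequality and the line ``$\E\,\gamma_2^1\lesssim\sqrt{\alpha/n}$'' double-count the sample-size factor. Once this is corrected, your argument and the paper's coincide.
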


To prove this result, we develop a theoretical framework for investigating the generalization properties of stochastic optimizers, in two parts.
In the first part, \textbf{(I)}, the generalization gap attributable to optimizer dynamics is effectively reduced to a normalized \emph{Fernique--Talagrand} (FT) functional.
This functional is introduced in Section \ref{sec:FT}; and in Theorem \ref{thm:NormFL}, we obtain a sharp (up to constants) bound on the accumulated generalization gap (\ref{eq:AccError}) in terms of our normalized FT functional applied to the trajectory of the optimizer, and the mutual information. 
In the second part, \textbf{(II)}, we proceed to bound the normalized FT functional in two ways: %
\begin{enumerate}[leftmargin=*]
    \item \emph{Hausdorff dimension: } By considering the trajectory of a continuous-time stochastic optimization model and  conducting a similar fractal dimension analysis to \citet{simsekli2020hausdorff}, we recover and sharpen their results in Corollary \ref{cor:Hausdorff}. In particular, using our framework, we remove many nontransparent assumptions while improving convergence rates in generalization for heavy-tailed continuous-time processes from $\mathcal{O}(n^{-1/2} \log n)$ to $\mathcal{O}(n^{-1/2})$. 
    \item \emph{Transition kernel: } In Theorem \ref{thm:bound_transition_kernel}, we bound the expected normalized FT functional of the optimizer trajectory  in terms of the transition kernel of the optimizer. Similar results are obtained for continuous-time Markov models as well. Our result illustrates effective dimension reduction through properties (e.g., variance) of the kernel. 
\end{enumerate}
Finally, assuming that the behavior of the optimizer in a neighborhood of a local minimum can be well-approximated by a random walk, 
we bound the expected normalized FT functional in terms of the \emph{lower tail exponent} of the transition kernel, thereby obtaining our main result. Smaller values of the lower tail exponent correspond to tighter ``clustering" behavior in iterates of the optimizer (Figure~\ref{fig:BrownLevy}). Intuitively, this translates to a form of compression in the otherwise possibly high dimensional search space, leading to better generalization performance. 
While our analysis relies on the randomness in stochastic optimizers, for sufficiently large step sizes, deterministic optimizers (i.e., full-batch gradient descent) are known to also exhibit stochastic behaviors \cite{kong2020stochasticity}, and so our results extend to these cases. Our contributions here are predominantly theoretical, motivated by recent empirical work on state-of-the-art neural network models~\cite{martin2018implicit_JRNL,martin2020predicting_JRNL,martin2021simpsons}.
However, due to the relative tractability of estimating the local tail exponent, in Section \ref{sec:Experiments}, we demonstrate how these quantities correlates with generalization  in~practice.

\subsection{Related Work}
\label{sec:relwork}
For motivation and comparison, we discuss some important previous efforts and adjacent concepts in the literature. 

\ifdefined\nips
\textbf{Generalization bounds: }
\else
\paragraph{Generalization bounds.}
\fi
Naturally, there is a substantial literature involved in the development of ``generalization bounds,'' which we can only briefly summarize here. For more details, see \citet{jiang2019fantastic} and references therein. Almost universally, these bounds consider a ``single-point generalization gap,'' that is $|\mathcal{R}_n(w)-\mathcal{R}(w)|$, for fixed weights $w$. 
Earlier bounds were typically dependent only on properties of the \emph{model}, including Vapnik--Chervonenkis theory \cite{vapnik2015uniform}, and other norm-based bounds \cite{bartlett2017spectrally}. Most of these can be derived or sharpened through \emph{generic chaining} \cite{audibert2007combining}, which we shall also reconsider, albeit for a different problem. 
Such bounds are well-known to be vacuous \cite{jiang2019fantastic,bartlett2020failures,martin2021simpsons}. 
Non-vacuous bounds typically require some degree of \emph{data-dependence}, with the most effective of these bounds involving measures of \emph{sharpness} \cite{neyshabur2017exploring,jiang2019fantastic,martin2021simpsons}. 
Other bounds also have some degree of \emph{algorithm-dependence}, such as margin-based bounds \cite{antos2002data,sokolic2017generalization}, or bounds centered around stochastic gradient Langevin dynamics \cite{mou2018generalization,haghifam2020sharpened}. Specific to explaining the generalization in neural networks, which can even fit arbitrarily labeled data points~\cite{zhang2021understanding}, additional simplifying assumptions such as an infinite width~\cite{arora2018stronger}, a kernelized approximation~\cite{cao2020overparametrized}, or a simpler 2-layer setup~\cite{farnia2018a} are usually~made.

\ifdefined\nips
\textbf{Mutual information:} 
\else
\paragraph{Mutual information.}
\fi
One particular class of generalization bounds involves the \emph{mutual information} between the data and the stochastic optimizer, quantifying the one-point generalization gap by tying it to the learning ability of the algorithm itself \cite{russo2019much,xu2017information}. Intuitively, the mutual information balances the tradeoff between training loss and poor generalization due to overfitting. Such approaches are both data- and algorithm-dependent; and they can be made not only non-vacuous, but surprisingly tight \cite{asadi2018chaining}. %
Our Theorem \ref{thm:NormFL} will also involve mutual information, extending \cite{xu2017information} to bound the error (\ref{eq:AccError}). 
At present, applications of mutual information have tied variances in the optimizer to generalization \cite{pensia2018generalization,li2019generalization,negrea2019information,bu2020tightening,haghifam2020sharpened,neu21a}. Unfortunately, in larger models, the variance can anti-correlate with generalization \cite{jastrzkebski2017three,martin2020predicting_JRNL,martin2021simpsons}, which, to our knowledge, these analyses are unable to predict.

\ifdefined\icml
\else
\ifdefined\nips
\textbf{Two phases of learning:} 
\else
\paragraph{Two phases of learning.}
\fi
It is clear that stochastic optimizers typically undergo at least two distinct phases of learning \cite{li2019towards}: (1) an exploration-like ``catapult'' phase \cite{lewkowycz2020large}, when the step size is large, the optimizer moves rapidly between regions of the loss landscape, and more general patterns are fitted; and (2) an exploitation-like ``lazy training'' phase \cite{chizat2018lazy,fort2020deep}, when the step size is small, dynamics behave similarly to convex optimization around a central basin, and more precise patterns are fitted. Performance in this second phase is easily tied to mixing rates using arguments from convex optimization theory. However, very little is known surrounding the first phase, despite its significant apparent influence on generalization performance. Here, we provide a theoretical framework enabling investigation into \emph{both} of these phases.
\fi

\ifdefined\nips
\textbf{Optimization-based generalization:} 
\else
\paragraph{Optimization-based generalization.}
\fi
A body of work leans on implicit regularization effects of optimization algorithms to explain generalization, e.g., see~\citet{Arora2019implicit,chizat2020implicit} and references within for certain simplified problem settings. Another line of work focuses on stability of the optimization process~\cite{hardt16sgd} to bound the generalization gap. These are also one-point generalization bounds, and they do not take into account the trajectory of optimization.
In lieu of the inability of convex optimization theory to explain the behavior of SGD in non-convex settings, it is common to consider the behavior of Markov process models for stochastic optimizers \cite{mandt2016variational}.
These models are often continuous for ease of analysis \cite{orvieto2018continuous,simsekli2019tail}, although discrete-time treatments have become increasingly popular \cite{dieuleveut2017bridging,hodgkinson2020multiplicative,camuto2021fractal}. Such continuous-time models are formulated as stochastic differential equations $\dd W_t = \mu(W_t) \dd t + \sigma(W_t) \dd X_t$, where $X_t$ is typically Brownian motion, or some other L\'{e}vy process, and derived through the (generalized) central limit theorem and taking learning rates to zero \cite{fontaine2020continuous}. 

\ifdefined\nips
\textbf{Heavy-tailed universality:} 
\else
\paragraph{Heavy-tailed universality.}
\fi
Recent investigations have identified the presence of heavy tails in the dynamics of stochastic optimizers \cite{simsekli2019tail,simsekli2019heavy, panigrahi2019non}. Subsequent theoretical analyses trace the origins of these fluctuations to the presence of multiplicative noise \cite{hodgkinson2020multiplicative,gurbuzbalaban2020heavy}. Establishing theory connecting generalization performance to the presence of power laws has become a prominent open problem in light of the empirical and theoretical studies of  \citet{martin2017rethinking,martin2018implicit_JRNL,martin2019traditional,martin2020heavy,martin2020predicting_JRNL}; these studies have explicitly tied performance to the presence of heavier tails in the spectral distributions of weights, which was then linked to generalization through compressibility, under a statistical independence assumption \cite{barsbey2021heavy}. 
Of particular note is the previous work of \citet{simsekli2020hausdorff}, correlating generalization performance with heavier-tailed dynamics; this work considered (\ref{eq:AccError}) in the case $(t_1,t_2) = (0,1)$ and in the context of continuous-time stochastic optimizer models. However, their approach is bound to a continuous-time Feller process model for the optimizer; and a significant objective of this work is to extend these ideas into the natural discrete-time~setting.

\section{Preliminaries}
\label{sec:FT}
\subsection{Background}
Let $\ell:\mathbb{R}^D \times \mathbb{R}^p \to \mathbb{R}_+$ be a non-negative loss function assessing accuracy for a model with parameters $w \in \mathbb{R}^D$ to fixed data $X \in \mathbb{R}^p$. Total model accuracy is determined by the population risk function $\mathcal{R}(w) \coloneqq \mathbb{E}_{X \sim \mathcal{D}} [\ell(w,X)]$, where $\mathcal{D}$ denotes the distribution of possible data. Therefore, the optimal choice of parameters is determined to be those solving the true risk minimization problem $\min_w \mathcal{R}(w)$. As this problem is intractable, model training is typically achieved by solving the empirical risk minimization problem: for a collection of data $X_1,\dots,X_n \iidsim \mathcal{D}$, solve
\[
\min\nolimits_w \mathcal{R}_n(w),\quad \mbox{where}\quad \mathcal{R}_n(w) \coloneqq (1/n) \sum\nolimits_{i=1}^n \ell(w,X_i)
\]
is the empirical risk function. Provided this problem can be solved to near-zero empirical risk, model accuracy depends on the \emph{generalization gap} $\mathcal{E}_n(w) = n^{-1} \sum_{i=1}^n R_i(w)$, where $R_i(w) = \ell(w,X_i) - \mathbb{E}_{X \sim \mathcal{D}} \ell(w,X)$.  

Given a set of parameters $W \subset \mathbb{R}^D$, our objective is to bound the worst-case generalization gap over all $w \in W$, i.e., $\sup_{w\in W} |\mathcal{E}_n(w)|$. The set $W$ is kept arbitrary for now, yet, we will be mainly interested in the case where $W$ is part of an optimizer trajectory, e.g., $W = \{W_t\}_{t=t_1}^{t_2}$ (cf.\ \eqref{eq:AccError}).
\ifdefined\icml 
\else
Assuming that $\ell$ is bounded in magnitude by $B > 0$, from McDiarmid's inequality, with probability at least $1 - \delta$, it holds that
$$
\max_{w \in W}|\mathcal{E}_n(w)| \leq \mathbb{E}\max_{w \in W}|\mathcal{E}_n(w)| + B\sqrt{\frac{\log(1/\delta)}{n}}.
$$
In the case where $W$ is finite, a naive union bound implies 
\begin{equation}
\label{eq:NaiveBound}
\mathbb{E}\max_{w \in W}|\mathcal{E}_n(w)| \leq B \sqrt{\frac{\log(2|W|)}{2 n}},
\end{equation}
which holds regardless of the properties or geometry of $W$. Up to constants, provided $\log (2|W|) = o(n)$, this is the best that one can achieve by assuming \emph{only} that the loss is bounded (see \citet[Theorem 5.3.3]{talagrand2014upper}, for~example).
\fi

It is common to assume that the loss functions are Lipschitz in $w$, that is,  %
for some metric $(x,y)\mapsto d(x,y)$ on $\mathbb{R}^D$, %
$|\ell(w,X)-\ell(w',X)| \leq Ld(w,w')$ for $w,w'\in W$. Hoeffding's inequality \citep[Theorem 2.8]{boucheron2013concentration} then implies that
the difference $R_i(w)-R_i(w')$ between two points $w,w' \in W$ is sub-Gaussian with variance parameter $(Ld(w,w'))^2$. %
Dudley's classical method of chaining \cite{dudley1967sizes} asserts that one can take advantage of the 
triangle inequality to ``chain'' these bounds together over particular choices of $w,w'$ and to bound the maximal error over the set $W$. \citet{talagrand1996majorizing} later improved on this approach and developed \emph{generic chaining}. This approach is heavily inspired by the following functional originally introduced by \citet{fernique1971regularite,fernique1975regularite}: 
\begin{equation}
\label{eq:FerniqueTalagrand}
\gamma_2(W,d) = \inf_{\mu} \sup_{w \in W} \int_0^{\diam(W)} \sqrt{\log \frac1{\mu(B_r^d(w))}} \dd r,
\end{equation}
where $B_r^d(w) = \{w': d(w,w') \leq r\}$ is the ball of radius $r$ under $d$ around $w$, $\diam(W) = \sup_{w,w'\in W} d(w,w')$, and the infimum is taken over all probability measures $\mu$ supported on $W$. We refer to the functional (\ref{eq:FerniqueTalagrand}) as the \emph{Fernique--Talagrand (FT) functional}. Generic chaining would later focus on an (equivalent, up to constants) discrete variant of this functional \cite{talagrand2001majorizing,audibert2007combining,talagrand2014upper}, but for our purposes, we will find it more convenient to work with the original formulation (\ref{eq:FerniqueTalagrand}).

It is also known that the approach is essentially \emph{optimal} in the following sense: if the error $w \mapsto \mathcal{R}_n(w) - \mathcal{R}(w)$ is a Gaussian process with covariance $(w,w')\mapsto (Ld(w,w'))^2$, then the FT functional both upper and lower bounds the maximal expected error in the empirical risk $\mathbb{E}\sup_{w \in W}|\mathcal{R}_n(w)-\mathcal{R}(w)|$ up to constants \citep[Theorem 5.1]{talagrand1996majorizing}. Therefore, in the absence of additional information on the distribution of the empirical risk, the FT functional provides the \emph{sharpest possible generalization bound} up to constant~factors.

Broadly speaking, the FT functional simultaneously measures \emph{variance} and \emph{clustering}. Clustering occurs in the absence of spatial homogeneity, and it can be measured in a number of ways. For more discussion, we refer to Appendix \ref{sec:VarCluster}. %
Later, to draw connections to heavy-tailed theory in machine learning, e.g. \cite{simsekli2020hausdorff}, the degree of clustering will be represented using lower tail exponents in the transition kernel. %

\subsection{Data-dependence with mutual information}
In the classical setting, the set $W$ is fixed and deterministic. 
However, in our setting of (\ref{eq:AccError}), $W$ is both random and data-dependent\footnote{We refer the reader to \cite{molchanov2005theory} for the definition and details of a random set.}. To extend the theory to allow for data-dependent $W$, we shall invoke some ideas from information theory. Recall that the $\alpha$-Renyi divergence is defined by 
\[
D_{\alpha}(\mu,\nu)=\frac{1}{\alpha-1}\log\mathbb{E}\bigg[\frac{\dd \mu}{\dd \lambda}(Z)^\alpha \frac{\dd \nu}{\dd \lambda}(Z)^{1-\alpha}\bigg],
\]
where $\alpha>1$ and $Z$ is distributed according to some (arbitrary) probability measure $\lambda$ where $\mu$ and $\nu$ are absolutely continuous with respect to $\lambda$ (for example, $\lambda=\frac{1}{2}(\mu+\nu)$).
The $\alpha$-mutual information between two random elements $X,Y$ is defined as the $\alpha$-Renyi divergence between the joint probability measure $\mathbb{P}_{X,Y}$ and the product measure $\mathbb{P}_{X}\otimes\mathbb{P}_{Y}$: $I_{\alpha}(X,Y)=D_{\alpha}(\mathbb{P}_{X,Y}\Vert\mathbb{P}_{X}\otimes\mathbb{P}_{Y})$, measuring the extent of the dependence between $X$ and $Y$. 
The standard Kullback-Leibler mutual information is obtained by taking $\alpha\to1^{+}$. The $\alpha$-mutual information is non-decreasing in $\alpha$, that is, $I_{\alpha}(X,Y)\leq I_{\beta}(X,Y)$ for $\alpha\leq\beta$. Therefore, we may define the \emph{total mutual information} as $I_{\infty}(X,Y)=\lim_{\alpha\to\infty}I_{\alpha}(X,Y)=\sup_{\alpha}I_{\alpha}(X,Y)$. 

\subsection{Normalized Fernique--Talagrand functional}
\label{sec:ftnorm}

It is often the case with generalization bounds that $\ell$ is assumed to be Lipschitz-continuous with respect to the Euclidean metric \citep{neyshabur2017exploring,mou2018generalization}. To also ensure subgaussianity of $R_i(w)$ itself, boundedness of $\ell$ is often assumed \citep{negrea2019information,simsekli2020hausdorff}. Together, these two assumptions are equivalent to assuming Lipschitz continuity under the truncated metric $d_\rho(x,y) = \min\{\rho,\|x-y\|\}$, where $\rho > 0$. We refer to the corresponding FT functional $\gamma_2^\rho(W) \coloneqq \gamma_2(W,d_\rho)$ as the \emph{normalized} Fernique--Talagrand functional: for $\rho > 0$, from (\ref{eq:FerniqueTalagrand}) and scaling by $1/\rho$,
\begin{equation}
\label{eq:NormFT}
\gamma_2^\rho(W) = \inf_{\mu}\sup_{w\in W}\frac{1}{\rho}\int_{0}^\rho\sqrt{\log\frac{1}{\mu(B_{r}(w))}}\dd r,
\end{equation}
where $B_r(w)$ denotes the Euclidean ball of radius $r > 0$ about $w$, and the infimum is once again over all probability measures $\mu$ on $W$. %
The functional is considered normalized as it does not grow as $\diam(W) \to \infty$. Here, $\rho$ is arbitrary, although we will find the tightest bounds to occur when $\rho = B / L$, where $B$ and $L$ is the upper bound, and Lipschitz constant of $\ell$, respectively. It is worth noting that in the analysis to follow, the bounded+Lipschitz assumption could be relaxed (e.g. to H\"{o}lder continuity) by choosing a different metric, thus considering a different FT functional. However, we have found this choice of assumption and its corresponding FT functional (\ref{eq:NormFT}) to yield the best compliance with our experiments. 

\section{Main Results}
\label{sec:main}

Our first main result is presented in Theorem \ref{thm:NormFL} below. If $W$ is uncountable, we interpret probabilities and expectations of suprema over uncountable sets as the corresponding supremum over all possible countable subsets. %
\begin{theorem}
\label{thm:NormFL}
Assume that $\ell$ is bounded by $B > 0$ and $L$-Lipschitz continuous (with respect to the Euclidean metric). There exists a universal constant $K_1 > 0$ such that for any (random) closed set $W \subset \mathbb{R}^D$, with probability at least $1 - \delta$, for any $\rho > 0$, letting $L_\rho := \max\{B,L\rho\}$,
\begin{multline}
\label{eq:GenBoundProb}
\sup_{w \in W} |\mathcal{E}_n(w)| \\ \leq K_1 L_\rho \left(\frac{\gamma_2^\rho(W)}{\sqrt{n}} + \sqrt{\frac{\log(1/\delta) + I_\infty(X, W)}{n}}\right).
\end{multline}
Furthermore, there exists %
$K_2 > 0$ such that
\begin{equation}
\label{eq:GenBoundExp}
\mathbb{E}\sup_{w \in W} |\mathcal{E}_n(w)| \leq K_2 L_\rho \left(\frac{\mathbb{E}\gamma_2^\rho(W) + \sqrt{I_1(X,W)}}{\sqrt{n}}\right).
\end{equation}
If $\ell$ is unbounded, then (\ref{eq:GenBoundProb}) holds with probability at least $1 - \delta - \mathbb{P}(\sup_{w \in W} \mathcal{R}_n(w) > B)$. 
\end{theorem}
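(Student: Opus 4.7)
The plan is to prove \eqref{eq:GenBoundProb} first for a \emph{fixed} closed $W$ via generic chaining, and then lift to random $W$ by information-theoretic changes of measure using $I_\infty$ (for the probability bound) and $I_1$ (for the expectation bound). By the convention stated after the theorem, I may restrict attention to countable $W$ throughout and treat the supremum accordingly.

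First I would establish the sub-Gaussian increment structure of $w \mapsto \mathcal{E}_n(w)$. The bounds $\|\ell\|_\infty \leq B$ and $L$-Lipschitzness together give $|\ell(w,x) - \ell(w',x)| \leq 2(L_\rho/\rho)\, d_\rho(w,w')$ for all $w,w',x$, where $d_\rho(w,w') := \min\{\rho, \|w-w'\|\}$ (one checks this separately in the cases $\|w-w'\|\leq\rho$ and $\|w-w'\|>\rho$ using $L_\rho = \max\{B, L\rho\}$). By Hoeffding's lemma, $\mathcal{E}_n(w) - \mathcal{E}_n(w')$ is then sub-Gaussian with variance proxy of order $(L_\rho/\rho)^2 d_\rho(w,w')^2/n$. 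For deterministic $W$, Talagrand's majorizing-measures theorem (in the Fernique formulation recalled in the excerpt) bounds the expected supremum of a centered sub-Gaussian process by a universal multiple of $\gamma_2(W, d)$; combined with the identity $\gamma_2(W, d_\rho) = \rho\, \gamma_2^\rho(W)$ (immediate from $B_r^{d_\rho}(w) = B_r(w)$ for $r \leq \rho$ and $\diam(W, d_\rho) \leq \rho$), this yields $\mathbb{E}\sup_{w \in W}|\mathcal{E}_n(w) - \mathcal{E}_n(w_0)| \leq K L_\rho \gamma_2^\rho(W)/\sqrt n$ for any base point $w_0 \in W$. The residual $|\mathcal{E}_n(w_0)|$ is controlled by a one-point Hoeffding bound of order $B\sqrt{\log(1/\delta)/n}$, which is absorbed into the $\sqrt{\log(1/\delta)/n}$ term via $B \leq L_\rho$. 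A deterministic high-probability version of \eqref{eq:GenBoundProb} then follows by applying McDiarmid's bounded-differences inequality to $(X_1,\ldots,X_n) \mapsto \sup_{w \in W}|\mathcal{E}_n(w)|$, whose coordinatewise differences are at most $2B/n$.

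To lift to random $W$, I introduce an independent copy $\tilde W$ of $W$ (same marginal, independent of $X$); the deterministic bound then holds conditional on $\tilde W$ under the product measure $\mathbb{P}_X \otimes \mathbb{P}_W$. For \eqref{eq:GenBoundProb} I use the essential bound $\mathrm{d}\mathbb{P}_{X,W}/\mathrm{d}(\mathbb{P}_X \otimes \mathbb{P}_W) \leq e^{I_\infty(X,W)}$ (by definition of total mutual information), so any event of product-probability $\delta' := \delta\, e^{-I_\infty(X,W)}$ has joint-probability at most $\delta$; plugging $\delta'$ into the deterministic bound produces the term $\sqrt{(\log(1/\delta) + I_\infty(X,W))/n}$. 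For \eqref{eq:GenBoundExp} I invoke the Xu--Raginsky/Russo--Zou inequality: if $f(X, W)$ is $\sigma^2$-sub-Gaussian in $X$ for every fixed $W$, then $|\mathbb{E}_{X,W}[f] - \mathbb{E}_X \mathbb{E}_W[f]| \leq \sqrt{2\sigma^2 I_1(X,W)}$. Taking $f := \sup_{w \in W}|\mathcal{E}_n(w)|$, which is uniformly $\sigma^2 = B^2/n$-sub-Gaussian in $X$ by McDiarmid (the constant is independent of $W$), and combining with the product-expectation bound $\mathbb{E}_X \mathbb{E}_{\tilde W}[f] \leq K L_\rho \mathbb{E}\gamma_2^\rho(W)/\sqrt n$ from the previous step, yields \eqref{eq:GenBoundExp} using $B \leq L_\rho$. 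The unbounded-loss extension follows by restricting to the event $\{\sup_w \mathcal{R}_n(w) \leq B\}$: on this event, one may replace $\ell$ by $\ell \wedge B$ without changing the relevant empirical quantities, reducing to the bounded case; the complementary probability is absorbed into $\delta$.

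The main technical step is the chaining application: one must track the factor $L_\rho/\rho$ carefully so that the final bound scales only with $L_\rho$ and $\gamma_2^\rho(W)$, which is what pins down $\rho \asymp B/L$ as the natural choice. A subtler point is the expectation bound \eqref{eq:GenBoundExp}: to recover $\sqrt{I_1/n}$ rather than a loose $\sqrt{I_1}$, one needs the uniform $B^2/n$ (not $B^2$) McDiarmid variance proxy for the supremum functional, which requires that the bounded-differences constant be independent of $W$ — fortunately automatic here, since each $R_i$ is bounded by $2B$ pointwise in $w$.
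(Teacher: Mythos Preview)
Your proposal is correct and follows essentially the same route as the paper: sub-Gaussian increments under $d_\rho$, generic chaining for the supremum, the $I_\infty$ change-of-measure for \eqref{eq:GenBoundProb}, and McDiarmid sub-Gaussianity of the supremum combined with the Xu--Raginsky lemma for \eqref{eq:GenBoundExp}. The only packaging difference is that the paper carries out the chaining explicitly (building the admissible sequence $T_k^W$ and events $\Omega(u)$) and inserts the $I_\infty$ decoupling \emph{inside} the chaining union bound, whereas you invoke the majorizing-measures bound as a black box, concentrate via McDiarmid, and apply the decoupling once to the single resulting bad event; both are valid, and your version is slightly more modular.
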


Note also that when $W = \{w^\ast\}$, where $w^\ast$ is the location of the optimizer at some \emph{deterministic} stopping time, we recover (up to constants) the information-theoretic bound of \citet{xu2017information}. Therefore, we inherit the interpretation of mutual information as a measurement of ``overfitting,'' together with its follow-up developments \cite{asadi2018chaining, haghifam2020sharpened}. The advantage of our bound is that we can better investigate the effect of dynamics on generalization through the supremum over the trajectory. We also inherit the sharpness of Theorem \ref{thm:NormFL} from that of the FT functional in the event that $X$ and $W$ are independent.

\label{sec:Bounds}

\subsection{Markov stochastic optimizers}

Our remaining theoretical contributions are concerned with bounding and estimating $\gamma_2^\rho$ when applied to the trajectory of a stochastic optimizer. Such bounds on $\gamma_2^\rho$ directly imply generalization bounds by applying Theorem \ref{thm:NormFL}. 

We adopt the Markov formulation of stochastic optimizers, seen in \citet{hodgkinson2020multiplicative}. This formulation incorporates SGD, momentum, Adam, and stochastic Newton, among others. To summarize, approximate solutions to problems of the form $\argmin_w \mathbb{E}\ell(w,X)$ are typically obtained by fixed point iteration: for some continuous map $\Psi$ such that any fixed point of $\mathbb{E}\Psi(\cdot,X)$ is a minimizer of $\ell$, a stochastic optimizer is constructed from the sequence of iterations $W_{k+1} = \Psi(W_k, X_{k+1})$, where $X_k$ are independent copies of $X$. For example, (online) stochastic gradient descent corresponds to the choice $\Psi(w,x) = w - \gamma b^{-1} \sum_{i=1}^b\nabla \ell(w, x_i)$, where $\gamma$ is a chosen learning rate and $b$ denotes the batch size. These iterations induce a discrete-time Markov chain in $W_k$ with transition kernel $P(w,E) = \mathbb{P}(W_{k+1} \in E\,\vert \,W_k = w)$ for a measurable set $E$. Under certain regimes (for example, small learning rate and large batch size), this chain is well-approximated by a continuous-time Markov process $\{W_t\}_{t \geq 0}$ with transition kernel $P_t(w, E) = \mathbb{P}(W_t \in E \,\vert\, W_0 = w)$ --- see \citet{fontaine2020continuous}, for example. We note that imposing a Markov assumption on the optimizer is not restrictive, as any recursive method is Markov under suitable state augmentation. Furthermore, the Markov assumption is local, so $W_0$ may be taken to be any point in the optimization.

Fortunately, the FT functional is sufficiently versatile that we can provide a direct generalization bound in terms of transition kernels. %
This is accomplished using covering arguments and the classical Dudley entropy bound \citep[Corollary 13.2]{boucheron2013concentration}. 
Here is our main result for this.

\begin{theorem}
\label{thm:bound_transition_kernel}
For a Markov transition kernel $P(x,E)$ and any $\rho > 0$, let
\[
\mathcal{I}_\rho[P] := \frac{1}{\rho} \int_0^\rho \sup_{x \in \mathbb{R}^D} \sqrt{\log \left(\frac{3^{D+2}}{P(x,B_r(x))} \right) } \dd r.
\]
There exists a universal constant $K > 0$ such that the following bounds on the FT functional hold:
\begin{enumerate}[leftmargin=*]
\item Let $W_k$, $k \geq 0$ be a discrete-time homogeneous Markov chain on $\mathbb{R}^D$ with transition kernel $P^k(w,E)$. Then for the average kernel $\bar{P}_m(w,E) := m^{-1} \sum_{k=1}^m P^k(w,E)$, 
we have $\mathbb{E}\gamma^\rho_2(\{W_k\}_{k=0}^m) \leq K \mathcal{I}_{\rho}[\bar{P}_m]$.
\item Let $\{W_t\}_{t \in [0,T]}$ be a continuous-time homogeneous Markov process on $\mathbb{R}^D$ with kernel $P_t(w,E)$. Then for the average kernel $\bar{P}_T(w,E) = \frac1T \int_0^T P_t(w,E) \dd t$, we have 
$\mathbb{E}\gamma^\rho_2(\{W_t\}_{t\in[0,T]}) \leq K \mathcal{I}_{\rho}[\bar{P}_T]$. 
\end{enumerate}
\end{theorem}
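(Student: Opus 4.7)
The plan is to combine Dudley's classical entropy bound for the FT functional with a covering number estimate derived from the Markov structure of the optimizer.

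First, I would apply the Dudley entropy bound (cited as \citet[Corollary 13.2]{boucheron2013concentration} in the theorem statement) to the truncated metric $d_\rho$. Since $\diam(W,d_\rho)\leq\rho$ for any $W$, after normalization this yields the deterministic inequality
\[
\gamma_2^\rho(W) \leq \frac{K_0}{\rho}\int_0^\rho \sqrt{\log N(W,\|\cdot\|,\epsilon)}\, d\epsilon,
\]
which reduces the task to controlling the expected Euclidean covering number of the random trajectory.

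Second, I would establish a covering number estimate of the form
\[
\mathbb{E}\, N(\{W_k\}_{k=0}^m,\epsilon) \leq \frac{3^{D+2}}{\inf_x \bar{P}_m(x, B_\epsilon(x))},
\]
which is the technical core of the argument. The idea is to take a greedy $(\epsilon/2)$-packing of the trajectory, $W_{k_1},\ldots,W_{k_N}$; by maximality, the corresponding $\epsilon$-balls cover the trajectory, so $N(\{W_k\},\epsilon)\leq N$. By the (strong) Markov property applied at each packing center, the expected number of trajectory points falling in the $\epsilon$-neighborhood of $W_{k_i}$ is at least proportional to $m\cdot\bar{P}_m(W_{k_i}, B_\epsilon(W_{k_i}))$. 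Since the trajectory contains $m+1$ points in total, the packing size $N$ is bounded as claimed; the $3^{D+2}$ factor arises from the standard geometric fact that a Euclidean ball of radius $3\epsilon$ can be covered by $3^{D+2}$ balls of radius $\epsilon$, which is used to convert between packing and covering at adjacent scales.

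Third, substituting this covering estimate into the Dudley integral and applying Jensen's inequality to the concave function $x\mapsto\sqrt{\log x}$ under the expectation gives
\[
\mathbb{E}\gamma_2^\rho(\{W_k\}_{k=0}^m) \leq \frac{K_0}{\rho}\int_0^\rho \sup_x \sqrt{\log\!\left(\frac{3^{D+2}}{\bar{P}_m(x, B_\epsilon(x))}\right)} d\epsilon = K\,\mathcal{I}_\rho[\bar{P}_m].
\]
The continuous-time case follows the same outline with $\bar{P}_T$ in place of $\bar{P}_m$, together with a measurability argument (exploiting right-continuity of sample paths) to ensure the trajectory is almost surely contained in a countable dense subset so that the supremum defining $\gamma_2^\rho$ is well-behaved. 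The main obstacle is step two: the Markov property must be applied carefully to extract a lower bound on cluster occupancy even though the trajectory may revisit the same region at non-consecutive times, which is precisely why the time-averaged kernel $\bar{P}_m$ (rather than the one-step kernel $P$) appears in the final bound.
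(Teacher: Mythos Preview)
Your overall strategy is exactly the paper's: Dudley's entropy bound, Jensen's inequality on the concave map $x\mapsto\sqrt{\log x}$, and a covering estimate for the trajectory via the Markov structure. The places where your sketch of step two deviates from what actually makes the argument work are worth flagging.

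First, the factor $3^{D+2}$ does not come from covering a ball of radius $3\epsilon$ by balls of radius $\epsilon$. The paper instead counts the number $\mathcal{N}_r(m)$ of \emph{dyadic cubes} of side $r$ hit by the trajectory, using that any Euclidean ball of radius $r$ meets at most $3^D$ such cubes; the remaining factor is a $2$ from the sojourn-time bound together with the scale $r/3$ that appears in the kernel (see below). Second, your counting argument ``the trajectory has $m+1$ points, so the packing size is bounded'' fails as stated because the $\epsilon$-balls around your packing centers overlap, so the sum of their occupancies can vastly exceed $m+1$. The paper avoids this by constructing the centers via \emph{stopping times} $\tau_j=\min\{k\geq\tau_{j-1}:\min_{i<j}\|X_k-X_{\tau_i}\|>r\}$, which guarantees both that the strong Markov property is applicable at each $\tau_j$ and that the balls $B_{r/3}(X_{\tau_j})$ are pairwise disjoint. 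One then bounds the sojourn time $T_j$ of the chain in $B_{r/3}(X_{\tau_j})$ over the window $(\tau_j,\tau_j+m]$: the strong Markov property gives $\mathbb{E}[T_j\mid\tau_j\leq m]\geq m\inf_x\bar P_m(x,B_{r/3}(x))$, while disjointness forces $\sum_j \mathbf{1}_{\{\tau_j\leq m\}}T_j\leq 2m$. Your ``greedy packing in space'' does not produce stopping times and does not yield disjoint balls at the scale you use, so both issues must be repaired along these lines.
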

We can use Theorems~\ref{thm:NormFL} and \ref{thm:bound_transition_kernel} to bound the generalization gap for a stochastic process as long as we can characterize the transition probability kernel of the process. %
Due to the dependence on the ambient dimension, we do not expect the bounds in Theorem \ref{thm:bound_transition_kernel} to be sharp\footnote{We suspect this dependence can be removed using a more efficient embedding, but we leave this as an open problem. For further details, refer to the comments in the proof.}. Fortunately, this is no concern for our purposes, as we will use Theorem \ref{thm:bound_transition_kernel} to imply correlations between tail properties of the transition kernel and the generalization gap. 

\subsection{Fractal dimensions}

It has been observed by \citet{simsekli2020hausdorff,birdal2021intrinsic} that the \emph{fractal dimension} of the set $W$ is often a good indicator of generalization performance. 
While this previous approach focused on precise covering arguments, here we show that a similar bound to that of \citet[Theorem 2]{simsekli2020hausdorff} can be readily attained using Theorem \ref{thm:NormFL} under weaker assumptions. Indeed, Theorem \ref{thm:NormFL} provides a remarkably straightforward illustration of the relationship between generalization performance and fractal dimension: Assuming there exists a measure $\mu$ on $W$ such that $\mu(B_r(w)) \geq (c r)^{\alpha}$ for any $w \in W$ and $0 < r < \rho$, for some $c > 0$, $\alpha > 0$, then inserting this measure into the definition (\ref{eq:NormFT}) reveals $\gamma_2^\rho(W) = \mathcal{O}(\sqrt{\alpha})$. If a similar upper bound also holds for $\mu$, then most notions of fractal dimension of the set $W$ coincide, and are \emph{precisely equal to} $\alpha$ \citep[Chapter 5]{mattila1999geometry}. In this way, $\alpha$ becomes an \emph{effective dimension}, or \emph{intrinsic dimension}, of $W$ (in particular, if $W \subset \mathbb{R}^D$, then $\alpha \leq D$, the ambient dimension). This idea is formalized in Corollary \ref{cor:Hausdorff}. The result involves the \emph{Hausdorff dimension} of $W$, i.e., $\dimh W \in [0,D]$, which is a generalization of the usual notion of dimension to fractional orders (e.g., $\dimh \mathbb{R}^D = D$), and the \emph{Hausdorff measure} $\mathscr{H}^\alpha$, which is a generalization of the Lebesgue measure. We provide their exact definitions in Appendix \ref{sec:HausdorffApp}.

\begin{corollary}
\label{cor:Hausdorff}
Suppose that $\dimh W = \alpha$ and is $\alpha$-Ahlfors lower regular almost surely, 
that~is, 
\[
C_\rho \coloneqq \inf_{0 < r < \rho, w \in W} \frac{\mathscr{H}^\alpha(W \cap B_r(w))}{r^\alpha \mathscr{H}^\alpha(W)} > 0.
\]
Then the normalized Fernique--Talagrand functional satisfies $\gamma_2^\rho(W) \leq (2 \rho C_\rho)^{-1} \sqrt{\pi \alpha}$ almost surely.
\end{corollary}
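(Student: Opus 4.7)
The plan is to bound the infimum defining $\gamma_2^\rho(W)$ in (\ref{eq:NormFT}) by substituting a single explicit candidate measure, namely the normalized $\alpha$-Hausdorff measure $\mu(\cdot) := \mathscr{H}^\alpha(\cdot \cap W)/\mathscr{H}^\alpha(W)$. Under the hypotheses, $\mu$ is a well-defined probability measure on $W$: positivity and finiteness of $\mathscr{H}^\alpha(W)$ follow from $\dimh W = \alpha$ together with Ahlfors lower regularity, and are implicit in the assumption $C_\rho > 0$. The $\alpha$-Ahlfors lower regularity then yields immediately $\mu(B_r(w)) \geq C_\rho r^\alpha$ for every $w \in W$ and every $0 < r < \rho$.

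Inserting this estimate into (\ref{eq:NormFT}) reduces the problem to a purely one-dimensional integral bound,
\begin{equation*}
\gamma_2^\rho(W) \leq \frac{1}{\rho}\int_{0}^{\rho}\sqrt{\log \frac{1}{C_\rho r^\alpha}}\,\dd r,
\end{equation*}
which I would evaluate by the rescaling $t = C_\rho^{1/\alpha} r$. This absorbs the constant $C_\rho$ into the upper limit and collapses the integrand to $\sqrt{\alpha \log(1/t)}$. Because $\mu(B_r(w))\leq 1$, the argument of the square root is taken to be non-negative, so the effective upper limit of integration is $\min\{1, C_\rho^{1/\alpha}\rho\} \leq 1$. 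A second substitution $u = \sqrt{\log(1/t)}$ then identifies the canonical integral as a Gaussian second moment: $\int_0^1 \sqrt{\log(1/t)}\,\dd t = 2\int_0^\infty u^2 e^{-u^2}\,\dd u = \sqrt{\pi}/2$.

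Threading the constants back through gives $\gamma_2^\rho(W) \leq \sqrt{\pi\alpha}/(2\rho C_\rho^{1/\alpha})$, and the stated bound $(2\rho C_\rho)^{-1}\sqrt{\pi\alpha}$ then follows by noting that $\mu(B_r(w)) \leq 1$ on the integration range forces $C_\rho \leq 1$, so that $C_\rho^{1/\alpha} \geq C_\rho$ (in the regime $\alpha \geq 1$ of interest). Everything past the choice of $\mu$ is a short calculus exercise, so there is no substantive obstacle; the main care required is the routine measure-theoretic verification that $\mu$ is well-defined and the truncation of the range where the bound $\mu(B_r(w)) \geq C_\rho r^\alpha$ would otherwise be vacuous.
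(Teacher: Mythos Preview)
Your approach is essentially identical to the paper's: both plug the normalized $\alpha$-Hausdorff measure into (\ref{eq:NormFT}) and reduce to the Gaussian integral $\int_0^1 \sqrt{\log(1/t)}\,\dd t = \sqrt{\pi}/2$. The only difference is in bookkeeping the constant: the paper's proof writes the lower bound as $\mu(B_r(w)) \geq (C_\rho r)^\alpha$ and substitutes $s = C_\rho r$, landing directly on $C_\rho^{-1}$, whereas your substitution $t = C_\rho^{1/\alpha} r$ (which is the one faithful to the definition of $C_\rho$ as stated) yields the factor $C_\rho^{-1/\alpha}$, which you then relax to $C_\rho^{-1}$ under the extra hypothesis $\alpha \geq 1$. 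Your observation is correct---the exact constant in the stated bound matches the paper's proof only because of that slip between $C_\rho r^\alpha$ and $(C_\rho r)^\alpha$; your version is the careful one and in fact gives a sharper inequality when $\alpha > 1$.
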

Note that the Ahlfors regularity assumption (i.e., $C_\rho > 0$) is contained in \citet[Assumption H4]{simsekli2020hausdorff}, but is a far simpler condition on its own, and does not require a Feller process construction. Furthermore, we improve on \citet[Theorem 2]{simsekli2020hausdorff}, which suggests a rate $\mathcal{O}(\log n /\sqrt{n})$, whereas we obtain a rate of $\mathcal{O}(1/\sqrt{n})$. 

Using a continuous-time Markov model, a precise link between Corollary \ref{cor:Hausdorff} and stochastic optimization can be made: suppose that $W_t$, $t \in [0,1]$, is a continuous-time Markov process with transition kernel $P_t(x,E)$, e.g., a continuous-time model of a stochastic optimizer \cite{orvieto2018continuous}.  %
If $W_t$ is spatially homogeneous\footnote{There exists $K$, $C$, $r_0 >0$ such that $P_t(x, B_{r_{0}}\left(x)\right) \geq K$ for all $t$, $x$, and
$\frac1{C}P_t(0, B_r(0)) \leq P_t(x, B_r(x)) \leq C P_t(0, B_r(0))$ for all $t$, $x$, $r \leq r_{0} $.}, 
and $\{W_t\}_{t\in[0,1]}$ is $\alpha$-Ahlfors lower-regular almost surely, then by \citet[Theorem 4.2]{xiao2003random}, Corollary \ref{cor:Hausdorff} applies with $\dimh \{W_t\}_{t\in [0,1]} = \alpha$ almost surely, where
\begin{equation}
\label{eq:XiaoExponent}
\small\alpha = \sup\left\{\gamma \geq 0 : \lim_{r\to 0^+} \hspace{-2pt} \frac1{r^{\gamma}} \int_0^1 \hspace{-3pt} P_t(0, B_r(0)) \dd t < \infty\right\}.
\end{equation}
Similar results also apply for spatially inhomogeneous Feller processes \citep{schilling1998feller}. %
Therefore, Corollary \ref{cor:Hausdorff} supports the claim that fractal dimensions (of the trajectory of a continuous Markov model of a stochastic optimizer) can be an effective measure of generalization performance. This is the strategy proposed by \citet{simsekli2020hausdorff}. However, in reality, optimization procedures are not continuous-time and optimizer trajectories are finite sets. Since all fractal dimensions are identically zero on finite sets, an alternative approach is required.%

\subsection{Lower tail behavior around local minima}
\label{sec:LowerTail}

Fortunately, we can obtain a discrete-time analogue of Corollary \ref{cor:Hausdorff} using Theorem \ref{thm:bound_transition_kernel}. 
There is one significant caveat however: while Theorem \ref{thm:bound_transition_kernel} successfully relates the dynamics of a Markov stochastic optimizer to generalization performance, the bound is ineffective when the increments $W_{k+1}-W_k$ are not uniformly stochastically bounded. %
Indeed, the bound in Theorem \ref{thm:bound_transition_kernel} is most tight when the optimizer exhibits random walk behavior without drift. This behavior is unlikely to occur at the global scale. However, \emph{locally}, in the neighborhood of a local minimum, %
a stochastic optimizer should exhibit minimal drift. Furthermore, in practice, the behavior around a minimum is typically of greatest interest. Recall that the first objective of a stochastic optimizer is to reach and then occupy some central region $\Omega$ around a local minimum $x^\ast \in \Omega$ with high probability. We let $\zeta_m$ denote the probability of remaining in this region after $m$ steps. Within this region, we assume that the optimizer behaves like a random walk $\bar{W}_{k+1} = \bar{W}_k + Z_k$, where each $Z_1,\dots,Z_k \iidsim \mu$ is independent and identically distributed. To incorporate these observations into a bound, we can appeal to approximation in total variation $d_{\text{TV}}$. Under these conditions, the assumption of bounded $\ell$ is no longer restrictive, as $\ell$ can be assumed to be only \emph{locally} bounded. 

With this in mind, we develop a discrete time analogue of Corollary \ref{cor:Hausdorff}, closing an open problem connecting tail exponents to generalization. %
Drawing inspiration from (\ref{eq:XiaoExponent}), we define a new $\alpha$ such that the kernel $P(x, B_r(x)) \approx c r^{\alpha}$ as $r \to 0^+$. In particular, we can let
\[
\alpha = \lim_{r \to 0^+} \frac{\log P(x,B_r(x))}{\log r}.
\]
This way, $\alpha$ becomes an exponent on the \emph{lower tail} of the transition kernel. Note that, \emph{a priori}, this is \emph{distinct from the (upper) tail exponent} of the distribution considered in \citet{hodgkinson2020multiplicative,simsekli2019tail,simsekli2020hausdorff}, although the two appear to correlate in practice 
(see Section \ref{sec:StableExp}).
\begin{corollary}
\label{cor:BulkExponent}
Let $\Omega \subseteq \mathbb{R}^D$ be a closed set such that $\mathbb{P}(W_k \notin \Omega \text{ for some } k = 0,\dots,m) \leq \zeta_m$ and let $\mu$ be a probability measure on $\mathbb{R}^D$. Suppose that $Z_1,\dots,Z_m \overset{\text{iid}}{\sim} \mu$ and there exists $r_0,\alpha > 0$ such that $\mathbb{P}(\|Z_1+\cdots+Z_k\|\leq r) \geq c_k r^\alpha$ for all $0 < r < r_0$, where $c_k > 0$ are constants for each $k=1,2,\dots$. Letting $P_\Omega$ denote the transition kernel of $W_k$ conditioned on $W_k \in \Omega$ for $k=1,2,\dots,m$,
there is a constant $K > 0$ such that for any $\epsilon > 0$, there exists $\rho_\epsilon > 0$ independent of $\alpha$ where
\begin{multline}
\label{eqn:cor2}
\mathbb{E}\gamma_{2}^{\rho_\epsilon}(\{W_{k}\}_{k=0}^{m})\leq \frac{K}{\rho_\epsilon}\sqrt{\alpha + \epsilon} +\\ \sqrt{\log(m+1)}\left(\zeta_{m}+m\sup_{x\in\Omega}d_{\textnormal{TV}}(P_{\Omega}(x,x+\cdot),\mu)\right).    
\end{multline}
\end{corollary}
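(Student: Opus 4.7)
The plan is to couple $\{W_k\}_{k=0}^m$ to the pure random walk $\bar W_{k+1}=\bar W_k+Z_{k+1}$ (with $\bar W_0=W_0$), reduce $\gamma_2^\rho(\{W_k\})$ to $\gamma_2^\rho(\{\bar W_k\})$ up to a coupling error, apply Theorem \ref{thm:bound_transition_kernel}(1) to the random walk, and finally calibrate $\rho_\epsilon$ to absorb every $\alpha$-independent constant into the prefactor.

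For the coupling, with probability at least $1-\zeta_m$ the chain stays in $\Omega$ throughout the first $m$ steps; on this event the one-step law $P_\Omega(W_k,W_k+\cdot)$ lies within total variation $\epsilon_{\textnormal{TV}}:=\sup_{x\in\Omega}d_{\textnormal{TV}}(P_\Omega(x,x+\cdot),\mu)$ of $\mu$. Combining the step-by-step maximal coupling with a union bound over $k=1,\dots,m$ yields a joint law under which $\{W_k\}_{k=0}^m=\{\bar W_k\}_{k=0}^m$ as sets with probability at least $1-\zeta_m-m\epsilon_{\textnormal{TV}}$. On the complementary event I would apply the trivial bound $\gamma_2^\rho(\{W_k\}_{k=0}^m)\leq\sqrt{\log(m+1)}$, obtained by inserting the uniform measure on the iterates into \eqref{eq:NormFT}. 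Putting the two pieces together gives
$$\mathbb{E}\gamma_2^\rho(\{W_k\}_{k=0}^m)\leq \mathbb{E}\gamma_2^\rho(\{\bar W_k\}_{k=0}^m)+\sqrt{\log(m+1)}\bigl(\zeta_m+m\epsilon_{\textnormal{TV}}\bigr).$$

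Theorem \ref{thm:bound_transition_kernel}(1) applied to $\{\bar W_k\}$ then gives $\mathbb{E}\gamma_2^\rho(\{\bar W_k\})\leq K\,\mathcal{I}_\rho[\bar P_m]$, with
$$\bar P_m(x,B_r(x))=\frac{1}{m}\sum_{k=1}^m\mathbb{P}(\|Z_1+\cdots+Z_k\|\leq r)\geq \bar c_m\, r^\alpha \quad\text{for } 0<r<r_0,$$
where $\bar c_m:=m^{-1}\sum_{k=1}^m c_k>0$. Setting $C_0:=(D+2)\log 3-\log\bar c_m$, this bounds $\mathcal{I}_\rho[\bar P_m]\leq \rho^{-1}\int_0^\rho\sqrt{\alpha\log(1/r)+C_0}\,dr$ for $\rho\leq r_0$. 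Choose $\rho_\epsilon:=\min(r_0,e^{-C_0/\epsilon})$, which depends on $D,\bar c_m,r_0,\epsilon$ but not on $\alpha$; for $r\leq\rho_\epsilon$ the inequality $C_0\leq\epsilon\log(1/r)$ holds, so the integrand is at most $\sqrt{(\alpha+\epsilon)\log(1/r)}$. A direct evaluation (substitute $s=\log(1/r)$ and use the incomplete-gamma asymptotics) shows $\rho_\epsilon^{-1}\int_0^{\rho_\epsilon}\sqrt{\log(1/r)}\,dr=O(\sqrt{\log(1/\rho_\epsilon)})\leq 1/\rho_\epsilon$ for $\rho_\epsilon\leq 1$, giving $\mathcal{I}_{\rho_\epsilon}[\bar P_m]\leq K'\sqrt{\alpha+\epsilon}/\rho_\epsilon$; combining with the coupling decomposition yields \eqref{eqn:cor2}.

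The main obstacle is the coupling itself: one must realize the iid increments $Z_1,\dots,Z_m$ and the chain $\{W_k\}$ on a single probability space and verify that the per-step total variation errors accumulate additively, despite $P_\Omega$ being a conditional kernel rather than the underlying chain's kernel. This mirrors the standard tensorization argument for $d_{\textnormal{TV}}$ along a Markov chain but requires care where the leaving-$\Omega$ event interacts with the step-wise maximal coupling. The calibration of $\rho_\epsilon$ is then elementary, provided one checks that no hidden $\alpha$-dependence sneaks into $C_0$ or $\bar c_m$.
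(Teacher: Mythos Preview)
Your proposal is correct and follows essentially the same route as the paper: the paper first isolates the coupling step as a separate lemma (Corollary~\ref{cor:FTIID}), bounding $|\mathbb{E}\gamma_2^\rho(\{W_k\})-\mathbb{E}\gamma_2^\rho(\{\bar W_k\})|\leq\sqrt{\log(m+1)}\,d_{\textnormal{TV}}(\{W_k\},\{\bar W_k\})$ via the trivial bound $\gamma_2^\rho\leq\sqrt{\log(m+1)}$ and then tensorizing TV along the chain exactly as you sketch, and it chooses the identical $\rho_\epsilon=\min\{1,r_0,(\bar c_m/3^{D+2})^{1/\epsilon}\}=\min\{1,r_0,e^{-C_0/\epsilon}\}$. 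The only cosmetic difference is in the final integral: rather than invoking incomplete-gamma asymptotics, the paper simply uses $\int_0^{\rho_\epsilon}\sqrt{\log(1/r)}\,\dd r\leq\int_0^1\sqrt{\log(1/r)}\,\dd r=\sqrt{\pi}/2$ (valid since $\rho_\epsilon\leq 1$), which already yields the $K/\rho_\epsilon$ prefactor without further work.
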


The second line of (\ref{eqn:cor2}) concerns only the quality of the random walk approximation, and can mostly be ignored if we expect such an approximation to be accurate (e.g., the example below). What remains is an implied correlation between the expected normalized FT functional (itself linked to generalization through Theorem \ref{thm:NormFL}) and the lower tail exponent $\alpha$. This can be seen in Figure \ref{fig:BrownLevy}, where a L\'{e}vy process with lower tail exponent $\alpha = 1.5$ is compared to Brownian motion with exponent $\alpha = 2$. The reduced tail exponent coincides with a corresponding reduction in $\gamma_2^1$.

\begin{example*}[Perturbed Gradient Descent]
Arguably the most common %
discrete-time model for a stochastic optimizer is the perturbed gradient descent (GD) model, which satisfies
$W_{k+1} = W_k - \gamma (\nabla f(W_k) + Z_k)$, where $Z_k$ is a Gaussian vector with zero mean and constant covariance matrix $\Sigma$ representing noise in the stochastic gradient. In the neighborhood of a local optimum $w^\ast$, $\nabla f(W_k) \approx 0$, and hence the perturbed GD model resembles a Gaussian random walk $W_{k+1} = W_k + \gamma Z_k$. Here, the exponent $\alpha$ is \emph{precisely} the ambient dimension, i.e., $\alpha = D$ 
(see Appendix \ref{sec:GrowthExp}). 
However, we will find this exponent to be much less than the ambient dimension for an actual optimization path, suggesting an interpretation of $\alpha$ as a measure of \emph{effective dimension} for the purposes of generalization.
\end{example*}

\section{Empirical results}
\label{sec:Experiments}

\ifdefined\nips
\vspace{-5pt}
\fi

\subsection{Lower tail exponents of the transition kernel}

For our experiments, we consider three architectures and two standard image classification datasets. 
In particular, we consider (i) a fully connected model with $5$ layers (FCN5), (ii) a fully connected model with $7$ layers (FCN7), and (iii) a convolutional model with $9$ layers (CNN9); and two datasets (i) MNIST and (ii) CIFAR10. All models use the ReLU activation function and all are trained with constant step-size SGD, without weight-decay or momentum. Our code is implemented in \texttt{PyTorch} and executed on $5$ GeForce GTX 1080 GPUs. For each architecture, we trained the networks with different step-sizes and batch-sizes, where we varied the step-size in the range $[0.002, 0.35]$ and the batch-size in the set $\{50,100\}$.
We trained all models until training accuracy reaches exactly $100\%$. 
For measuring training and test accuracies, we use standard training-test splits.

\ifdefined\nips
\begin{wrapfigure}{r}{0.675\textwidth}
\centering
\includegraphics[width=0.22\textwidth]{figures/5_00_fcn_mnist_pl.png}
\includegraphics[width=0.22\textwidth]{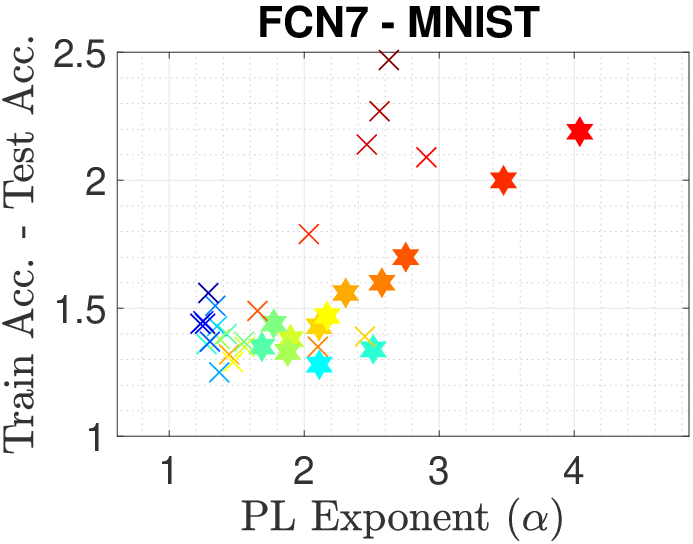}
\includegraphics[width=0.22\textwidth]{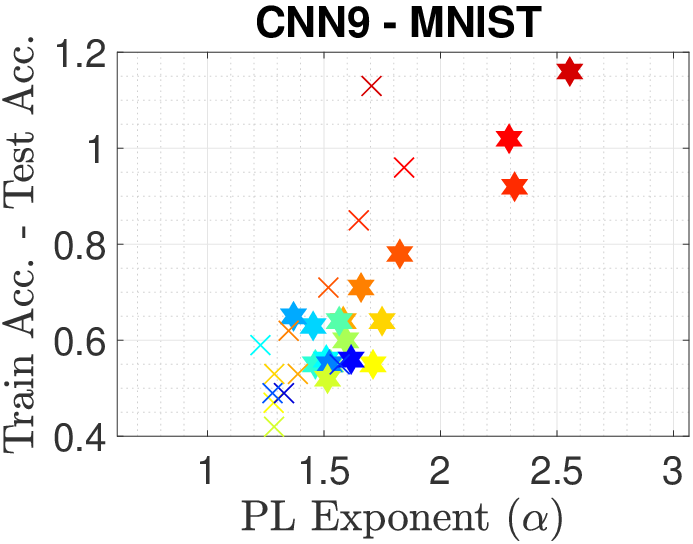}\\
\includegraphics[width=0.22\textwidth]{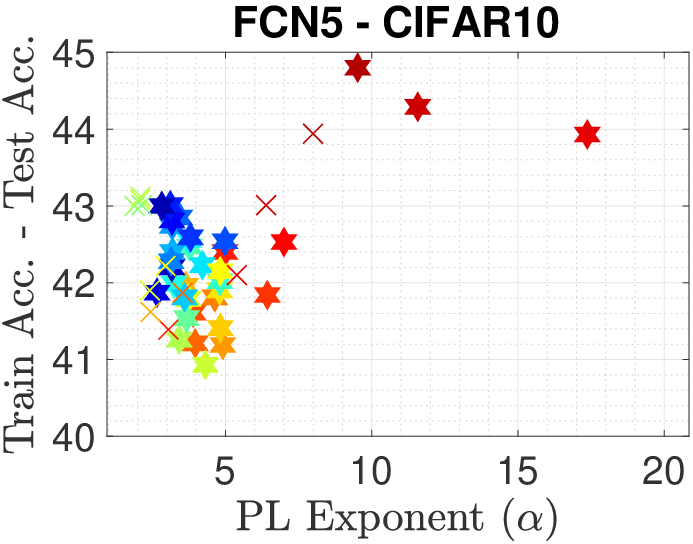}
\includegraphics[width=0.22\textwidth]{figures/7_00_fcn_cifar10_pl.png}
\includegraphics[width=0.22\textwidth]{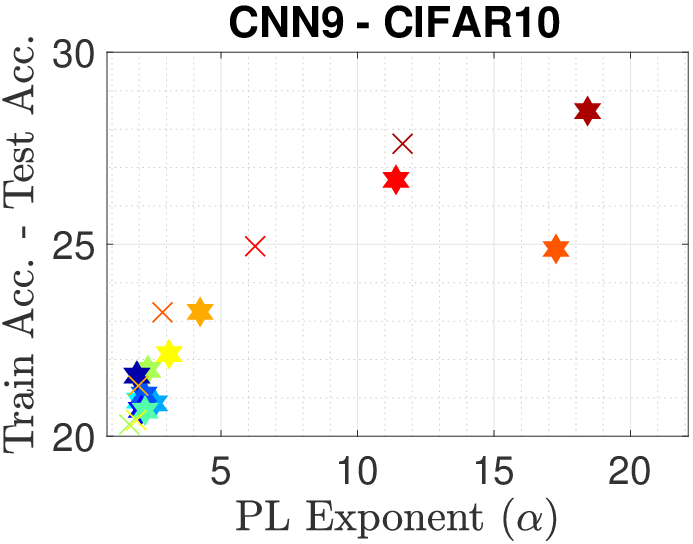}
\caption{Lower tail exponents versus generalization gap. Different colors represent different step-sizes and different markers represent different batch-sizes.
\label{fig:exps_pl}
}
\vspace{-10pt}
\end{wrapfigure}
\else
\begin{figure}
\centering
\includegraphics[width=0.23\textwidth]{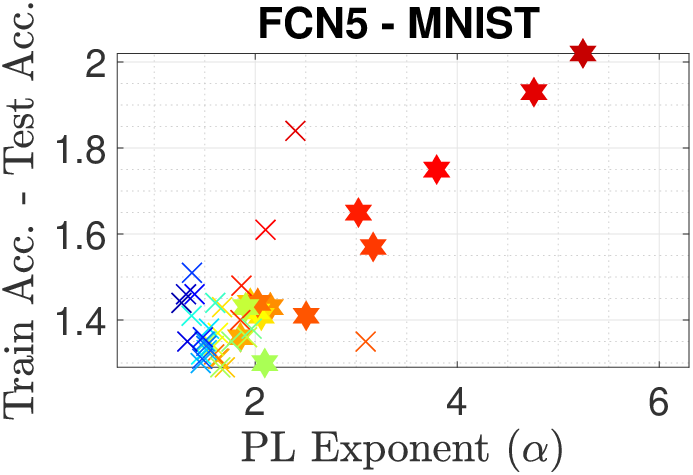}
\includegraphics[width=0.23\textwidth]{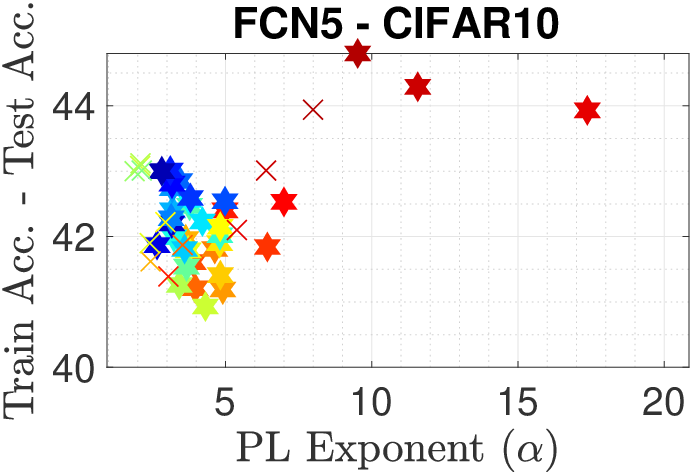}
\includegraphics[width=0.23\textwidth]{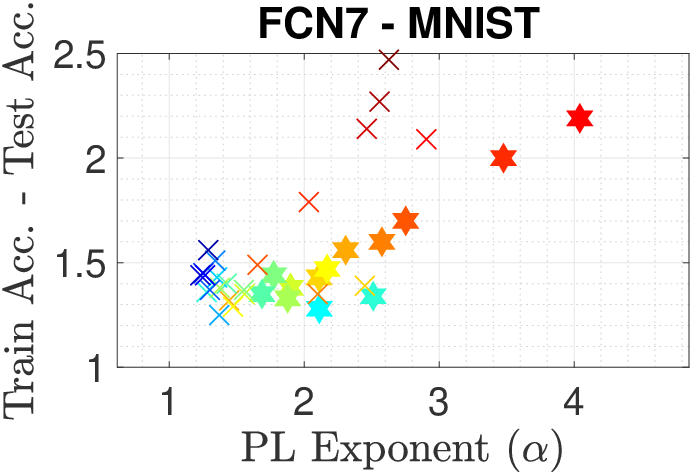} 
\includegraphics[width=0.23\textwidth]{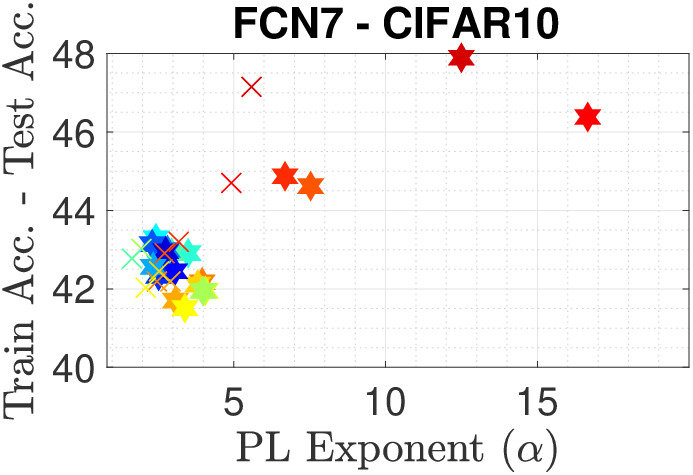}
\includegraphics[width=0.23\textwidth]{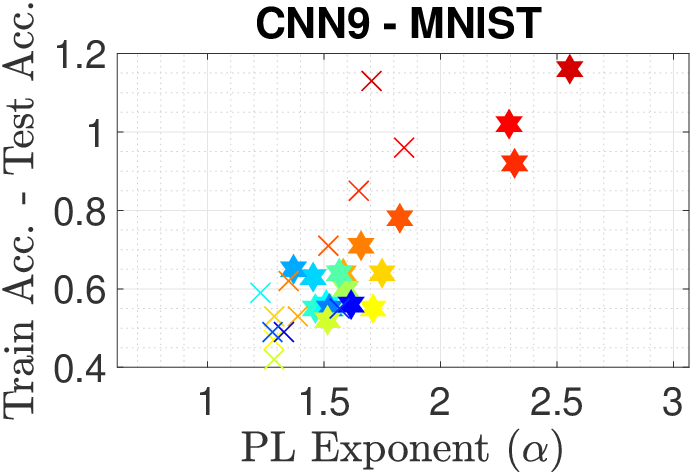}
\includegraphics[width=0.23\textwidth]{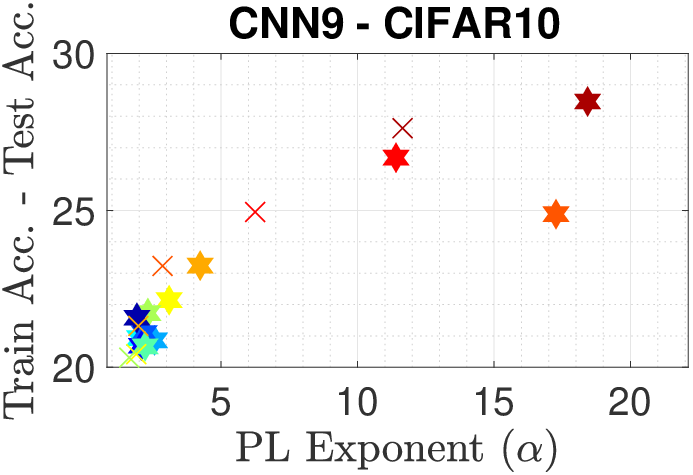}
\caption{Lower tail exponents versus generalization gap. Different colors represent different step-sizes and different markers represent different batch-sizes.
\label{fig:exps_pl}
}
\vspace{-.5cm}
\end{figure}
\fi

To estimate $\alpha$, once training accuracy reaches 100\%, we further run the algorithm for $m=200$ additional iterations to obtain a trajectory $\{W_k\}_{k=1}^{m}$. Following Corollary~\ref{cor:BulkExponent}, we assume \emph{local homogeneity}, that is,
the trajectory $\{W_k\}_{k=1}^{m}$ remains near the local minimum %
and each $W_{k+1}-W_k$ is approximately iid.
Under this assumption, the second term in \eqref{eqn:cor2} can be ignored, hence, we compute the sequence $1/\|W_{k+1}-W_k\|$ for $k=1,\dots,m-1$, and then fit a power-law (PL) distribution to this one dimensional set of observations, by using the $\mathtt{powerlaw}$ toolbox   \citep{clauset2009power}. Figure~\ref{fig:exps_pl} visualizes the results. In all configurations, we observe that the results are in  accordance with our theory: the estimated lower tail exponents $\alpha$ and the generalization gap exhibit significant correlations. 
This trend is even clearer for the CNN9 model, suggesting the geometry induced by the convolutional architecture results in a transition kernel for which the local homogeneity condition becomes more accurate, i.e., the TV term in Corollary~\ref{cor:BulkExponent} becomes smaller.
Furthermore, for the significance of correlations in Figure~\ref{fig:exps_pl}, we compute the $p$-values under a linear model. The results show that the PL exponent and generalization error are always positively correlated and this correlation is significant: the $p$-value ranges from $10^{-12}$ to $10^{-4}$.

\ifdefined\nips
\begin{figure}[t] %
\centering
\includegraphics[width=0.161\textwidth]{figures/5_00_fcn_mnist_pl_etab.png}
\includegraphics[width=0.161\textwidth]{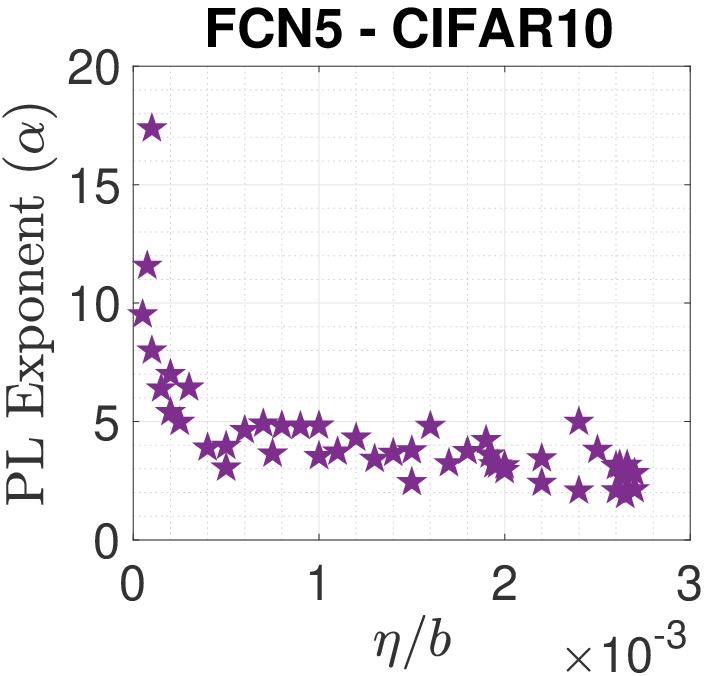}
\includegraphics[width=0.161\textwidth]{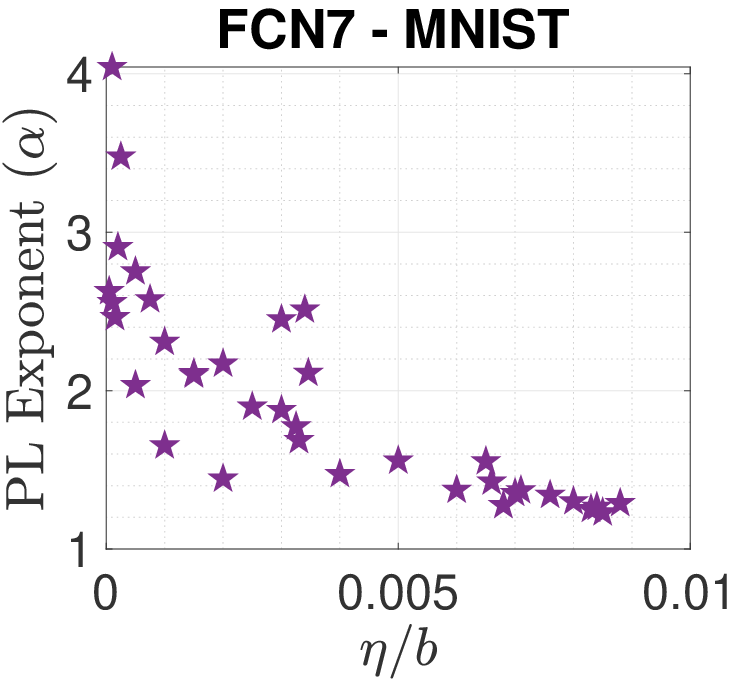}
\includegraphics[width=0.161\textwidth]{figures/7_00_fcn_cifar10_pl_etab.png}
\includegraphics[width=0.161\textwidth]{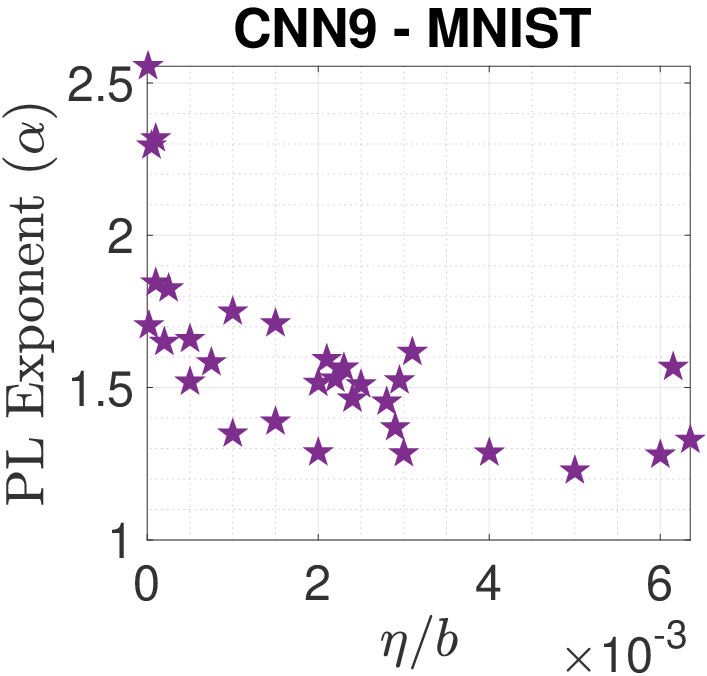}
\includegraphics[width=0.161\textwidth]{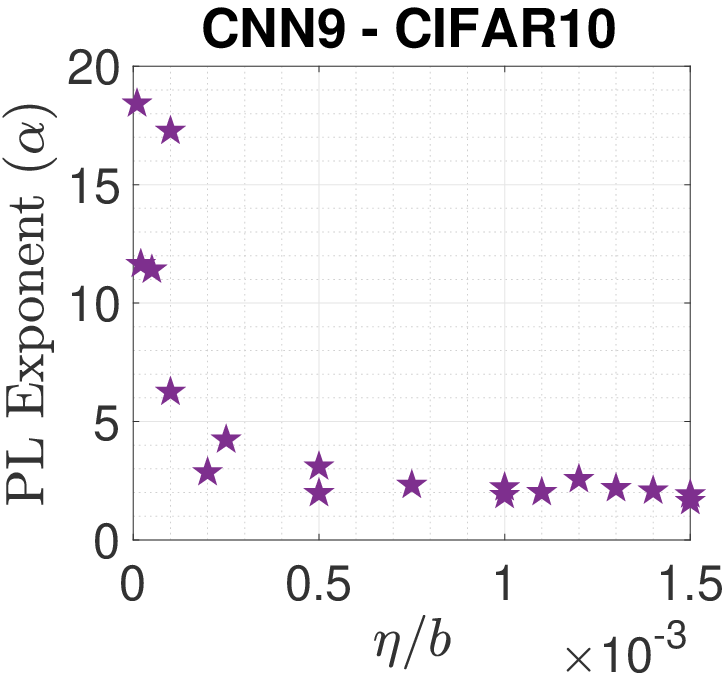}
\vspace{-10pt}
\caption{Lower tail exponents versus step-size/batch-size ratio ($\eta/b$).\label{fig:exps_pl_etab}}

\vspace{-10pt}
\end{figure}
\else
\begin{figure}[t] %
\centering
\includegraphics[width=0.23\textwidth]{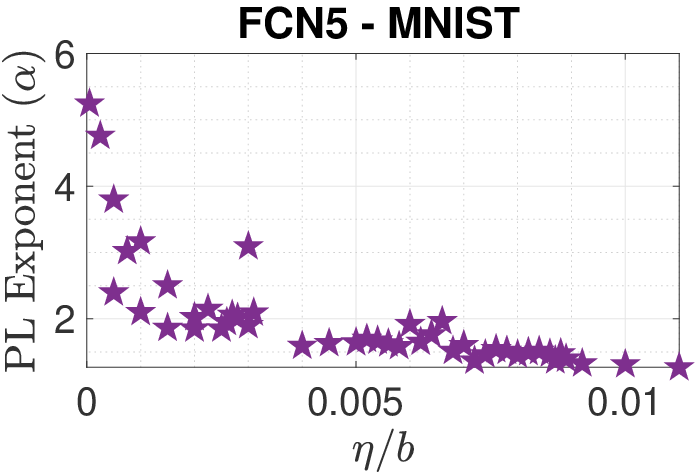}
\includegraphics[width=0.23\textwidth]{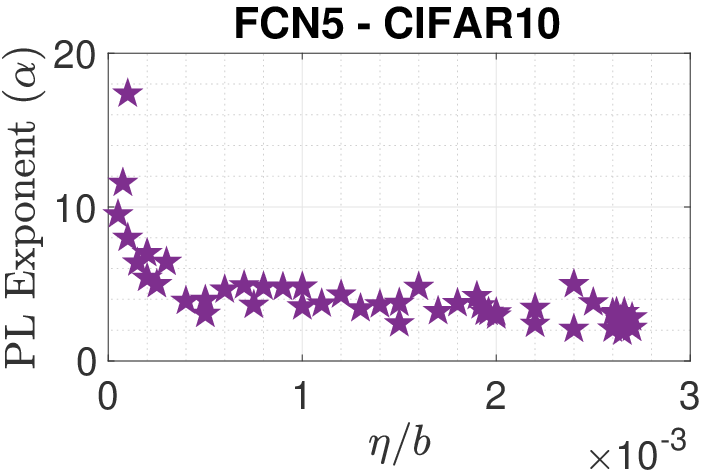}\\
\includegraphics[width=0.243\textwidth]{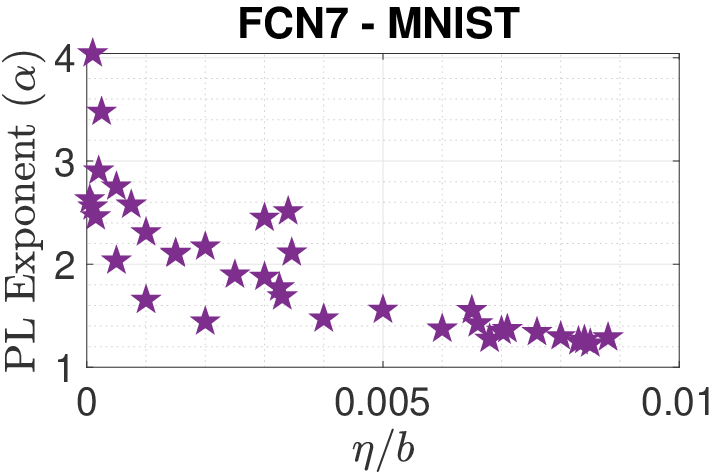}
\includegraphics[width=0.23\textwidth]{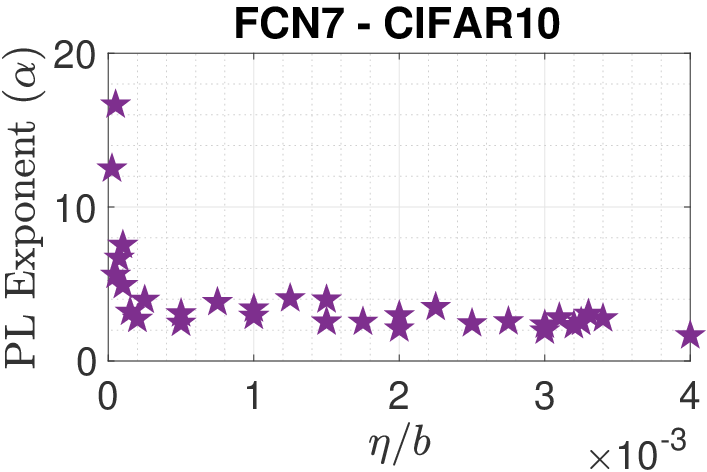}\\
\includegraphics[width=0.23\textwidth]{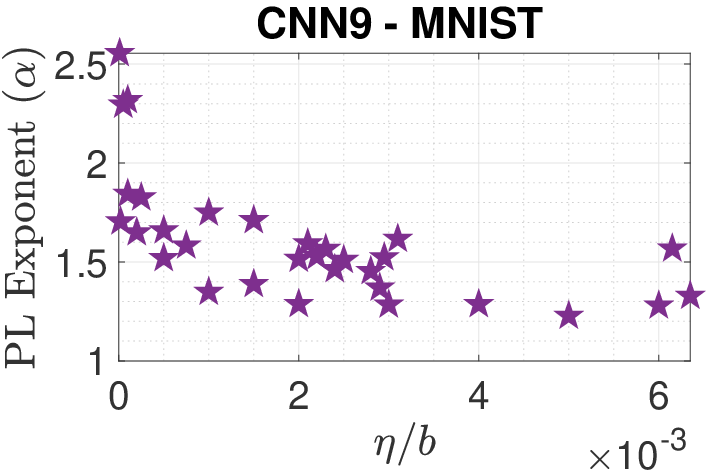} 
\includegraphics[width=0.23\textwidth]{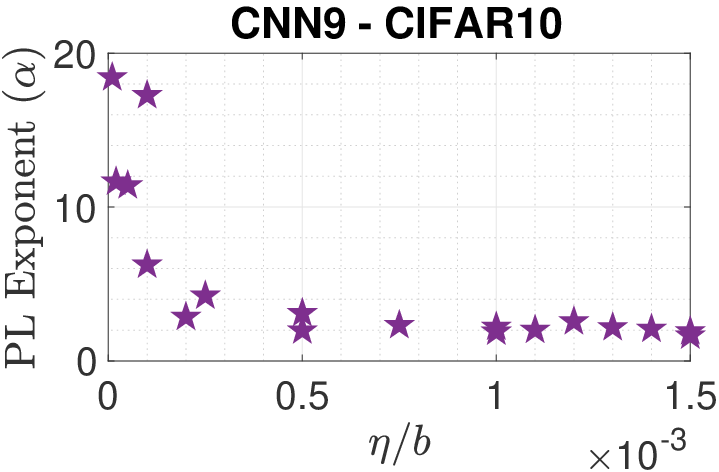}
\caption{Lower tail exponents versus step-size/batch-size ratio ($\eta/b$). We apply the same significance test as in Figure~\ref{fig:exps_pl}: the $p$-value ranges from  $10^{-9}$ to $10^{-3}$. \label{fig:exps_pl_etab}}
\vspace{-.5cm}
\end{figure}
\fi

It has been empirically demonstrated that the generalization performance can depend on the ratio of the step-size to the batch-size $\eta/b$ \citep{jastrzkebski2017three}.
Next, we monitor the behavior of the lower tail exponent $\alpha$ with respect to the SGD hyperparameters: step-size and batch-size.
Figure~\ref{fig:exps_pl_etab} shows the results. We observe that the local power-law behavior of SGD near the found local minimum also heavily depends on the ratio $\eta/b$, where we observe a clear monotonicity.
This reveals an interesting behavior that the hyperparameters $\eta$ and $b$ modify the local lower tail exponent of the transition kernel, hence the effective dimension, which in turn determines the generalization error. 
This outcome also shows interesting similarities with the recent studies of \citet{hodgkinson2020multiplicative, gurbuzbalaban2020heavy} that have shown that in online SGD (one pass regime with infinite data) the ratio $\eta/b$ determines the ``heaviness of the tails'' of the stationary distribution of SGD. Figure~\ref{fig:exps_pl_etab} suggests that in the finite training data setting (where \citet{hodgkinson2020multiplicative,gurbuzbalaban2020heavy} are not applicable), another type of power-law behavior is still observed in the local exponent $\alpha$, which also shows monotonic behavior with respect to $\eta/b$. We suspect that this monotonic behavior can be formally quantified, but we leave this as future work.

We finally note that estimating lower tail exponents accurately is a challenging task. For our estimator, it is known that larger fitted $\alpha$ values, e.g., greater than $\approx 5$, are less reliable. However, since our empirical results involve running the same experiment multiple times for different hyperparameters that are fairly close to each other, the consistency and the clear trends in our results support our theoretical contributions, even if the estimations are to be inexact. 

\subsection{Correlations between types of tail exponents}
\label{sec:StableExp}

Here, we shall discuss the relationship between the lower tail exponent in Corollary \ref{cor:BulkExponent} and the tail exponent seen in \citet{simsekli2019heavy,simsekli2020hausdorff,hodgkinson2020multiplicative,gurbuzbalaban2020heavy}. %
While the two are not related in general, in practice, we expect them to be somewhat correlated, and we justify this through the model considered in \citet{simsekli2019heavy}. 

Building on the perturbed GD model, \citet{simsekli2019tail} replaced the Gaussian updates by \emph{heavy-tailed} $\alpha$-stable distributed random variables, justified through the generalized central limit theorem. In this case, the model satisfies
$W_{k+1} = W_k - \eta (\nabla f(W_k) + S_k(W_k))$, where $S_k(w) = (S_k^i(w))_{i=1}^n$ and each $S_k^i(w)$ is independent symmetric $\alpha$-stable with scale $\sigma(w)$, that is, $S_k^i(w)$ has characteristic function $\varphi(t) = e^{-|\sigma(w) t|^\alpha}$. As $\eta \to 0^+$, this Markov chain behaves similarly to the stochastic differential equation
$\dd W_t = -\nabla f(W_t) \dd t + \eta^{\frac{\alpha-1}{\alpha}} \sigma \dd L_t^\alpha$,
where $L_t^\alpha$ is an $\alpha$-stable L\'{e}vy process (see \cite{simsekli2019heavy} for details). 
If $\nabla f$ is bounded, the process satisfies the following two properties \citep[Lemma~3]{bogdan2007estimates}:
(1) 
$\int_0^1 P_t(x, B_r(x)) \dd t~\sim~c_1 r^{\alpha}$ as $r~\to~0^+$; and (2)
$\int_0^1 P_t(x, B_r(x)) \dd t \sim c_2 r^{-\alpha}\quad \mbox{as } r \to \infty$. 
Therefore, the lower tail exponent in Corollary \ref{cor:BulkExponent} and the tail exponent in \cite{simsekli2019heavy} are identical in this model.

Empirically, we find that while the two exponents are not identical, they do appear to~correlate.
\ifdefined\nips
\begin{figure}[t]
\centering
\includegraphics[width=0.161\textwidth]{figures/5_00_fcn_mnist_pl_tailix.png}
\includegraphics[width=0.161\textwidth]{figures/5_00_fcn_cifar10_pl_tailix.png}
\includegraphics[width=0.161\textwidth]{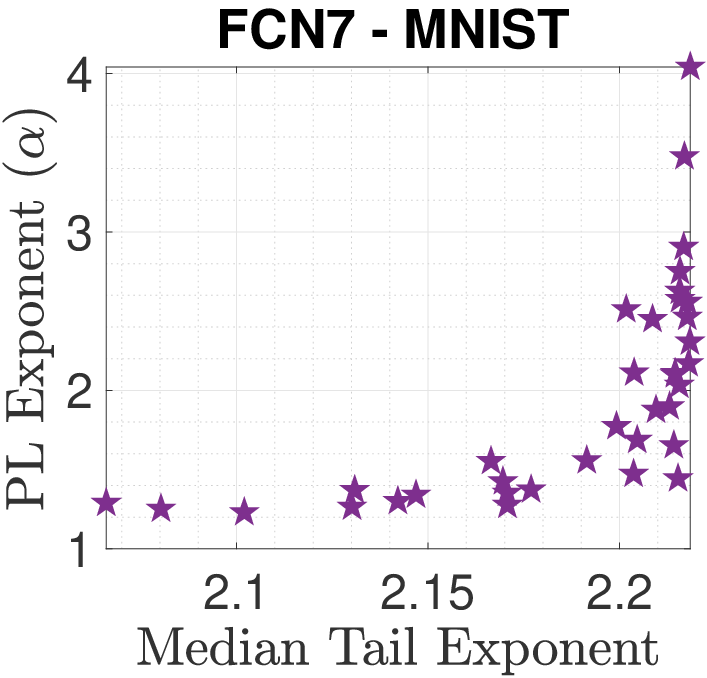}
\includegraphics[width=0.161\textwidth]{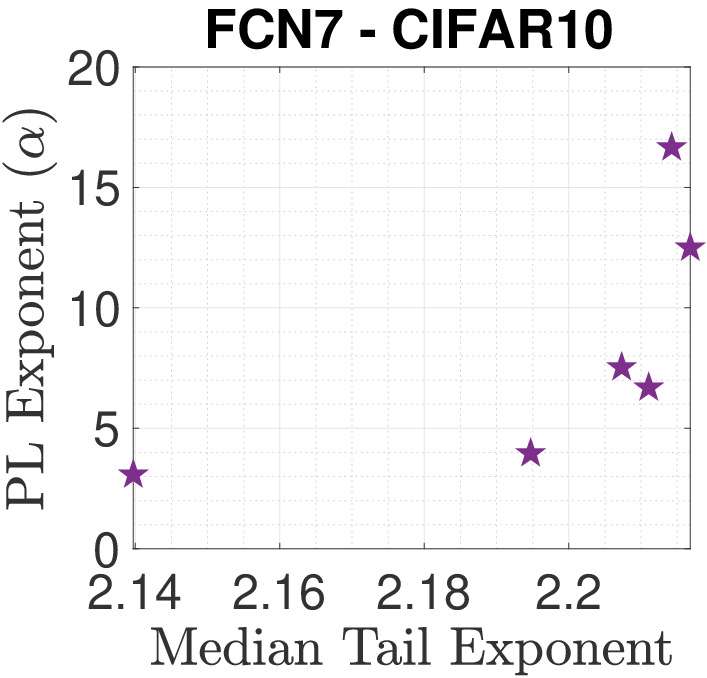}
\includegraphics[width=0.161\textwidth]{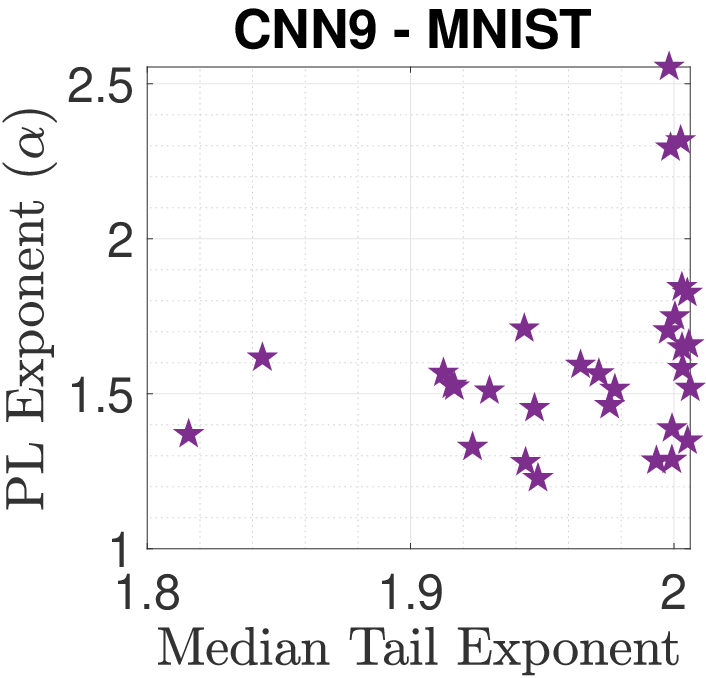}
\includegraphics[width=0.161\textwidth]{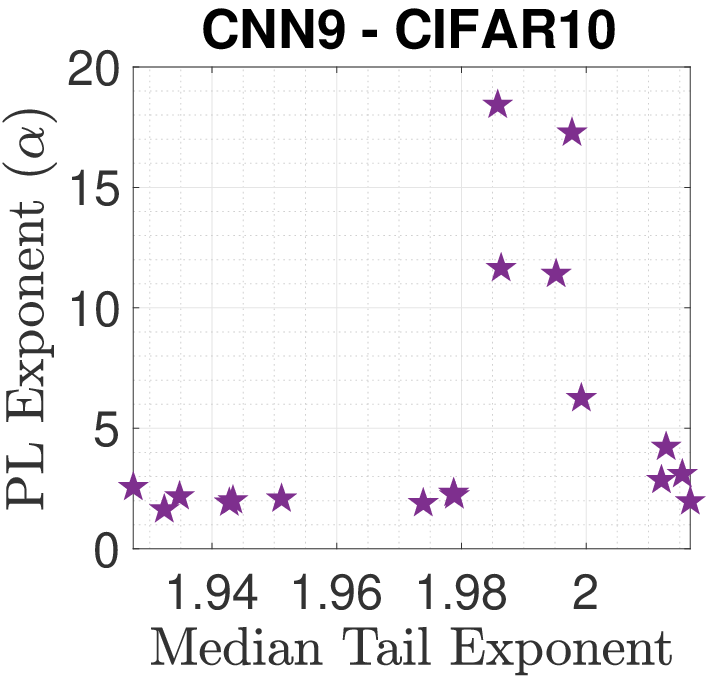}
\vspace{-10pt}
\caption{Lower tail exponents versus the tail-exponent estimates used in \cite{gurbuzbalaban2020heavy}.
\michael{Number figures consistently with main text.}
\michael{Make bigger, split over two lines, and deal with large versus small alpha scales consistently with main text.}
}
\label{fig:exps_pl_tailix}
\end{figure}
\else
\begin{figure}[t]
\centering
\includegraphics[width=0.23\textwidth]{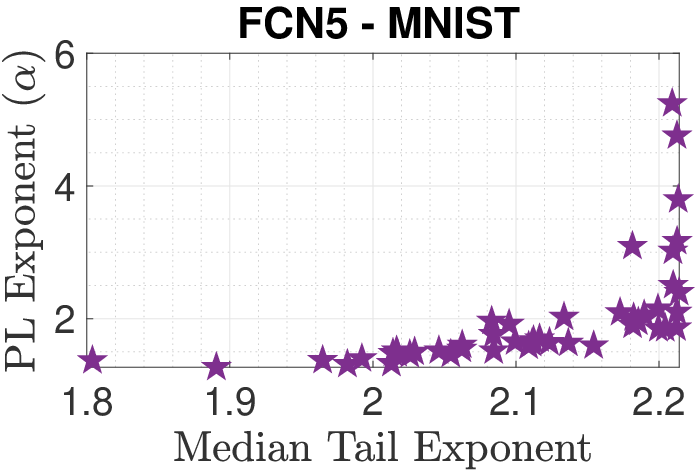}
\includegraphics[width=0.23\textwidth]{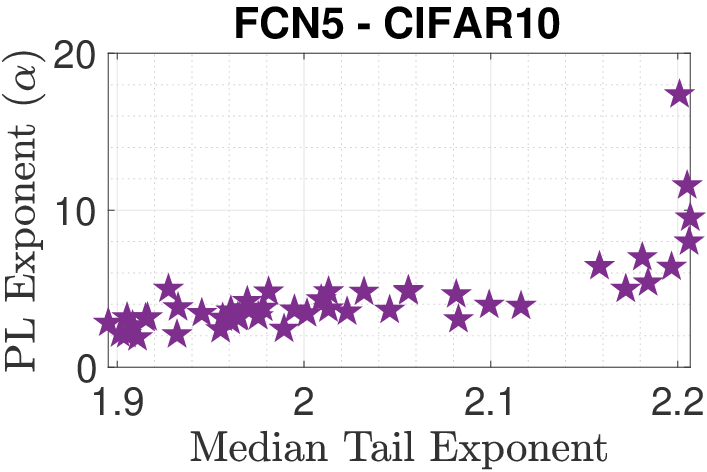}\\
\includegraphics[width=0.23\textwidth]{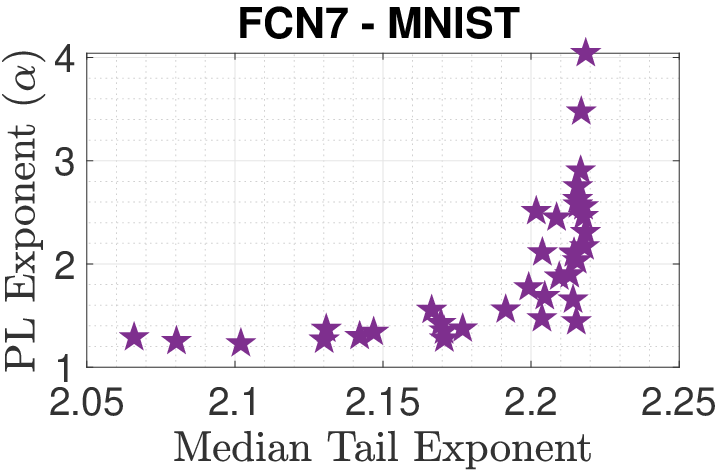} 
\includegraphics[width=0.23\textwidth]{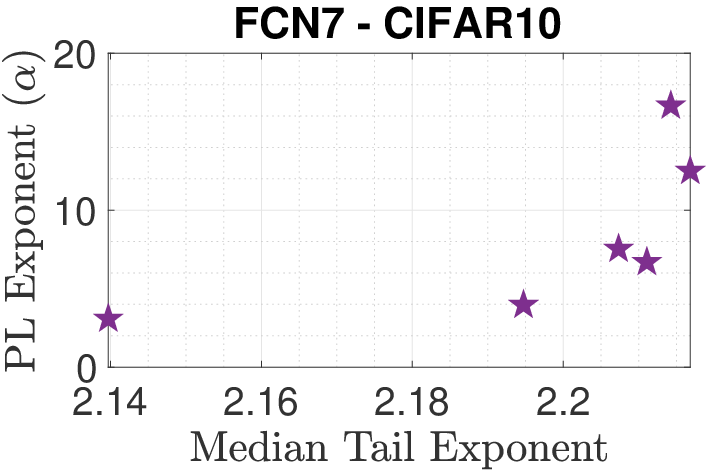}\\
\includegraphics[width=0.23\textwidth]{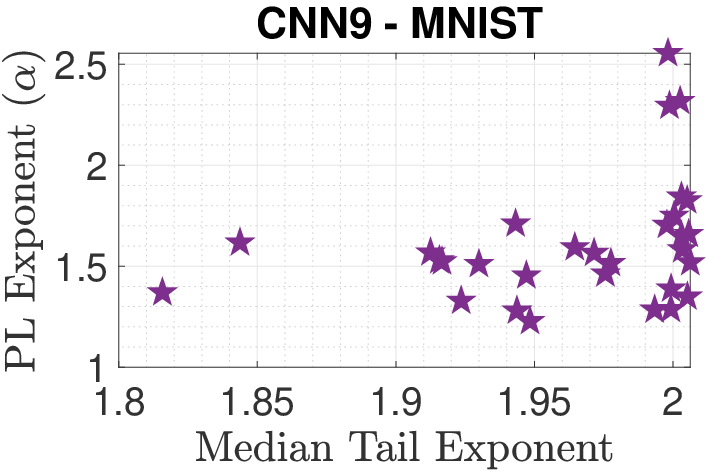}
\includegraphics[width=0.23\textwidth]{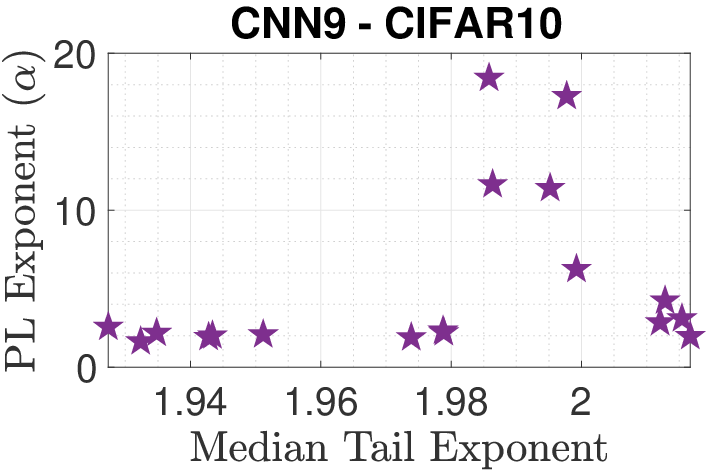}
\caption{Lower tail exponents versus the tail-exponent estimates used in \cite{gurbuzbalaban2020heavy}.}
\label{fig:exps_pl_tailix}
\vspace{-.5cm}
\end{figure}
\fi
In Figure~\ref{fig:exps_pl_tailix}, we follow the setup of \citet{gurbuzbalaban2020heavy} and assume that each layer of the neural network possesses a different tail-exponent, where these layer-wise tail-exponents are computed on averaged iterates under the assumption that they are distributed from an $\alpha$-stable distribution. We use the same tail-exponent estimator \cite{mohammadi2015estimating} as the one used in \citet{gurbuzbalaban2020heavy}.
Once the layer-wise indices are computed, we compute the median tail-exponent over the layers. As we can observe from Figure~\ref{fig:exps_pl_tailix}, our lower tail exponent and the tail-exponent estimate show an overall strong correlation. More precisely, when we estimate the $p$-values under a linear model, we observe that the $p$-value ranges from  $10^{-9}$ to $10^{-1}$, where $10^{-1}$ corresponds to CNN9-CIFAR10 result, where the correlation is not as strong.

These results shed more light on the empirical observations provided in \cite{simsekli2019tail}, as the estimator in \cite{mohammadi2015estimating} might be capturing the lower-tail behavior of the gradient noise, which seems to be the key factor according to our theory.

\ifdefined\nips
\vspace{-10pt}
\section{Conclusion}
\vspace{-10pt}
\else
\section{Conclusion}
\fi
\label{sec:conc}

We have developed a theoretical framework for analyzing the generalization properties of stochastic optimizers by using their trajectories.  We first proved generalization bounds based on the celebrated Fernique--Talagrand functional; and then, by using the Markovian structure of stochastic optimizers, we specialized these results to Markov processes. 
Using these results, we linked the accumulated generalization gap over an optimizer trajectory to a lower tail exponent in the transition kernel via a random walk approximation about a local minimum. This provides a discrete-time analogue of the work of \citet{simsekli2020hausdorff} to more practical stochastic optimizer models. 
Finally, we supported our theory with empirical results on several simple neural network models, finding correlations between the lower tail exponent, generalization gap at the end of training, step-size/batch-size ratio, and upper tail exponents. 

Our analysis raises a few unresolved questions. Firstly, while suggested by the theory, it is untested whether the correlation between generalization gap and the lower tail exponent is universal, or holds only due to correlations between the lower tail exponent and the hyperparameters of the optimizer in our setup (which are known to affect performance). To assess this, one would need to be able to alter the lower tail exponent without changing any other commonly considered hyperparameters (e.g., step-size/batch-size). Furthermore, we have only considered fixed step sizes, and it is unclear whether our theory can be extended to hold under common step size schedules. Finally, while the subgaussian assumption on the data leads to the typical $\mathcal{O}(n^{-1/2})$ rate in Theorem \ref{thm:NormFL}, it is known that this rate is not reflected in real-world settings \cite{kaplan2020scaling}. Different distributional assumptions on the data could yield more accurate rates. We leave these three problems to be addressed in future work.

\section*{Acknowledgments}

We would like to acknowledge DARPA, NSF, and ONR for providing partial support of this work. U.\c{S}.'s research is supported by the French government under management of Agence Nationale de la Recherche as part of the ``Investissements d'avenir'' program, reference ANR-19-P3IA-0001 (PRAIRIE 3IA Institute).

\ifdefined\nips
\section*{Checklist}

\begin{enumerate}

\item For all authors...
\begin{enumerate}
  \item Do the main claims made in the abstract and introduction accurately reflect the paper's contributions and scope?
    \answerYes{}
  \item Did you describe the limitations of your work?
    \answerYes{See Section \ref{sec:conc}.}
  \item Did you discuss any potential negative societal impacts of your work?
    \answerYes{See Section \ref{sec:conc}.}
  \item Have you read the ethics review guidelines and ensured that your paper conforms to them?
    \answerYes{}
\end{enumerate}

\item If you are including theoretical results...
\begin{enumerate}
  \item Did you state the full set of assumptions of all theoretical results?
    \answerYes{}
	\item Did you include complete proofs of all theoretical results?
    \answerYes{}
\end{enumerate}

\item If you ran experiments...
\begin{enumerate}
  \item Did you include the code, data, and instructions needed to reproduce the main experimental results (either in the supplemental material or as a URL)?
    \answerYes{The code is provided in the supplementary material.}
  \item Did you specify all the training details (e.g., data splits, hyperparameters, how they were chosen)?
    \answerYes{See Section \ref{sec:Experiments}}
	\item Did you report error bars (e.g., with respect to the random seed after running experiments multiple times)?
    \answerNA{}
	\item Did you include the total amount of compute and the type of resources used (e.g., type of GPUs, internal cluster, or cloud provider)?
    \answerYes{See Section \ref{sec:Experiments}}
\end{enumerate}

\item If you are using existing assets (e.g., code, data, models) or curating/releasing new assets...
\begin{enumerate}
  \item If your work uses existing assets, did you cite the creators?
    \answerNA{We have only used standard benchmarks.}
  \item Did you mention the license of the assets?
    \answerNA{}
  \item Did you include any new assets either in the supplemental material or as a URL?
    \answerNA{}
  \item Did you discuss whether and how consent was obtained from people whose data you're using/curating?
    \answerNA{}
  \item Did you discuss whether the data you are using/curating contains personally identifiable information or offensive content?
    \answerNA{}
\end{enumerate}

\item If you used crowdsourcing or conducted research with human subjects...
\begin{enumerate}
  \item Did you include the full text of instructions given to participants and screenshots, if applicable?
    \answerNA{}
  \item Did you describe any potential participant risks, with links to Institutional Review Board (IRB) approvals, if applicable?
    \answerNA{}
  \item Did you include the estimated hourly wage paid to participants and the total amount spent on participant compensation?
    \answerNA{}
\end{enumerate}

\end{enumerate}
\fi
\ifdef\includeappendix
{
\newpage

\onecolumn

\appendix

\section{Hausdorff dimension}
\label{sec:HausdorffApp}

Here, we shall review facts about the Hausdorff dimension.
The Hausdorff measure $\mathscr{H}^{m}$ on $\mathbb{R}^{d}$ is defined
on a set $E\subseteq\mathbb{R}^{d}$~by
\[
\mathscr{H}^{m}(E)=\lim_{\delta\to0^{+}}\inf_{\substack{E\subseteq\bigcup_{j}S_{j}\\
\diam(S_{j})\leq\delta
}
}\sum_{j}\alpha(m)2^{-m}\diam(S_{j})^{m},
\]
where the infimum is taken over countable covers $S_{\delta}$ of
$E$ by non-empty subsets of $\mathbb{R}^{d}$ with diameter not exceeding
$\delta$, and $\alpha(m) = \frac{\pi^{n/2}}{\Gamma(n/2+1)}$ is the volume of the unit $m$-sphere. The following facts are fundamental
\ifdefined\nips
\footnote{Herbert Federer. Geometric Measure Theory. Springer, 2014. \S2.10.2, \S2.10.35}
\else
\citep[\S2.10.2, \S2.10.35]{federer2014geometric}
\fi
: (1) if $\mathscr{H}^{m}(E)<\infty$,
then $\mathscr{H}^{l}(E)=0$ for any $l>m$; %
and (2) on $\mathbb{R}^{d}$, $\mathscr{H}^{d}\equiv\mathscr{L}^{d}$
the $d$-dimensional Lebesgue~measure. %

The first fact implies the existence of the \emph{Hausdorff dimension,
}which is defined for a set $E\subseteq\mathbb{R}^{d}$ by $\dimh(E)=\inf\{m:\mathscr{H}^{m}(E)=0\}$.
The second fact implies that any set $E\subset\mathbb{R}^{d}$ with
$\dimh(E)<d$ has zero Lebesgue measure; but, importantly, the converse
is not true. In differential geometry, the Hausdorff dimension is
useful for identifying the dimension of submanifolds. However, it
is also of significant value in the study of fractal sets, providing
a measurement of ``clustering'' in space. One can intuit this from
the definition, but it is perhaps best seen through examples. In Figure~\ref{fig:BrownLevy}, a a L\'{e}vy process with sample paths possessing Hausdorff dimension $1.5$ is compared to Brownian motion, whose sample paths have Hausdorff dimension $2$. The L\'{e}vy process, with the smaller Hausdorff dimension, exhibits dynamics with tighter clusters separated by large jumps.

Ahlfors lower-regularity plays a key role in Corollary \ref{cor:BulkExponent}. A more commonly considered definition is Ahlfors regularity itself. 
\begin{definition}[Ahlfors regular]
A set $W \subset \mathbb{R}^D$ is $\alpha$-Ahlfors regular if there exists a measure $\mu$ on $W$ and $c_1,c_2,r_0 > 0$ such that
\[
c_1 r^\alpha \leq \mu(B_r(w)) \leq c_2 r^\alpha,\qquad \mbox{for all }w \in W,\, 0 < r < r_0.
\]
If only the lower (upper) inequalities are satisfied, $W$ is said to be $\alpha$-Ahlfors lower-regular (upper-regular). 
\end{definition}
Ahlfors regularity is often assumed to equate several notions of fractal dimension, such as in \cite{simsekli2020hausdorff}. For example, the lower and upper Minkowski dimensions of a set $W$ are given by
\[
\underline{\dim}_M(W) = \liminf_{ r \to 0^+} \frac{\log N_r(W)}{|\log r|},\quad \overline{\dim}_M(W) = \liminf_{ r \to 0^+} \frac{\log N_r(W)}{|\log r|},
\]
respectively, where $N_r(W)$ denotes the smallest number of balls of radius $r$ needed to cover $W$. The following holds.
\begin{theorem*}[\cite{mattila1999geometry}, Theorem 5.7]
For any $\alpha$-Ahlfors regular set $W$, $\underline{\dim}_{\rm M}(W) = \overline{\dim}_{\rm M}(W) = \dimh(W) = \alpha$. 
\end{theorem*}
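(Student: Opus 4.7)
The plan is to establish the chain $\alpha \leq \dim_H(W) \leq \underline{\dim}_M(W) \leq \overline{\dim}_M(W) \leq \alpha$, where the middle two inequalities are classical (they hold for every set), while the outer two come from the upper and lower Ahlfors bounds respectively. Without loss of generality I restrict to a bounded piece of $W$ so that $\mu$ is finite on the region under consideration; the unbounded case follows by taking suprema or by localizing to compact subsets, as is standard for Minkowski dimensions. A preliminary observation is that the upper Ahlfors bound $\mu(B_r(w)) \leq c_2 r^\alpha$ together with a Vitali-type covering implies $\mu$ is locally finite.

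For the Hausdorff lower bound $\dim_H(W) \geq \alpha$, I would invoke the mass distribution principle. Given any countable cover $\{S_j\}$ of $W$ with $\mathrm{diam}(S_j) \leq \delta$ (with $\delta < r_0$), I may assume each $S_j$ meets $W$ and pick $w_j \in S_j \cap W$, so that $S_j \subset B_{\mathrm{diam}(S_j)}(w_j)$. The upper Ahlfors bound then gives $\mu(S_j) \leq c_2 \mathrm{diam}(S_j)^\alpha$. Summing and using $\mu(W) \leq \sum_j \mu(S_j)$ yields $\sum_j \mathrm{diam}(S_j)^\alpha \geq \mu(W)/c_2$. Taking the infimum over covers and passing $\delta \to 0^+$ shows $\mathscr{H}^\alpha(W) \geq \alpha(\alpha) 2^{-\alpha} \mu(W)/c_2 > 0$, so $\mathrm{dim}_H(W) \geq \alpha$.

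For the Minkowski upper bound $\overline{\dim}_M(W) \leq \alpha$, I would use a maximal packing argument driven by the lower Ahlfors bound. Fix $r < r_0/2$ and let $\{w_1, \ldots, w_N\}$ be a maximal $r$-separated subset of $W$, i.e., $\|w_i - w_j\| > r$ for $i \neq j$. Maximality gives $W \subseteq \bigcup_{i=1}^N B_r(w_i)$, so the covering number satisfies $N_r(W) \leq N$. The balls $B_{r/2}(w_i)$ are pairwise disjoint, each centered in $W$, so the lower Ahlfors bound gives $\mu(B_{r/2}(w_i)) \geq c_1 (r/2)^\alpha$. Disjointness and local finiteness of $\mu$ therefore force $N \cdot c_1 (r/2)^\alpha \leq \mu(W) < \infty$, whence $N_r(W) \leq C r^{-\alpha}$ for a constant $C$ independent of $r$. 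Taking logarithms and letting $r \to 0^+$ yields $\overline{\dim}_M(W) \leq \alpha$.

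Combining these two extremal bounds with the universal inequality $\dim_H \leq \underline{\dim}_M \leq \overline{\dim}_M$ (which follows because any efficient $\delta$-cover by balls of radius $\delta$ gives a bound on $\mathscr{H}^\alpha$-premeasures via $N_\delta(W) \delta^\alpha$) collapses the chain, yielding equality to $\alpha$ throughout. The main technical obstacle is ensuring that $\mu$ is finite on the relevant bounded neighborhood and that the covering/packing constants are uniform down to scale zero; both are consequences of the Ahlfors bounds themselves once $r_0$ is fixed, so the argument is essentially self-contained given the definitions.
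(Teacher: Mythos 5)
The paper does not actually prove this statement: it is quoted verbatim as Theorem 5.7 of \cite{mattila1999geometry} for background, so there is no in-paper argument to compare against. Your proof is the standard self-contained one and it is correct: the upper Ahlfors bound plus the mass distribution principle gives $\mathscr{H}^\alpha(W)\geq \alpha(\alpha)2^{-\alpha}\mu(W)/c_2>0$ and hence $\dimh(W)\geq\alpha$ (positivity of $\mu(W)$ follows from the lower bound applied to any single ball, which you use implicitly and could state); the lower Ahlfors bound plus a maximal $r$-separated packing gives $N_r(W)\leq \mu(W)c_1^{-1}(r/2)^{-\alpha}$ and hence $\overline{\dim}_{\mathrm{M}}(W)\leq\alpha$; and the universal chain $\dimh\leq\underline{\dim}_{\mathrm{M}}\leq\overline{\dim}_{\mathrm{M}}$ closes the loop. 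Two small points of hygiene: finiteness of $\mu(W)$ (needed for the packing count, and to know the maximal separated set is finite) follows from the upper Ahlfors bound only after covering $W$ by finitely many balls of radius $<r_0$, i.e.\ it uses boundedness of $W$; this is not really a ``WLOG'' for the Minkowski part, but it is harmless because box dimensions are only meaningful for totally bounded sets anyway, and in the paper's application $W$ is a finite-time optimizer trajectory. A nice by-product of your decomposition, worth noting in the paper's context, is that it makes explicit which half of Ahlfors regularity does what: lower regularity alone yields the covering bound $N_r(W)\lesssim r^{-\alpha}$ (and is all that Corollary \ref{cor:Hausdorff} needs to control $\gamma_2^\rho$), while upper regularity is only required to force $\dimh(W)\geq\alpha$ and thus the equality of the dimensions.
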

To link Hausdorff dimension to continuous-time optimization, we rely on \citet[Theorem 4.2]{xiao2003random}, restated below. 
\begin{theorem*}[\cite{xiao2003random}, Theorem 4.2]
Let $X_t$ be a continuous-time Markov process in $\mathbb{R}^D$ with transition kernel $P_t(x,\dd y)$ satisfying $P_t(x,B_r(x)) \geq K$ for sufficiently large $r > 0$ and $c^{-1} P_t(0,B_r(0)) \leq P_t(x,B_r(x)) \leq c P_t(0,B_r(0))$ for all $r > 0$ and some $c > 0$. Then the Hausdorff dimension of $\{X_t\}_{t\in[0,1]}$ is 
\begin{align*}
\alpha &= \sup\left\{\gamma \geq 0 \,:\, \lim_{r \to 0^+} r^{-\gamma} \int_0^1 P_t(0,B_r(0)) \dd t < +\infty\right\}\\
&= \sup\left\{\gamma \geq 0 \,:\, \int_0^1 \mathbb{E}(|X_t|^{-\gamma}) \dd t < +\infty\right\}.
\end{align*}
\end{theorem*}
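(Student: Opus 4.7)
The plan is to establish the Hausdorff dimension identity in two halves, showing $\dimh\{X_t\}_{t\in[0,1]} \leq \alpha$ and $\dimh\{X_t\}_{t\in[0,1]} \geq \alpha$ almost surely, and then verify that the two candidate expressions for $\alpha$ coincide. Throughout, I write $q(r) := \int_0^1 P_t(0, B_r(0))\,\dd t$. The equivalence of the two suprema follows quickly from a layer-cake identity: since $|X_t|^{-\gamma} = \gamma\int_0^\infty r^{-\gamma - 1}\mathbf{1}_{\{|X_t|\leq r\}}\,\dd r$, Fubini gives
\[
\int_0^1 \mathbb{E}|X_t|^{-\gamma}\,\dd t = \gamma \int_0^\infty r^{-\gamma-1} q(r)\,\dd r.
\]
Finiteness of the right-hand side is controlled at $r \to \infty$ by $P_t(x,B_r(x))\geq K$ (so the tail is $\mathcal{O}(r^{-\gamma-1})$, integrable) and at $r \to 0^+$ by the power of $r$ at which $q(r)$ vanishes. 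Matching the integrability thresholds yields equality of the two suprema.

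For the upper bound $\dimh\leq\alpha$, I would fix any $\gamma > \alpha$ and build an efficient random cover of the range by balls of radius $r = 2^{-n}$. Partition $[0,1]$ into consecutive sub-intervals whose length is chosen proportional to $q(r)$ (the characteristic sojourn time at scale $r$), and on each sub-interval let the covering ball be centered at the endpoint $X_{t_k}$ with radius doubled. Using the Markov property at $t_k$ together with the uniform two-sided estimate $P_t(x,B_r(x))\asymp P_t(0,B_r(0))$, the probability that $X$ exits $B_r(X_{t_k})$ during a sub-interval is bounded below, so a standard Chebyshev/occupation argument (as in \citet{liu1998hausdorff}) controls the expected number of balls by $1/q(r)$ times a logarithmic factor. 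The resulting expected $\gamma$-weighted sum is $\mathcal{O}(r^\gamma / q(r))$, which tends to $0$ for $\gamma>\alpha$ along a subsequence; thus $\mathscr{H}^\gamma(\{X_t\}_{t\in[0,1]})=0$ a.s., giving $\dimh\leq\alpha$.

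For the lower bound $\dimh \geq \alpha$, I would appeal to Frostman's energy method. Consider the random occupation measure $\mu(A) = \int_0^1 \mathbf{1}_A(X_t)\,\dd t$, which is almost surely supported on $\{X_t\}_{t\in[0,1]}$. Its $\gamma$-energy equals $E_\gamma(\mu) = \int_0^1\int_0^1 |X_s - X_t|^{-\gamma}\,\dd s\,\dd t$, and by the Markov property applied at $s\wedge t$ combined with the spatial homogeneity $P_u(x,B_r(x))\leq c\,P_u(0,B_r(0))$, a further layer-cake identity gives
\[
\mathbb{E}|X_s-X_t|^{-\gamma} \leq c\,\mathbb{E}_0|X_{|t-s|}|^{-\gamma},
\]
so $\mathbb{E} E_\gamma(\mu) \leq 2c \int_0^1 \mathbb{E}_0|X_u|^{-\gamma}\,\dd u$, which is finite for every $\gamma<\alpha$ by the second characterization. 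Frostman's lemma then gives $\dimh\supp\mu \geq \gamma$ a.s. on the (full-probability) event $E_\gamma < \infty$, and letting $\gamma \uparrow \alpha$ concludes.

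The main technical obstacle is the upper bound: the random partition of $[0,1]$ at the correct $r$-dependent scale has to be handled carefully so that the randomness in sojourn times does not inflate the expected number of covering balls. The two-sided homogeneity assumption is what makes this tractable, because it decouples the local ``diffusive'' behavior at scale $r$ from the starting location $X_{t_k}$; without it, the constants in both the covering argument and the energy estimate would depend on the visited states and the reduction to $q(r)$ and $\mathbb{E}_0|X_u|^{-\gamma}$ would break. The equivalence step and the lower bound are then largely mechanical once one has chosen the occupation measure and applied Frostman's lemma as stated, e.g., in \citet[Chapter~8]{mattila1999geometry}.
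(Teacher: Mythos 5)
You should note at the outset that the paper does not prove this statement at all: it is quoted verbatim from \citet[Theorem 4.2]{xiao2003random} and used as an external ingredient, so there is no internal proof to compare against. Your overall plan is nevertheless the standard one in that literature: a layer-cake/Fubini identity to reconcile the two index formulas, an occupation-measure plus Frostman-energy argument for the lower bound, and a covering argument driven by $q(r)=\int_0^1 P_t(0,B_r(0))\,\dd t$ for the upper bound. The equivalence step and the lower bound are essentially correct as sketched: $\mathbb{E}_x|X_u-x|^{-\gamma}\leq c\,\mathbb{E}_0|X_u|^{-\gamma}$ is exactly what the two-sided comparability hypothesis delivers through the layer-cake formula, the finiteness set is downward closed so you may let $\gamma\uparrow\alpha$, and finite $\gamma$-energy of the occupation measure gives $\dimh\geq\gamma$ by the capacitarian lower bound \citep[Chapter 8]{mattila1999geometry}. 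One misattribution: the hypothesis $P_t(x,B_r(x))\geq K$ for large $r$ is not what controls the $r\to\infty$ tail of the layer-cake integral (the trivial bound $q(r)\leq 1$ already does that for $\gamma>0$); it is the uniform large-scale bound needed to make the covering estimate nondegenerate.

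The genuine weak point is your upper bound. A deterministic partition of $[0,1]$ into intervals of length proportional to $q(r)$, each covered by a single doubled ball about $X_{t_k}$, does not control the covering number: the path may exit $B_{2r}(X_{t_k})$ within the subinterval, and your sketch gives no accounting of the extra balls then required; moreover ``the probability that $X$ exits $B_r(X_{t_k})$ is bounded below'' is not the quantity that drives the argument. The device that works --- and it is precisely \citet[Lemma 3.1]{liu1998hausdorff}, which this paper reproduces and extends as Lemma~\ref{lem:Covering} --- uses stopping times $\tau_j$ at which the process first moves distance $r$ away from all previously marked points $X_{\tau_0},\dots,X_{\tau_{j-1}}$: the balls $B_{r/3}(X_{\tau_j})$ are disjoint, so their total sojourn time is at most $2$, while the strong Markov property together with the comparability hypothesis bounds each expected sojourn below by $c^{-1}q(r/3)$; this gives $\mathbb{E}N_r(\{X_t\}_{t\in[0,1]})\leq C_D\, c/q(r/3)$ for a constant depending only on $D$, with no logarithmic factor. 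From there, for $\gamma>\alpha$ pick $\beta\in(\alpha,\gamma)$ and a sequence $r_n\to0$ with $q(r_n)\geq r_n^{\beta}$ (such a sequence must exist, else every $\beta'<\beta$ would lie in the defining set and force $\alpha\geq\beta$), so that $\mathbb{E}\sum(\mathrm{diam})^{\gamma}\lesssim r_n^{\gamma-\beta}\to0$ and hence $\mathscr{H}^{\gamma}(\{X_t\}_{t\in[0,1]})=0$ almost surely. Replacing your partition scheme with this stopping-time argument (or citing the covering lemma as a black box, as the paper itself does) closes the gap; the rest of your proposal stands.
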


\section{Variance and clustering in the Fernique--Talagrand functional}
\label{sec:VarCluster}
Now that two generalization bounds in Theorem \ref{thm:bound_transition_kernel} and Corollary \ref{cor:BulkExponent} have been obtained involving the dynamics of the stochastic optimizer, we shall briefly discuss how two properties of the trajectory --- variance and clustering --- play a critical role in the Fernique--Talagrand functional. 

First, we discuss the influence of ``variance,'' or the average size of fluctuations of the stochastic optimizer. Drawing from Theorem \ref{thm:bound_transition_kernel}, we consider a continuous-time stochastic optimization model in the form of a stochastic differential equation
$
\dd W_t = \mu(W_t) \dd t + \sigma(W_t) \dd B_t,
$ 
where $\mu$ typically involves the gradient of the empirical risk $\mathcal{R}_n$. We assume that $\mu,\sigma$ are bounded with bounded second derivatives, and $\sigma(x)$ has non-zero singular values for all $x \in \mathbb{R}^D$. The Aronson estimates \cite{sheu1991some} imply that for some constant $K > 0$ and ``minimal variance'' $\lambda > 0$, $P_t(x,E) \geq K t^{-D/2} \int_E \exp(- \|x - y\|^2  / (2\lambda t)) \dd y$. 
This, together with Theorem \ref{thm:bound_transition_kernel}, implies the existence of constants $K_1,K_2 > 0$ independent of $T,\lambda$ such that 
(see Appendix \ref{sec:GrowthExp})
\begin{equation}
\label{eq:SDE_FT}
\mathbb{E}\gamma_2^\rho(\{W_t\}_{t\in[0,T]}) \leq \frac{K_1}{\rho} \int_0^\rho \sqrt{K_2 - D \log r - \log J_{\rho,T}(\tfrac{1}{2\lambda}, D)} \dd r,
\end{equation}
where $J_{\rho,T}(a,D) = \frac{1}{T}\int_{0}^{1}\int_{1/T}^{\infty}v^{D/2-1}s^{D/2-2}e^{-asv\rho^{2}}\dd s \dd v$ is monotone decreasing in $a$ and monotone increasing in $D$. It is important to note that $\lambda$ will generally increase with the magnitude of $\mu,\sigma$, and hence with the size of the stochastic gradient. Therefore, the normalized FT functional should increase monotonically with the variance of the fluctuations of the optimizer. Consequently, the \emph{sharpness} of an optimum should be reflected in $\gamma_2^\rho$: in a ``flat'' neighborhood, the variance of fluctuations in the optimizer is small, and hence $\gamma_2^\rho$ will be smaller. %

Now, we move on to discuss how the $\alpha$ parameter in Corollary \ref{cor:BulkExponent} can be interpreted as a measure of ``clustering'' of the optimizer trajectory. To accomplish this, we invoke the $K$-function of  \citet{ripley1976second}, a commonly used spatial statistic to determine spatial inhomogeneity. For a point process $N$ on $\mathbb{R}^D$, the $K$-function is defined for each $r > 0$ as the ratio of the expected number of points within distance $r$ of any randomly chosen point, and the average density of points. Letting $W = \{w_1,\dots,w_n\}$ denote a realization of a point process, its $K$-function can be estimated consistently by \citet[eqn. 8.2.18]{cressie2015statistics}:
\begin{equation}
\label{eq:KFunction}
\hat{K}(r) = \frac{\diam(W)}{n} \sum_{\substack{i,j = 1 \\ i \neq j}}^n \ind\{\|w_i - w_j\| \leq r\},\qquad r > 0.
\end{equation}
The $K$-function of a homogeneous Poisson process on $\mathbb{R}^D$ with constant intensity $\lambda$ is given by 
$
K(r) = \pi^{D/2} r^D / \Gamma(1+\frac{D}{2})
$
 for $r > 0$ \citep[eqn. 8.3.34]{cressie2015statistics}. Note that this function is independent of the intensity. 
Therefore, deviations in the $K$-function from an $\mathcal{O}(r^D)$ growth rate suggest spatial inhomogeneity, and therefore points which exhibit spatial clustering.
Treating an optimizer trajectory $W_1,\dots,W_n$ as the realization of a point process, we can interpret the $K$-function through (\ref{eq:KFunction}) and measure spatial clustering of the trajectory. Following the assumptions in Corollary \ref{cor:BulkExponent} which define $\alpha$, we can assume that
$\|W_i - W_j\| \leq r$ with probability approximately $c_{|i-j|} r^{\alpha}$ for $0 < r < r_0$, where $c_1,\dots,c_n$ are positive constants. Under this assumption, $\hat{K}(r) \approx C \diam(W) r^{\alpha}$ for $0 < r < r_0$ and some constant $C > 0$. Therefore, we would expect $\alpha \ll D$ to be an indication that the optimization exhibits significant spatial clustering. By Corollary \ref{cor:BulkExponent}, this circumstance should coincide with a smaller Fernique--Talagrand functional, and therefore improved generalization. 

\begin{figure}
    \centering
    \hspace{-1cm}\includegraphics[width=0.7\textwidth]{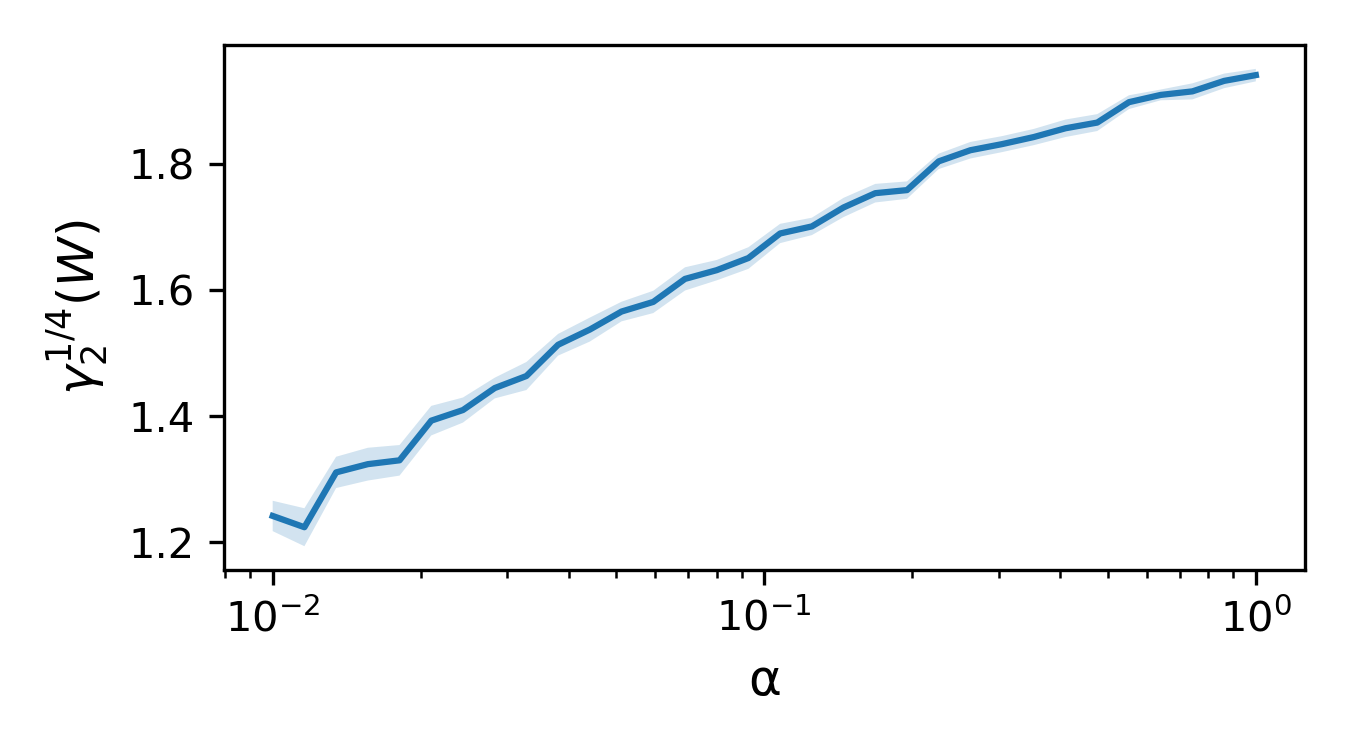}
    \caption{Lower tail exponents versus normalized Fernique--Talagrand functional $\gamma_2^{1/4}(W)$ for $\rho = 1/4$, averaged over 100 runs with 95\% confidence intervals shaded.}
    \label{fig:FTvsAlpha}
\end{figure}

\section{Lower tail exponent and the Fernique--Talagrand functional}
\label{sec:FTAlpha}

To further demonstrate the relationship between the lower tail exponent of a Markov transition kernel and the Fernique--Talagrand functional over its sample path, we consider an extension of the setup in Figure \ref{fig:BrownLevy} and measure $\gamma_2^\rho(\{W_k\}_{k=0}^m)$ for random walks with prescribed lower tail exponents. 

Consider a Markov chain $W_k$ on $\mathbb{R}^2$ defined as follows: starting from $W_0$, let 
\[
W_{k+1} = W_k + (\cos(U_k), \sin(U_k)) Z_k,
\]
where for each $k=0,1,\dots$, $U_k \sim \mathcal{U}(-\pi,\pi)$, and $Z_k \sim \beta'(\alpha,\beta)$ are independent, and $\beta'(\alpha,\beta)$ is the beta prime distribution with density
\[
p(x) = \begin{cases}
\frac{\Gamma(\alpha+\beta)}{\Gamma(\alpha)\Gamma(\beta)} x^{\alpha - 1} (1 + x)^{-\alpha-\beta}, &\qquad \mbox{ if } x > 0,\\
0,&\qquad \mbox{ otherwise.}
\end{cases}
\]
The lower tail exponent of this process, as defined in Section \ref{sec:LowerTail}, is precisely $\alpha$. To remove the effect of variance, the chain is normalized as $\bar{W}_k = W_k / S(\{W_l\}_{l=0}^k)$, where $S(W)$ is the coordinate-wise standard deviation of $W$. For fixed $W_0 = 0$, $\beta = 3.5$ and varying $\alpha \in [10^{-2},1]$, we plot the normalized Fernique--Talagrand functional $\gamma_2^{1/4}(\{\bar{W}_k\}_{k=0}^{100})$ (with $\rho = 1/4$) of the first $100$ iterates of $W_k$, averaged over 100 runs. 
The result is shown in Figure \ref{fig:FTvsAlpha}. As expected, the FT functional grows with $\alpha$. However, unlike the behaviour suggested in Corollary \ref{cor:BulkExponent}, the FT functional appears to grow like $\log \alpha$ for small $\alpha$. 

\section{Lower tail exponent as intrinsic dimension}
\label{sec:GrowthExp}

Here, we shall discuss the relationship between the exponent $\alpha$ and the dimension $D$.
The most commonly considered discrete-time model for a stochastic optimizer is the perturbed gradient descent (GD) model, which satisfies
$W_{k+1} = W_k - \gamma (\nabla f(W_k) + Z_k)$, where $Z_k$ is a Gaussian random vector with zero mean and constant covariance matrix $\Sigma$. In the neighborhood of a local minimum $w^\ast$, $\nabla f(W_k) \approx 0$, and hence the perturbed gradient descent model resembles a Gaussian random walk $W_{k+1} = W_k + \gamma Z_k$. In this case, if $D$ denotes the ambient dimension, the $k$-step transition kernel becomes that of a $D$-dimensional multivariate normal distribution with covariance matrix $k \Sigma$:
\[
P^k(x, E) = (2\pi k)^{-\frac{D}{2}} |\det \Sigma|^{-\frac12} \int_E \exp\left(-\tfrac1{2k} (y-x)^\top \Sigma^{-1} (y-x)\right) \dd y.
\]
Since $\Sigma$ is necessarily positive-definite, letting $\sigma_1$ and $\sigma_D$ denote the largest and smallest singular values of $\Sigma$ respectively, for any $x \in \mathbb{R}^D$ and set $E \subset \mathbb{R}^D$,
\begin{equation}
\label{eq:GaussianTransBound}
\frac{1}{(2\pi k \sigma_1)^{D/2}} \int_E \exp\left(-\frac{\|y-x\|^2}{2 k \sigma_D}\right) \dd y \leq P^k(x, E) \leq \frac{1}{(2\pi k \sigma_D)^{D/2}} \int_E \exp\left(-\frac{\|y-x\|^2}{2 k \sigma_1}\right) \dd y.
\end{equation}
Applying Lemma \ref{lem:GaussInt} to (\ref{eq:GaussianTransBound}), we see that for the perturbed GD model, $\alpha$ in Corollary \ref{cor:BulkExponent} is precisely the ambient dimension $D$. 

\section{Direct estimation of the Fernique--Talagrand functional}

An attractive feature of the FT functional is the availability of a low-degree polynomial time approximation algorithm for $\gamma_2(W,d)$ when $W$ is a finite subset of $\mathbb{R}^D$. In particular, \cite{borst2020majorizing} shows that $\gamma_2(W,d)$ is computable to $\epsilon$-accuracy in $\mathcal{O}((|W|^{1+\omega} + D|W|^3) \log(|W| D /\epsilon))$ time, and $\omega \leq 2.373$ is the matrix multiplication exponent. To our knowledge, this functional is the only object which tightly bounds (\ref{eq:AccError}), up to constants, and is approximable in polynomial time. 
This is especially attractive for our purposes, providing an effective measure of generalization performance and exploratory capacity, which does \emph{not} require access to any test data.
A general procedure to estimate $\gamma_2(W,d)$ is presented in Algorithm \ref{alg:FTEst}. To perform the optimization in the final step, any off-the-shelf nonlinear optimization procedure will suffice, including gradient descent. Indeed, since the objective $I_W(p)$ happens to be convex in $p$ (see \cite{borst2020majorizing}), any local minimizer $p^\ast$ with subgradient $\partial I_W(p^\ast) = 0$ will satisfy $\gamma_2(W, d) = I_W(p^\ast)$.

\ifdefined\nips
\begin{minipage}{.48\textwidth}
    \includegraphics[width=\textwidth]{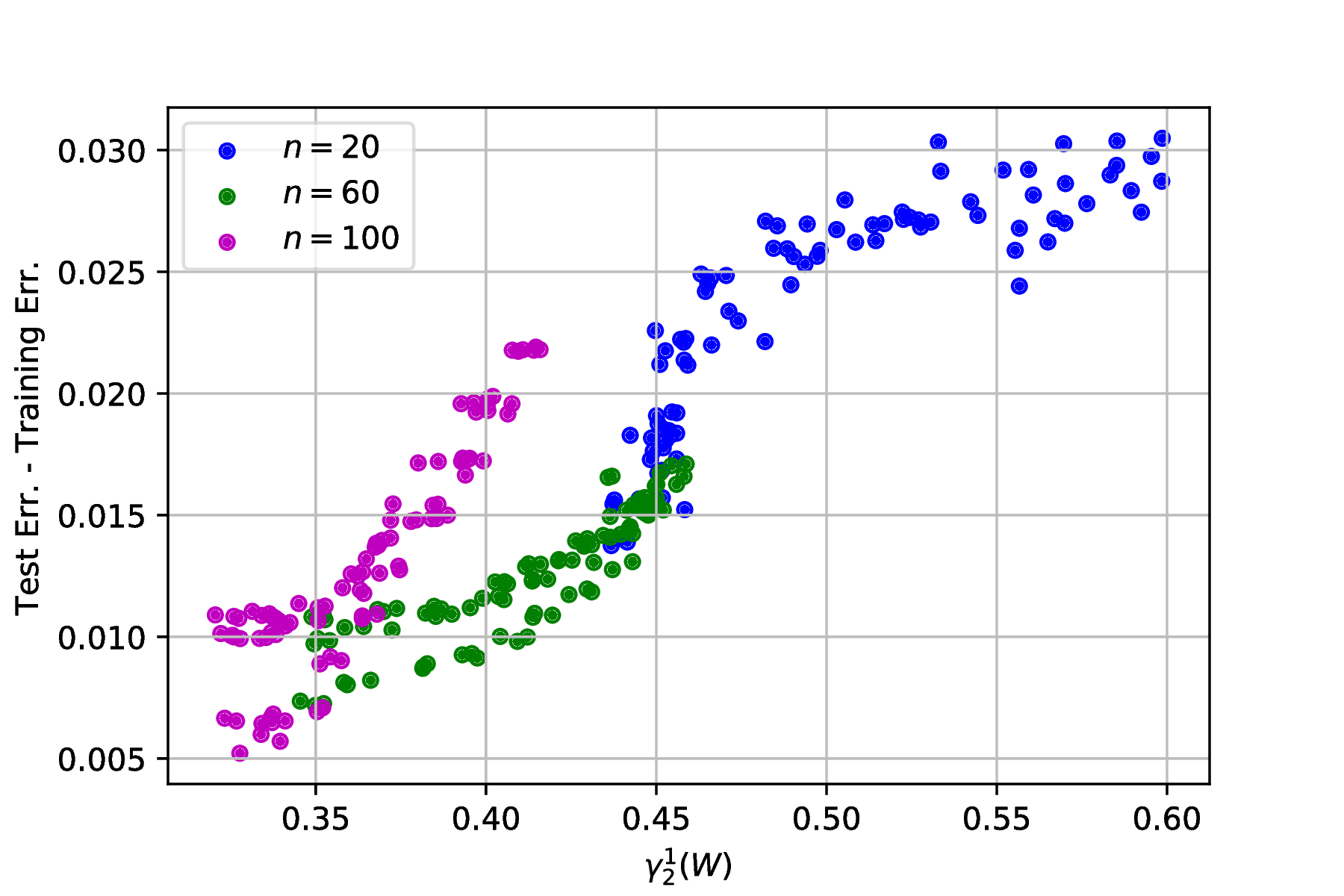}
\captionof{figure}{Normalized Fernique--Talagrand functional $\gamma_2^1(W)$ versus generalization gap. Different colors denote different batch-sizes.
\label{fig:exps_ft1}}
\end{minipage}
\hspace{.03\textwidth}
\begin{minipage}{.48\textwidth}
\begin{algorithm}[H]
   \caption{Fernique--Talagrand functional}
   \begin{algorithmic}
   \small\STATE {\bfseries Input:} A set $W = \{w_1,\dots,w_n\}$, and metric $d(w,w')$ on $W$
   \STATE {\bfseries Output:} An estimate of $\gamma_2(W,d)$.
   \end{algorithmic}
   
    \begin{algorithmic}[1]
    \small\STATE Compute Gram matrix $G = (d(w_i,w_j))_{i,j=1}^n$
    \smallskip
    \STATE {\bfseries for} $i=1,\dots,n$
    \STATE \hspace{1em}\begin{minipage}{0.8\textwidth}sort $(G_{ij})_{j=1}^n$ in ascending order to obtain $(\tilde{G}_{ij})_{j=1}^n$ and sorted indices $(\iota_{ij})_{j=1}^n$
    \end{minipage}
    \STATE {\bfseries end for}
    \smallskip
   \STATE {\bfseries function} $I_W(p)$
    \STATE \hspace{1em}{\bfseries for }$i=1,\dots,n$
    \STATE \hspace{1em} $I_W^i(p) = \sum_{j=0}^{n-1} \tilde{G}_{ij}\sqrt{|\log\sum_{k=0}^j p_{\iota_{ik}}|}$
    \STATE \hspace{1em}{\bfseries return} $\max\{I_W^1(p),\dots,I_W^n(p)\}$ 
   \STATE{\bfseries end function}
   \smallskip
   \STATE {\bfseries return} $\min_{z \in \mathbb{R}^n} I_W(\mbox{\textsf{softmax}}(z))$
\end{algorithmic}
\label{alg:FTEst}
\end{algorithm}
\end{minipage}
\else
\begin{figure}
\centering
    \includegraphics[width=0.75\columnwidth]{figures/ft_loss_3.png}
\caption{Normalized Fernique--Talagrand functional $\gamma_2^1(W)$ versus generalization gap. Different colors denote different batch-sizes.\label{fig:exps_ft1}}
\end{figure}
\begin{algorithm}[H]
   \caption{Fernique--Talagrand functional}
   \begin{algorithmic}
   \small\STATE {\bfseries Input:} A set $W = \{w_1,\dots,w_n\}$, and metric $d(w,w')$ on $W$
   \STATE {\bfseries Output:} An estimate of $\gamma_2(W,d)$.
   \end{algorithmic}
   
    \begin{algorithmic}[1]
    \small\STATE Compute Gram matrix $G = (d(w_i,w_j))_{i,j=1}^n$
    \smallskip
    \STATE {\bfseries for} $i=1,\dots,n$
    \STATE \hspace{1em}\begin{minipage}{0.8\textwidth}sort $(G_{ij})_{j=1}^n$ in ascending order to obtain $(\tilde{G}_{ij})_{j=1}^n$ and\\ sorted indices $(\iota_{ij})_{j=1}^n$
    \end{minipage}
    \STATE {\bfseries end for}
    \smallskip
   \STATE {\bfseries function} $I_W(p)$
    \STATE \hspace{1em}{\bfseries for }$i=1,\dots,n$
    \STATE \hspace{1em} $I_W^i(p) = \sum_{j=0}^{n-1} ({G}_{i,j+1}-G_{ij})\sqrt{|\log\sum_{k=0}^j p_{\iota_{ik}}|}$
    \STATE \hspace{1em}{\bfseries return} $\max\{I_W^1(p),\dots,I_W^n(p)\}$ 
   \STATE{\bfseries end function}
   \smallskip
   \STATE {\bfseries return} $\min_{z \in \mathbb{R}^n} I_W(\mbox{\textsf{softmax}}(z))$
\end{algorithmic}
\label{alg:FTEst}
\end{algorithm}
\fi

Unfortunately, this approach becomes more challenging in high dimensions, where the optimization task becomes more difficult to solve to reasonable accuracy. Hence, we restrict our discussion in this section to smaller models. Our model of choice is a three-layer fully-connected neural network with 20 hidden units, applied to the least-squares regression task on the Wine Quality UCI dataset \cite{cortez2009modeling}. Models are trained from the same (random) initialization for 30 epochs (\emph{before} reaching 100\% training accuracy) using SGD with constant step size $\eta \in \{0.01,0.005,0.001\}$, batch size $b \in \{20,50,100\}$, weight decay parameter $\lambda \in [10^{-4},5\times 10^{-4},10^{-3}]$, and added zero-mean Gaussian noise to the input data with variance $\sigma^2$ for $\sigma \in \{0,0.05,0.1\}$. In Figure \ref{fig:exps_ft1}, for each model, we plot test error at the end of training against the estimated normalized FT functional (using Algorithm \ref{alg:FTEst}) from the last 50 iterates of training. The most profound difference in trends is seen with varying batch size. Nevertheless, as expected, the FT functional shows a strong correlation to test error. 

\section{Proofs of our main results}

\subsection{Proof of Theorem \ref{thm:NormFL} and Corollary \ref{cor:Hausdorff}}

The total mutual information is valuable as it precisely defines the degree to which we may decouple two random elements, as shown in the following lemma.

\begin{lemma}
For any Borel set $B$, $\mathbb{P}_{X,Y}(B)\leq\exp I_{\infty}(X,Y)\cdot\mathbb{P}_{X} \otimes \mathbb{P}_Y(B)$.
\end{lemma}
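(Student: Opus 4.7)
The plan is to identify $\exp(I_\infty(X,Y))$ with the $L^\infty$ norm of the Radon--Nikodym derivative of the joint law with respect to the product law, and then obtain the bound by a one-line integration.

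First, assume $\mathbb{P}_{X,Y} \ll \mathbb{P}_X \otimes \mathbb{P}_Y$; otherwise $I_\alpha(X,Y) = +\infty$ for all sufficiently large $\alpha$, so $I_\infty(X,Y) = +\infty$ and the claimed inequality is trivial. Let $f := d\mathbb{P}_{X,Y}/d(\mathbb{P}_X \otimes \mathbb{P}_Y)$. Choosing the dominating measure $\lambda = \mathbb{P}_X \otimes \mathbb{P}_Y$ in the paper's definition of the Rényi divergence, one obtains the familiar form
\[
I_\alpha(X,Y) = \frac{1}{\alpha-1}\log \mathbb{E}_{\mathbb{P}_X \otimes \mathbb{P}_Y}\!\bigl[f^\alpha\bigr] = \frac{\alpha}{\alpha-1}\log \|f\|_{L^\alpha(\mathbb{P}_X \otimes \mathbb{P}_Y)}.
\]

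Next, I would invoke the standard fact that $\|f\|_{L^\alpha(\nu)} \to \|f\|_{L^\infty(\nu)}$ as $\alpha \to \infty$ for any probability measure $\nu$ (this is the usual limit of $L^p$ norms on a probability space). Combined with $\alpha/(\alpha-1) \to 1$, this yields
\[
I_\infty(X,Y) = \lim_{\alpha\to\infty} I_\alpha(X,Y) = \log \|f\|_{L^\infty(\mathbb{P}_X \otimes \mathbb{P}_Y)},
\]
so in particular $f(z) \leq e^{I_\infty(X,Y)}$ for $\mathbb{P}_X \otimes \mathbb{P}_Y$-almost every $z$.

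Finally, for any Borel set $B$,
\[
\mathbb{P}_{X,Y}(B) = \int_B f \, d(\mathbb{P}_X \otimes \mathbb{P}_Y) \leq \|f\|_\infty \cdot (\mathbb{P}_X \otimes \mathbb{P}_Y)(B) = e^{I_\infty(X,Y)}\,(\mathbb{P}_X \otimes \mathbb{P}_Y)(B),
\]
which is the claim. There is no serious obstacle here; the only subtlety is verifying the $L^\alpha \to L^\infty$ limit (a textbook result) and handling the singular case, which is vacuous.
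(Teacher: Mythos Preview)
Your proof is correct, but it takes a different route from the paper's. You identify $e^{I_\infty(X,Y)}$ directly with the essential supremum $\|f\|_{L^\infty(\mathbb{P}_X\otimes\mathbb{P}_Y)}$ of the Radon--Nikodym derivative $f = \dd\mathbb{P}_{X,Y}/\dd(\mathbb{P}_X\otimes\mathbb{P}_Y)$ via the standard limit $\|f\|_{L^\alpha}\to\|f\|_{L^\infty}$ on a probability space, after which the bound is a one-line integration. The paper instead invokes the data processing inequality for R\'enyi divergences: applying the ``channel'' $z\mapsto\ind_B(z)$ reduces both measures to Bernoulli$(p)$ and Bernoulli$(q)$ with $p=\mathbb{P}_{X,Y}(B)$, $q=\mathbb{P}_X\otimes\mathbb{P}_Y(B)$, and then computes $\lim_{\alpha\to\infty}D_\alpha(\mathrm{B}(p)\Vert\mathrm{B}(q))=\log\max\{p/q,(1-p)/(1-q)\}$ to conclude $p/q\le e^{I_\infty}$. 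Your argument is more elementary and self-contained (no need to cite data processing), while the paper's argument is more ``local'' in that it bounds $p/q$ for each fixed $B$ without first establishing the global identity $I_\infty=\log\|f\|_\infty$. Both are equally short once the respective ingredients are granted.
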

\begin{proof}
The proof relies on the data processing inequality for $\alpha$-Renyi divergence, which implies that for any $\alpha > 1$:
\[
D_{\alpha}(\text{B}(\mathbb{P}_{X,Y}(B))\Vert\text{B}(\mathbb{P}_{X} \otimes \mathbb{P}_Y (B)))\leq I_{\alpha}(X,Y),
\]
where $\text{B}(p)$ is a Bernoulli measure with success probability $p$. Therefore, letting $p = \mathbb{P}_{X,Y}(B)$ and $q = \mathbb{P}_X \otimes \mathbb{P}_Y (B)$, as $\alpha \to \infty$,
\begin{align*}
D_{\alpha}(\text{B}(p)\Vert\text{B}(q))
&=\frac{1}{\alpha-1}\log\left(q\frac{p^{\alpha}}{q^{\alpha}}+(1-q)\frac{(1-p)^{\alpha}}{(1-q)^{\alpha}}\right)\\
&=\log\left(\left(q\frac{p^{\alpha}}{q^{\alpha}}+(1-q)\frac{(1-p)^{\alpha}}{(1-q)^{\alpha}}\right)^{1/(\alpha-1)}\right)
\to\log\max\left\{ \frac{p}{q},\frac{1-p}{1-q}\right\}.
\end{align*}
Therefore, it follows that $p / q \leq \sup_{\alpha} \exp I_\alpha(X,Y) = \exp I_\infty(X,Y)$. 
\end{proof}

\begin{proof}[Proof of Theorem \ref{thm:NormFL}]
In the sequel, we shall let $K > 0$ denote a universal constant, not necessarily the same in each appearance. Let $d_\rho(w,w') = \min\{\rho, \|x - y\|\}$, and consider the alternative \emph{generic chaining} functional $\hat{\gamma}$ given by
\[
\hat{\gamma}_2(W, d_\rho) = \inf \sup_{w \in W} \sum_{k=1}^\infty 2^{n/2} d_\rho(w, T_k),
\]
where the infimum is taken over all sequences of subsets $\{T_k\}_{k=1}^{\infty}$ such that $|T_k| \leq N_k$ (where $N_0 = 1$ and $N_k = 2^{2^k}$ otherwise). By \citet[Theorem 1.1]{talagrand2001majorizing}, there exists a universal constant $K > 0$ such that $\hat{\gamma}_2(W, d_\rho) \leq K \gamma_2^\rho(W)$. 
The proof proceeds in a similar fashion to \citet[Theorem 2.2.27]{talagrand2014upper}.
For each $k$, let $T_k^W$ be a set such that $|T_k^W| \leq N_k$ and
\[
\sup_{w \in W} \sum_{k=1}^\infty 2^{k/2} d_\rho(w, T_k^W) \leq 2 \hat{\gamma}_2(W, d_\rho).
\]
To construct an increasing sequence of subsets, let $U_k^W = \bigcup_{m\leq k} T_m^W$, so that $U_0^W = T_0^W$ and $|U_k^W| \leq 2 N_k$. Now, let $Y_w = n^{1/2} [\mathcal{R}_n(w) - \mathcal{R}(w)] / \max\{B/\rho, L\}$, so that $\mathbb{E}Y_w = 0$, and by Hoeffding's inequality, for any $u > 0$ and $w,w'\in \mathbb{R}^D$,
\begin{equation}
\label{eq:ChainingSubg}
\mathbb{P}(|Y_w - Y_{w'}| > u d_\rho(w, w')) \leq 2 \exp(-\tfrac12 u^2).
\end{equation}
For $u > 0$, consider the event $\Omega(u)$ where
\[
\forall k \geq 1, \, \forall w,w' \in U_k^W,\, |Y_w - Y_{w'}| \leq 2 (2^{k/2} + u) d_\rho(w,w').
\]
For each $k$, let $\tilde{U}_k^W$ be an independent copy of $U_k^W$. Then, for $M = \exp I_\infty(X, W)$, 
\begin{align*}
\mathbb{P}(\Omega \backslash \Omega(u)) &\leq \sum_{k=1}^{\infty} \mathbb{E}\mathbb{P}(|Y_w - Y_{w'}| \leq 2(2^{k/2} + u)d_\rho(w,w'), \forall w,w' \in U_k^W \, \vert \, U_k^W)\\
&\leq M \sum_{k=1}^{\infty} \mathbb{E}\mathbb{P}(|Y_w - Y_{w'}| \leq 2(2^{k/2} + u)d_\rho(w,w'), \forall w,w' \in \tilde{U}_k^W \, \vert \, \tilde{U}_k^W)\\
&\leq M \sum_{k=1}^\infty 2^{2(2^k + 1)+1} \exp(-2 (2^k + u^2)) \\
&\leq M \sum_{k=1}^\infty \exp(2(2^k + 1)+1) \exp(-2 (2^k + u^2)) \\
&\leq K M \exp(-2 u^2).
\end{align*}
For each element $w \in W$, we define a sequence of (random) integers $k(w,q)$ in the following inductive manner. First, let $k(w,0) = 0$, and for each $q \geq 1$, we define
\[
k(w,q) = \inf\left\lbrace k : k \geq k(w,q-1), \, d_\rho(w, U_k^W) \leq \frac12 d_\rho(w, U_{k(w,q-1)}^W) \right\rbrace.
\]
Now, consider elements $\pi_q(w) \in U_{k(w,q)}^W$ satisfying $d_\rho(w, \pi_q(w)) = d_\rho(w, U_{k(w,q)}^W)$. By induction, we find that $d_\rho(w, \pi_q(w)) \leq \rho 2^{-q}$. Furthermore, when $\Omega(u)$ occurs, since $\pi_q(w) \in U_{k(w,q)}^W$ and $\pi_{q-1}(w) \in U_{k(w,q-1)}^W \subset U_{k(w,q)}^W$, it follows that
\[
|Y_{\pi_q(w)} - Y_{\pi_{q-1}(w)}| \leq 2(2^{k(w,q)/2} + u) d_\rho(\pi_q(w), \pi_{q-1}(w)).
\]
Therefore, letting $w_0 \in T_0^W$, under $\Omega(u)$, 
\begin{align*}
|Y_w - Y_{w_0}| &\leq \sum_{q = 1}^{\infty} |Y_{\pi_q(w)} - Y_{\pi_{q-1}(w)}| \\
&\leq \sum_{q = 1}^{\infty} 2(2^{k(w, q)/2} + u) d_\rho(\pi_q(w), \pi_{q-1}(w)) \\
&\leq \sum_{q = 1}^{\infty} 2(2^{k(w, q)/2} + u) d_\rho(w, \pi_q(w)) + \sum_{q=1}^{\infty} 2(2^{k(w, q)/2} + u) d_\rho(w, \pi_{q-1}(w)).
\end{align*}
By construction,
\[
\sum_{q=1}^\infty 2^{k(w,q)/2} d_\rho(w, \pi_q(w)) \leq \sum_{k=0}^{\infty} 2^{k/2} d_\rho(w, T_n^W) \leq 2 \hat{\gamma}_2(W, d_\rho).
\]
Similarly, by the definition of $k(w,q)$, it follows that $d_\rho(w, U_{k(w,q)}^W) \leq \frac12 d_\rho(w, U_{k(w,q-1)}^W)$ and $d_\rho(w, U_{k(w,q)-1}^W) \geq \frac12 d_\rho(w, U_{k(w,q-1)}^W)$. Therefore,
\begin{eqnarray*}
\sum_{q=1}^\infty 2^{k(w,q)/2} d_\rho(w, \pi_{q-1}(w)) 
  &\leq& 2 \sum_{q=1}^\infty 2^{k(w,q)/2} d_\rho(w, T_{k(w,q) - 1}^W) \\
  &\leq& 4 \sum_{k=0}^\infty 2^{k/2} d_\rho(w, T_n^W) \\
  &\leq& 8 \hat{\gamma}_2(W, d_\rho).
\end{eqnarray*}
Finally, we have that $\sum_{q=1}^\infty d_\rho(\pi_q(w), w) \leq \rho \sum_{q=1}^\infty 2^{-q} = \rho$, and $\sum_{q=1}^\infty d_\rho(\pi_{q-1}(w), w) \leq 2 \rho$. Therefore, when $\Omega(u)$ occurs, for any $w \in W$,
\[
|Y_w - Y_{w_0}| \leq K(\hat{\gamma}_2(W, d_\rho) + \rho u) \leq K(\gamma_2^\rho(W) + \rho u).
\]
Altogether, this implies that
\[
\mathbb{P}\left(\sup_{w,w' \in W} |Y_w - Y_{w'}| > K(\gamma_2^\rho(W) + \rho u)\right) \leq K \exp(I_\infty(X, W)-2 u^2).
\]
Since $\mathbb{E}Y_w = 0$, $\sup_{w \in W} |Y_w| \leq \sup_{w, w' \in W} |Y_w - Y_{w'}|$, and (\ref{eq:GenBoundProb}) follows. We would now like to apply \citet[Lemma 1]{xu2017information} to show (\ref{eq:GenBoundExp}). To do so, it is necessary to show that $\sup_{w \in \tilde{W}} |\mathcal{R}_n(w) - \mathcal{R}(w)|$ is subgaussian, where $\tilde{W}$ is an independent copy of $W$. Recall that a random variable $X$ is $\sigma$-subgaussian if $\log\mathbb{E}\exp(\lambda (X-\mathbb{E}X)) \leq \lambda^2 \sigma^2 / 2$. First, consider the case where $\tilde{W} = \mathcal{W}$ is a deterministic set of weights, and for brevity, let $\bar{\mathcal{R}}_n(w) = \mathcal{R}_n(w) - \mathcal{R}(w)$. In this case, one may apply McDiarmid's inequality to $\sup_{w \in \mathcal{W}} |\bar{\mathcal{R}}_n(w)| = f(X_1,\dots,X_n)$, where
\[
f(x_1,\dots,x_n) = \sup_{w \in \mathcal{W}} \left|\frac1n \sum_{i=1}^n \ell(x_i, w) - \mathbb{E}\ell(X_i, w)\right|.
\]
Since $\ell$ is bounded, it follows that $|\ell(x_i,w) - \ell(y_i,w)| \leq 2 B$ for any $x,y$ and $w \in \mathcal{W}$. Therefore,
\[
|f(x_1,\dots,x_{i-1},x_i,x_{i+1},\dots,x_n) - f(x_1,\dots,x_{i-1},x_i',x_{i+1},\dots,x_n)| \leq \frac{2 B}{n}.
\]
Applying McDiarmid's inequality reveals that
\[
\mathbb{P}\left(\left|\sup_{w\in\mathcal{W}}|\bar{\mathcal{R}}_n(w)|-\mathbb{E}\sup_{w\in\mathcal{W}}|\bar{\mathcal{R}}_n(w)|\right|>u\right)\leq2\exp\left(-\frac{nu^{2}}{2B^{2}}\right).
\]
Since the bound does not depend on $\mathcal{W}$, and $\{X_i\}_{i=1}^n$, $\tilde{W}$ are independent, we can condition on $\tilde{W}$ and apply this bound to find that
\[
\mathbb{P}\left(\left|\sup_{w\in\tilde{W}}|\bar{\mathcal{R}}_n(w)|-\mathbb{E}\sup_{w\in\tilde{W}}|\bar{\mathcal{R}}_n(w)|\right|>u\right)\leq2\exp\left(-\frac{nu^{2}}{2B^{2}}\right),
\]
which, by \citet[Theorem 2.1]{boucheron2013concentration}, implies that $\sup_{w \in \tilde{W}} |\bar{\mathcal{R}}_n(w)|$ is $(4 B / \sqrt{n})$-subgaussian. Applying \citet[Lemma 1]{xu2017information},
\begin{equation}
\label{eq:ExpProofPart1}
\mathbb{E}\sup_{w \in W} |\mathcal{R}_n(w) - \mathcal{R}(w)| \leq \mathbb{E}\sup_{w \in \tilde{W}} |\mathcal{R}_n(w) - \mathcal{R}(w)| + \sqrt{\frac{32 B^2 I_1(X, W)}{n}}.
\end{equation}
Using (\ref{eq:ChainingSubg}), an application of \citet[Proposition 2.4]{talagrand1996majorizing} shows that
\begin{equation}
\label{eq:ExpProofPart2}
\mathbb{E}\sup_{w \in \tilde{W}} |Y_w| \leq \mathbb{E}\left[ \mathbb{E}_{\tilde{W}} \sup_{w \in \tilde{W}} |Y_w|\right] \leq K \mathbb{E} \gamma_2^\rho(\tilde{W}) = K \mathbb{E} \gamma_2^\rho(W),
\end{equation}
where $\mathbb{E}_{\tilde{W}}$ denotes conditional expectation, conditioned on $\tilde{W}$. The result now follows by combining (\ref{eq:ExpProofPart1}) and (\ref{eq:ExpProofPart2}). 
\end{proof}

\begin{remark}
Unfortunately, there has been little work on providing good estimates on the universal constant $K$. To our knowledge, only the original work of Fernique reports constants: from \citet[pg. 74]{fernique1975regularite}, we find that
\[
K \leq 30\sqrt{8} \left[\sqrt{2+\frac{1}{\log2}}+\frac{e}{2\sqrt{\log2}}\right] \approx 296,
\]
which is likely much larger than necessary. 
\end{remark}

\begin{proof}[Proof of Corollary \ref{cor:Hausdorff}]
Define a probability measure $\mu$ with support on $W$ by $\mu(E) = \mathcal{H}^{\alpha}(W \cap E) / \mathcal{H}^{\alpha}(W)$. By assumption, $\mu(B_r(w)) \geq (C_\rho r)^{\alpha}$ for $0 \leq r < \rho$ and any $w \in W$. Therefore,
\[
\gamma_2^\rho(W) \leq \sup_{w \in W} \frac1{\rho} \int_0^\rho \sqrt{\log \frac{1}{\mu(B_r(w))}} \dd r \leq (C_{\rho}^{-1} \sqrt{\alpha}) \cdot \frac{1}{\rho} \int_0^{\rho C_{\rho}} \sqrt{\log\frac{1}{r}} \dd r.
\]
Since $\mu(B_r(w)) \leq 1$, it follows that $\rho C_\rho \leq 1$. The result follows upon the observation that $\int_0^1 \sqrt{\log \frac1{r}} \dd r = \frac{\sqrt{\pi}}{2}$. 
\end{proof}

\subsection{Proof of Theorem \ref{thm:bound_transition_kernel}}

The Dudley bound is related to the Fernique--Talagrand functional through the following lemma, which combines \citet[Corollary 2.3.2]{talagrand2014upper} with the discussion on \citet[pg. 22]{talagrand2014upper}. Let $N_{r}^{d}(W)$ denote the $r$-covering number of $W$, that is, the smallest integer $N$ such that there exists a set of $N$ balls of radius $r$ under the metric $d$, whose union contains $W$. 

\begin{lemma}[Dudley entropy]
\label{lem:Dudley}
There exists a universal constant $K > 0$ such that for any metric $d$ and set $W$, $\gamma_2(W, d) \leq K \int_0^\infty \sqrt{\log N_r^d(W)} \dd r$. In particular, $\gamma^\rho_2(W) \leq \frac{K}{\rho}\int_0^\rho \sqrt{\log N_r(W)} \dd r$.
\end{lemma}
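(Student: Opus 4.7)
Without loss of generality, assume $D := \diam(W) < \infty$ (else the Dudley integral is infinite and the bound is vacuous). The strategy is the standard one of building a single probability measure $\mu$ on $W$ by mixing uniform distributions on optimal $r$-nets across a dyadic family of scales, then reading off a lower bound on $\mu(B_r^d(w))$ at every radius $r$. Set $r_k = 2^{-k}D$ for $k \geq 0$, let $N_k = N_{r_k}^d(W)$, and pick an $r_k$-net $C_k \subset W$ with $|C_k| = N_k$ (up to a factor $2$ in radius, which only affects the final constant). Let $\mu_k$ be uniform on $C_k$ and define
\[
\mu = \sum_{k=0}^{\infty} p_k \mu_k, \qquad p_k = \frac{c}{(k+1)^2},
\]
where $c$ normalizes the weights to sum to one. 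This $\mu$ is supported on $W$.

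The key geometric observation is that for every $w \in W$ and every $r > 0$, if $k$ is the smallest index with $r_{k+1} \leq r$, then some point of $C_{k+1}$ lies within $d$-distance $r_{k+1} \leq r$ of $w$, so
\[
\mu(B_r^d(w)) \;\geq\; \frac{p_{k+1}}{N_{k+1}} \;=\; \frac{c\,(k+2)^{-2}}{N_{k+1}}.
\]
Inserting this into the definition (\ref{eq:FerniqueTalagrand}) and splitting the outer integral into dyadic pieces $[r_{k+1}, r_k]$ of length $r_{k+1} = 2^{-k-1}D$ yields, by the inequality $\sqrt{a+b} \leq \sqrt{a}+\sqrt{b}$,
\[
\sup_{w \in W} \int_0^D \sqrt{\log \tfrac{1}{\mu(B_r^d(w))}}\, dr \;\leq\; \sum_{k=0}^{\infty} r_{k+1}\Bigl(\sqrt{\log N_{k+1}} + \sqrt{2\log(k+2) - \log c}\Bigr).
\]
The second summand contributes a universal constant times $D$ since $\sum_k 2^{-k-1}\sqrt{\log(k+2)}$ converges.

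The first summand is compared to the Dudley integral by monotonicity of $r \mapsto N_r^d(W)$: for $r \in [r_{k+2}, r_{k+1}]$ one has $N_r^d(W) \geq N_{k+1}$, so
\[
\int_{r_{k+2}}^{r_{k+1}} \sqrt{\log N_r^d(W)}\, dr \;\geq\; r_{k+2}\sqrt{\log N_{k+1}} \;=\; \tfrac12 r_{k+1}\sqrt{\log N_{k+1}},
\]
and summing over $k$ gives $\sum_k r_{k+1}\sqrt{\log N_{k+1}} \leq 2\int_0^D \sqrt{\log N_r^d(W)}\, dr$. We also absorb the constant $D$ into the Dudley integral using the crude bound $D \leq C \int_0^D \sqrt{\log N_r^d(W)}\, dr$ whenever $N_{D/2}^d(W) \geq 2$ (otherwise $W$ is a single point and the statement is trivial). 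Combining yields $\gamma_2(W,d) \leq K\int_0^\infty \sqrt{\log N_r^d(W)}\, dr$ for a universal $K$.

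For the ``in particular'' claim, apply the first part with $d$ replaced by the truncated Euclidean metric $d_\rho(x,y) = \min\{\rho,\|x-y\|\}$. Since $\diam_{d_\rho}(W) \leq \rho$ and $B_r^{d_\rho}(w) = B_r(w)$ for $r < \rho$ (while the integrand vanishes for $r \geq \rho$), the covering numbers match, $N_r^{d_\rho}(W) = N_r(W)$ for $r<\rho$ and $N_r^{d_\rho}(W) = 1$ for $r \geq \rho$. Moreover $\gamma_2(W,d_\rho) = \rho\,\gamma_2^\rho(W)$ by (\ref{eq:NormFT}), so dividing by $\rho$ yields the stated bound. The main technical obstacle throughout is the bookkeeping needed to absorb the mixture-entropy terms $\sqrt{\log(k+2)}$ into a single universal constant without disturbing the rate; the choice $p_k \propto (k+1)^{-2}$ is convenient since its tail decays fast enough to be dominated by the geometric factor $r_{k+1}$, but any subexponentially decaying prior would work with a different constant.
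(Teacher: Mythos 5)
Your proof is correct, and it takes a genuinely different route from the paper: the paper does not argue from first principles at all, but simply invokes Talagrand's book (Corollary 2.3.2 there, which bounds the generic-chaining functional by the Dudley entropy integral via admissible sequences of nets, combined with the discussion equating that functional, up to constants, with the majorizing-measure formulation used in (\ref{eq:FerniqueTalagrand})), and then specializes to $d_\rho$ exactly as you do. You instead work directly with the measure formulation and exhibit an explicit majorizing measure --- a mixture $\mu=\sum_k p_k\mu_k$ of uniform measures on (internal) nets at dyadic scales with a polynomially decaying prior --- which is the classical direct proof that Dudley's integral dominates Fernique's functional. What your route buys is a self-contained, elementary argument with an explicit witness for the infimum in (\ref{eq:FerniqueTalagrand}) and transparent control of where the universal constant comes from (net-doubling, the $\sqrt{a+b}\le\sqrt a+\sqrt b$ split, and the $\sum_k 2^{-k}\sqrt{\log(k+2)}$ tail); what the paper's citation route buys is brevity and access to the sharper machinery (e.g.\ the known fact that Dudley is off by at most a logarithmic factor), at the cost of relying on the nontrivial equivalence between the measure-based and partition-based functionals.

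One small repair: your claim that $N_{D/2}^d(W)=1$ forces $W$ to be a single point is not right (an interval of length $D$ is covered by one ball of radius $D/2$). It is also unnecessary: if a single ball of radius $r$ covers $W$ then $\diam(W)\le 2r$, so for every $r<D/2$ one automatically has $N_r^d(W)\ge 2$, whence $\int_0^D\sqrt{\log N_r^d(W)}\,\dd r\ge \tfrac{D}{2}\sqrt{\log 2}$ unconditionally whenever $D>0$ (and the case $D=0$ is trivial). With that substitution your absorption of the additive $\mathcal{O}(D)$ term goes through verbatim, and the ``in particular'' step for $d_\rho$ --- using $\diam_{d_\rho}(W)\le\rho$, the coincidence of $d_\rho$-balls with Euclidean balls below radius $\rho$, and $\gamma_2(W,d_\rho)=\rho\,\gamma_2^\rho(W)$ --- is exactly right.
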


If $W \subset \mathbb{R}^D$, the Dudley bound is never off by any more than a factor of $\log(d+1)$ \citep[Exercise 2.3.4]{talagrand2014upper}. Now, since $x \mapsto \sqrt{\log x}$ is concave on $[1,\infty)$, Jensen's inequality implies that $\mathbb{E}\sqrt{\log X} \leq \sqrt{\mathbb{E} \log X}$ for any random variable with support in $[1,\infty)$. Therefore, 
\[
\mathbb{E}\gamma^\rho_2(W) \leq \frac{K}{\rho} \int_0^\rho \sqrt{\log \mathbb{E}N_r(W)} \dd r.
\]
Covering numbers for images of Markov processes are bounded by the following fundamental lemma. 
\begin{lemma}
\label{lem:Covering}
Let $\Lambda(r)$ be a fixed collection of cubes of side length $r$ in $\mathbb{R}^{d}$ such that no ball of radius $r$ can intersect more than $K$ cubes of $\Lambda(r)$. 
\begin{enumerate}[leftmargin=*]
\item Suppose that $X_{n}$ is a time-homogeneous Markov chain with $n$-step transition kernel $P^{n}(x,A)$. For any integer $m$, let $\mathcal{N}_{r}(m)$ denote the number of cubes in $\Lambda(r)$ hit by $X_{n}$ at some time $0\leq k\leq m$. Then
\[
\mathbb{E}\mathcal{N}_{r}(m)\leq2K\left[\inf_{x\in\bigcup_{r>0}\Lambda(r)}\mathbb{E}_{x}\left(\frac{1}{m}\sum_{k=1}^{m}P^k(x, B_{r/3}(x))\right)\right]^{-1}.
\]
\item Suppose that $X_{t}$ is a time-homogeneous strong Markov process in $\mathbb{R}^{d}$ with transition kernel $P(t,x,A)$. For any $t\geq0$, let $\mathcal{N}_{r}(t)$ denote the number of cubes in $\Lambda(r)$ hit by $X_{t}$ at some time $s\in[0,t]$. Then
\[
\mathbb{E}\mathcal{N}_{r}(t)\leq2K\left[\inf_{x\in\bigcup_{r>0}\Lambda(r)}\mathbb{E}_{x}\left(\frac{1}{t}\int_{0}^{t}P(s, x, B_{r/3}(x))\dd s\right)\right]^{-1}.
\]
\end{enumerate}
\end{lemma}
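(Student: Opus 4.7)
The approach is a double-counting argument combining the strong Markov property with the geometric restriction on $\Lambda(r)$. For each cube $C \in \Lambda(r)$, introduce the first hitting time $T_C = \inf\{k \geq 0 : X_k \in C\}$, so that $\mathcal{N}_r(m) = \sum_{C \in \Lambda(r)} \ind\{T_C \leq m\}$. The plan is to study the auxiliary quantity
\[
S_m := \sum_{C \in \Lambda(r)} \ind\{T_C \leq m\} \sum_{j=T_C+1}^{T_C+m} \ind\{X_j \in B_{r/3}(X_{T_C})\},
\]
and to bound $\mathbb{E}S_m$ from both sides.

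For the upper bound I would swap the order of summation and, for each fixed time $j \in \{1,\ldots,2m\}$, count the cubes contributing to $S_m$. If $X_j \in B_{r/3}(X_{T_C})$, then $X_{T_C} \in B_{r/3}(X_j) \cap C$, so $C$ must intersect $B_{r/3}(X_j)$; since a ball of radius $r/3 \leq r$ meets at most $K$ cubes of $\Lambda(r)$, at most $K$ cubes contribute to $S_m$ at each $j$. Because $0 \leq T_C \leq m$, the index $j$ only ranges over $\{1,\ldots,2m\}$, which yields the pointwise bound $S_m \leq 2mK$ and hence $\mathbb{E}S_m \leq 2mK$.

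For the lower bound, the strong Markov property at the stopping time $T_C$ together with time-homogeneity gives
\[
\mathbb{E}\!\left[\ind\{T_C \leq m\}\sum_{j=T_C+1}^{T_C+m} \ind\{X_j \in B_{r/3}(X_{T_C})\}\,\Big|\, \mathcal{F}_{T_C}\right] = \ind\{T_C \leq m\}\sum_{k=1}^m P^k(X_{T_C}, B_{r/3}(X_{T_C})),
\]
which is at least $\ind\{T_C \leq m\}\cdot m\cdot \alpha_r$, where $\alpha_r := \inf_{x} m^{-1}\sum_{k=1}^m P^k(x, B_{r/3}(x))$. Summing over $C$ and taking expectations gives $\mathbb{E}S_m \geq m\alpha_r\, \mathbb{E}\mathcal{N}_r(m)$; combining this with the upper bound yields the first assertion. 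The continuous-time case follows the same skeleton, with the sums replaced by integrals over $[T_C, T_C+t]$ and the strong Markov step applied to the hitting time of the (closed) cube $C$. The main technical point will be justifying the strong Markov step on the event $\{T_C \leq m\}$ (resp.\ $\{T_C \leq t\}$) and, in the continuous case, confirming that $T_C$ is a bona fide stopping time of the (right-continuous) filtration — issues which are standard given the strong Markov hypothesis already built into the statement.
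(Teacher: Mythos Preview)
Your argument is correct and is a genuinely different route from the paper's. The paper (following Liu--Xiao) does not work with the hitting times $T_C$ of the cubes. Instead it builds an increasing sequence of stopping times $\tau_0<\tau_1<\cdots$ so that the points $X_{\tau_0},X_{\tau_1},\dots$ are pairwise $r$-separated, sets $\eta=\min\{j:\tau_j>m\}$, and first uses the $K$-intersection hypothesis only to show $\mathcal{N}_r(m)\leq K\eta$. The occupation-time step is then done with the \emph{disjointness} of the balls $B_{r/3}(X_{\tau_j})$: at each time $k$ the chain lies in at most one of these balls, which gives $\sum_j I_j T_j\leq 2m$, and the strong Markov property at $\tau_j$ yields $\mathbb{E}\eta\leq 2/\alpha_r$.

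Your approach folds both steps into one: by indexing over the cubes themselves and swapping the sums, you invoke the $K$-intersection hypothesis at each time $j$ (via $X_{T_C}\in B_{r/3}(X_j)\cap C$) to get $S_m\leq 2mK$ directly, and the strong Markov step is applied at $T_C$. This is cleaner here because $\mathcal{N}_r(m)$ appears without the detour through $\eta$. The paper's decomposition, on the other hand, separates the geometric input (cubes versus balls) from the purely probabilistic occupation-time estimate, which is closer to the classical continuous-time argument and ports verbatim to settings where one has an $r$-net rather than a cube grid. Both methods give the same bound with the same constant $2K$.
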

\begin{proof}
The second result is precisely \citet[Lemma 3.1]{liu1998hausdorff}, so it will suffice to show only the first. In a similar fashion, we consider a sequence of stopping times constructed in the following manner: let $\tau_{0}=0$ and for each positive integer $j$, we let \[
\tau_{j}=\min\left\{ k\geq\tau_{j-1}:\min_{i=0,\dots,j-1}|X_{k}-X_{\tau_{i}}|>r\right\}.
\] 
In other words, each $\tau_{j}$ is chosen to be the first time that the Markov chain is at least distance $r$ from $X_{\tau_{0}},\dots,X_{\tau_{j-1}}$. By construction, $|X_{\tau_{j}}-X_{\tau_{k}}|\geq r$ for $j\neq k$. By a Vitali covering argument, the balls $\{B_{r/3}(X_{\tau_{j}})\}_{j}$ are disjoint. Now, let
\[
T_{j}=\sum_{k=\tau_{j}+1}^{\tau_{j}+m}\ind_{X_{k}\in B_{r/3}(X_{\tau_{j}})}
\]
be the sojourn time of $X_{k}$ in $B_{r/3}(X_{\tau_{j}})$ in the interval $(\tau_{j},\tau_j + m]$. Furthermore, let $\eta=\min\{k:\tau_{k}>m\}$ so that $\{X_{k}\}_{k=0}^{m}\subset\bigcup_{j=0}^{\eta-1}B_{r}(X_{\tau_{j}})$.
Therefore, because $\{X_{k}\}_{k=0}^{m}$ is contained within the union of $\eta$ balls in $\mathbb{R}^{d}$ and no ball in $\mathbb{R}^{d}$ can intersect any more than $K$ cubes of $\Lambda(r)$, it follows that 
\begin{equation}
\label{eq:NumCubesBound}
\mathcal{N}_{r}(m)\leq K\eta. 
\end{equation}
Let $I_{j}$ be the indicator of the event $\{\tau_{j}\leq m\}$, or equivalently, $\{\eta-1\geq j\}$. Doing so, we have that $\eta=\sum_{j=0}^{\infty}I_{j}$. Furthermore, since $\sum_{j=0}^{\infty}\ind_{X_{k}\in B_{r/3}(X_{\tau_{j}})}\leq1$ by the disjointness of $\{B_{r/3}(X_{\tau_{j}})\}_{j}$,
\begin{align*}
\sum_{j=0}^{\infty} I_{j}T_{j}	&=\sum_{j=0}^{\infty} \sum_{k=\tau_{j}+1}^{\tau_{j}+m}\ind_{\tau_{j}\leq m}\ind_{X_{k}\in B_{r/3}(X_{\tau_{j}})}\\
&\leq\sum_{k=1}^{2m}\sum_{j=0}^{\infty}\ind_{X_{k}\in B_{r/3}(X_{\tau_{j}})}=2m.
\end{align*}	
By the strong Markov property, we may condition on starting the process at $X_{\tau_{j}}$:
\begin{align*}
\mathbb{E}[I_{j}T_{j}]	&=\mathbb{E}\left[\ind_{\tau_{j}\leq m}\mathbb{E}_{X_{\tau_{j}}}\sum_{k=1}^{m}\ind_{X_{k}\in B_{r/3}(X_{\tau_{j}})}\right]\\
&\geq\mathbb{E}I_{j}\cdot\inf_{x\in\mathbb{R}^{d}}\mathbb{E}_{x}\sum_{k=1}^{m}\ind_{X_{k}\in B_{r/3}(x)}.
\end{align*}
Therefore, by monotone convergence,
\begin{align*}
\mathbb{E}\eta\cdot\inf_{x\in\mathbb{R}^{d}}\mathbb{E}_{x}\sum_{k=1}^{m}\ind_{X_{k}\in B_{r/3}(x)}&=\sum_{j=0}^{\infty}\mathbb{E}I_{j}\inf_{x\in\mathbb{R}^{d}}\mathbb{E}_{x}\sum_{k=1}^{m}\ind_{X_{k}\in B_{r/3}(x)}\\
&\leq\sum_{j=0}^{\infty}\mathbb{E}[I_{j}T_{j}]=\mathbb{E}\sum_{j=0}^{\infty}I_{j}T_{j}\leq2m,
\end{align*}
and hence
\begin{equation}
\label{eq:NumCubesBound2}
\mathbb{E}\eta\leq2\left[\inf_{x\in\mathbb{R}^{d}}\mathbb{E}_{x}\left(\frac{1}{m}\sum_{k=1}^{m}\ind_{X_{k}\in B_{r/3}(x)}\right)\right]^{-1}.
\end{equation}
The result now follows from (\ref{eq:NumCubesBound}) and (\ref{eq:NumCubesBound2}). 
\end{proof}
If $\Lambda(r)$ is chosen to be the set of dyadic cubes in $\mathbb{R}^D$ of side length $r$, then $K = 3^D$. Combining Lemmas~\ref{lem:Dudley} and \ref{lem:Covering} with this choice of $\Lambda(r)$ yields Theorem \ref{thm:bound_transition_kernel}. Note that the dimension dependence arises only due to this particular choice of $\Lambda(r)$. For our purposes, $K$ is mostly irrelevant, but it is worth noting that this dimension dependence could feasibly be removed with a less naive choice of $\Lambda(r)$.

\subsection{Proof of (\ref{eq:SDE_FT})}
\label{sec:SDE_App}
The proof of (\ref{eq:SDE_FT}) relies on the following simple lemma.
\begin{lemma}
\label{lem:GaussInt}
For any $a > 0$, $x \in \mathbb{R}^D$, and $0 \leq r \leq \rho$,
\[
\frac{r^{D}}{2}I_\rho(a,D)\leq\int_{B_{r}(x)}e^{-a\|y-x\|^{2}}\dd y\leq\frac{r^{D}}{D},
\]
where $I_\rho(a,D) = \int_0^1 v^{D/2-1} e^{-av \rho^2} \dd v$ is monotone decreasing in $a$, $D$, and $\rho$.
\end{lemma}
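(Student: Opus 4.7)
The lemma is entirely computational, so the plan is mostly a matter of organizing a short chain of changes of variables. First, by translation invariance I would replace $x$ with $0$ and pass to spherical coordinates, reducing the $D$-dimensional integral to the radial form
\[
\int_{B_r(0)} e^{-a\|y\|^2}\,\dd y \;=\; \sigma_{D-1}\int_0^r s^{D-1} e^{-as^2}\,\dd s,
\]
where $\sigma_{D-1}$ denotes the surface measure of the unit sphere in $\mathbb{R}^D$. This collapses the $D$-dimensional problem to two one-dimensional inequalities on the radial integral.

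For the upper bound I would use the trivial estimate $e^{-as^2}\le 1$ together with $\int_0^r s^{D-1}\,\dd s = r^D/D$, absorbing the dimensional surface-measure constant into the stated form $r^D/D$ (this convention is consistent with how the lemma feeds into the constants $K_1,K_2$ appearing in (\ref{eq:SDE_FT})).

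For the lower bound I would perform the substitution $s = r\sqrt{v}$, $\dd s = (r/2\sqrt v)\,\dd v$, which gives
\[
\int_0^r s^{D-1}e^{-as^2}\,\dd s \;=\; \frac{r^D}{2}\int_0^1 v^{D/2-1}e^{-ar^2 v}\,\dd v.
\]
The hypothesis $r\le \rho$ then gives $e^{-ar^2 v}\ge e^{-a\rho^2 v}$ for every $v\in[0,1]$, so the right-hand side is bounded below by $\tfrac{r^D}{2}\,I_\rho(a,D)$, which is the lower bound claimed. This is the only step where the assumption $r\le\rho$ plays any role.

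The three monotonicity assertions for $I_\rho(a,D)$ are immediate by differentiation under the integral sign: $\partial_a(e^{-av\rho^2})\le 0$ and $\partial_\rho(e^{-av\rho^2})\le 0$ handle $a$ and $\rho$, while $\partial_D(v^{D/2-1}) = \tfrac12 v^{D/2-1}\log v \le 0$ for $v\in(0,1)$ handles $D$. There is no genuine obstacle in the proof; the only point requiring attention is choosing the substitution so that the $\rho$-dependence of $I_\rho$ appears naturally, which is exactly why the hypothesis $r\le\rho$ is imposed.
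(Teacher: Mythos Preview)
Your proposal is correct and follows essentially the same route as the paper: translate to the origin, pass to the radial integral, perform the substitution $s=r\sqrt{v}$ to obtain the exact identity $\int_0^r s^{D-1}e^{-as^2}\,\dd s=\tfrac{r^D}{2}\int_0^1 v^{D/2-1}e^{-ar^2 v}\,\dd v$, and then sandwich via $e^{-av\rho^2}\le e^{-ar^2 v}\le 1$. You are in fact slightly more careful than the paper, which silently drops the surface-area factor $\sigma_{D-1}$ in its first displayed equality and does not spell out the monotonicity claims; your remark that this constant is absorbed into $K_1,K_2$ downstream is exactly the right reading.
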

\begin{proof}
By a change of variables, 
\[
\int_{B_r(0)} \exp(-a\|x\|^2) \dd x = \int_0^r u^{D-1} e^{-a u^2} \dd u = \frac{r^D}{2} \int_0^1 v^{D/2 - 1} e^{-a r^2 v} \dd v.
\]
The bounds are obtained through $e^{-a v \rho^2}\leq e^{-a r^2 v} \leq 1$.
\end{proof}
\begin{proof}[Proof of (\ref{eq:SDE_FT})]
In the sequel, $K$ will be used to denote a universal constant, not necessarily the same at each appearance. First, by the Aronson estimate and Lemma \ref{lem:GaussInt},
\begin{align*}
\int_0^T P_t(x,B_r(x)) \dd t &\geq K \int_0^T \int_{B_r(x)}  \frac{1}{t^{D/2}} \exp\left(-\frac{\|x-y\|^2}{2\lambda t}\right) \dd y \dd t \\
&\geq K r^D \int_0^T \frac{1}{t^{D/2}} I_\rho\left(\frac{1}{2\lambda t}, D\right) \dd t. 
\end{align*}
By Fubini's Theorem, and through the change of variables $t \mapsto t^{-1}$,
\begin{align*}
\int_0^T P_t(x,B_r(x)) \dd t &\geq K r^D \int_0^1 \int_0^T \frac{1}{t^{D/2}} v^{D/2 - 1} e^{-v\rho^2 / (2\lambda t)} \dd t  \dd v \\
&\geq K r^D \int_0^1 \int_{1/T}^\infty s^{D/2 - 2} v^{D/2 - 1} e^{-s v\rho^2 / 2\lambda t} \dd s  \dd v \\
&\geq K T r^D J_{\rho,T}\bigg(\frac{1}{2\lambda}, D\bigg),
\end{align*}
where $J_{\rho,T}$ is as defined in Section \ref{sec:Bounds}.
Equation (\ref{eq:SDE_FT}) now follows from Theorem \ref{thm:bound_transition_kernel}.
\end{proof}

\subsection{Proof of Corollary \ref{cor:BulkExponent}}
The proof of Corollary \ref{cor:BulkExponent} itself relies on the following corollary, which performs the local homogeneity approximation to $\{W_k\}_{k=0}^m$. 
\begin{corollary}
\label{cor:FTIID}
Let $\Omega$ be a closed set such that $\mathbb{P}(W_k \notin \Omega \text{ for some } k = 0,\dots,m) \leq \zeta_m$. Then for any probability measure $\mu$, letting $\mu^k$ denote $k$-fold convolution of $\mu$ and $P_\Omega$ the transition kernel of $W_k$ conditioned on $W_k \in \Omega$ for $k=1,2,\dots,m$,
\begin{multline}
\label{eq:FTIID}
\mathbb{E}\gamma_{2}^{\rho}(\{W_{k}\}_{k=0}^{m})\leq\frac{K}{\rho}\int_{0}^{\rho}\sqrt{(D+2)\log3-\log\left(\frac{1}{m}\sum_{k=1}^{m}\mu^{k}(B_{r})\right)} \dd r \\
+\sqrt{\log(m+1)}\left(\zeta_{m}+m\sup_{x\in\Omega}d_{\text{TV}}(P_{\Omega}(x,x+\cdot),\mu)\right).
\end{multline}
\end{corollary}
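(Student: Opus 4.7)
The strategy is a two-stage reduction: first, I would swap the actual chain $\{W_k\}_{k=0}^m$ for a conditional chain $\{W_k^\Omega\}_{k=0}^m$ that remains inside $\Omega$, absorbing the exit probability into the $\zeta_m$ term; second, I would couple that conditional chain to an i.i.d.\ random walk with step law $\mu$, and invoke Theorem~\ref{thm:bound_transition_kernel} on the random walk, whose $k$-step kernel is spatially homogeneous and therefore collapses the supremum in $\mathcal{I}_\rho[\,\cdot\,]$.

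The first ingredient is a trivial deterministic sampling bound: for any finite set $W$ with $|W|\leq n$, taking $\mu$ uniform on $W$ in (\ref{eq:NormFT}) gives $\mu(B_r(w))\geq 1/n$ and hence $\gamma_2^\rho(W)\leq\sqrt{\log n}$. Applied to the trajectory this gives $\gamma_2^\rho(\{W_k\}_{k=0}^m)\leq \sqrt{\log(m+1)}$ almost surely. Writing $B=\{W_k\in\Omega \text{ for all } k\leq m\}$ and using the identity $\mathbb{E}[f(\{W_k\})\mathbf{1}_B]=\mathbb{P}(B)\,\mathbb{E}f(\{W_k^\Omega\})$ together with $\mathbb{P}(B^c)\leq\zeta_m$ yields
\[
\mathbb{E}\gamma_2^\rho(\{W_k\}_{k=0}^m)\;\leq\;\mathbb{E}\gamma_2^\rho(\{W_k^\Omega\}_{k=0}^m)+\zeta_m\sqrt{\log(m+1)}.
\]

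Next, setting $\varepsilon=\sup_{x\in\Omega}d_{\text{TV}}(P_\Omega(x,x+\cdot),\mu)$, I would couple $\{W_k^\Omega\}$ to an i.i.d.\ random walk $\bar{W}_{k+1}=\bar{W}_k+Z_k$ with $Z_k\iidsim\mu$ and $\bar{W}_0=W_0^\Omega$: so long as the two chains currently agree at some $x\in\Omega$, use the maximal coupling of $P_\Omega(x,\cdot)$ and $\mu(\cdot-x)$, which agrees with probability at least $1-\varepsilon$; otherwise evolve independently. Let $\tau$ be the first disagreement time; a union bound gives $\mathbb{P}(\tau\leq m)\leq m\varepsilon$. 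On $\{\tau>m\}$ the two trajectories coincide as sets so their FT functionals agree, and on $\{\tau\leq m\}$ I fall back on the deterministic $\sqrt{\log(m+1)}$ bound, giving
\[
\mathbb{E}\gamma_2^\rho(\{W_k^\Omega\}_{k=0}^m)\;\leq\;\mathbb{E}\gamma_2^\rho(\{\bar W_k\}_{k=0}^m)+m\varepsilon\sqrt{\log(m+1)}.
\]

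Finally, the random walk has $k$-step kernel $\tilde P^k(x,\cdot)=\mu^k(\cdot-x)$, so $\tilde P^k(x,B_r(x))=\mu^k(B_r)$ is independent of $x$; the supremum in $\mathcal{I}_\rho[\bar{\tilde P}_m]$ is therefore vacuous and Theorem~\ref{thm:bound_transition_kernel} gives
\[
\mathbb{E}\gamma_2^\rho(\{\bar W_k\}_{k=0}^m)\;\leq\;\frac{K}{\rho}\int_0^\rho \sqrt{(D+2)\log 3-\log\Bigl(\tfrac{1}{m}\sum_{k=1}^m\mu^k(B_r)\Bigr)}\,dr,
\]
which is exactly the leading term of (\ref{eq:FTIID}). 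Assembling the three bounds proves the corollary. The main obstacle I anticipate is the second stage: since conditioning on $B$ in principle makes the one-step law of $W_k^\Omega$ depend on the remaining horizon $m-k$, one must interpret $P_\Omega$ as a (possibly time-inhomogeneous) bona fide Markov kernel and check that the stepwise maximal-coupling construction is well-defined and consistent with the stated uniform TV bound. The remainder of the argument is bookkeeping and a direct invocation of Theorem~\ref{thm:bound_transition_kernel}.
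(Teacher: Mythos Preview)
Your proposal is correct and follows essentially the same approach as the paper: both use the deterministic bound $\gamma_2^\rho(\{W_k\}_{k=0}^m)\leq\sqrt{\log(m+1)}$ to control the error incurred in passing to the i.i.d.\ random walk $\{\bar W_k\}$ via a total-variation coupling, and then apply Theorem~\ref{thm:bound_transition_kernel} to $\{\bar W_k\}$. The only organisational difference is that you split the TV comparison into two stages (first conditioning on $B=\{W_k\in\Omega\ \forall k\}$, then a stepwise maximal coupling of $W^\Omega$ to $\bar W$), whereas the paper bounds $d_{\text{TV}}(\{W_k\},\{\bar W_k\})$ in one shot and then decomposes it as $\zeta_m$ plus a union bound over steps conditioned on $\{W_k\}\subset\Omega^m$; the two arguments are equivalent and your explicit stepwise construction in fact makes the union bound (and the role of $P_\Omega$) more transparent.
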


\begin{proof}
Let $\{\bar{W}_k\}_{k=0}^m$ denote the random walk satisfying $\bar{W}_{k+1} = \bar{W}_k + Z_k$, where each $Z_k \sim \mu$ is independent. Since $\gamma_2^\rho(W) \leq \sqrt{\log |W|}$, it follows that
\begin{align*}
|\mathbb{E}\gamma_2^\rho(\{W_k\}_{k=0}^m) - \mathbb{E}\gamma_2^\rho(\{\bar{W}_k\}_{k=0}^m)| &\leq \sqrt{\log(m+1)} \sup_{\|f\|_{\infty} \leq 1} |\mathbb{E}f(\{W_k\}_{k=0}^m) - \mathbb{E}f(\{\bar{W}_k\}_{k=0}^m)|\\
&\leq \sqrt{\log(m+1)} d_{\text{TV}} (\{W_k\}_{k=0}^m, \{\bar{W}_k\}_{k=0}^m).
\end{align*}
Since $\mathbb{E}\gamma_2^\rho(\{\bar{W}_k\}_{k=0}^m)$ is bounded above by the first term of (\ref{eq:FTIID}) by Theorem \ref{thm:bound_transition_kernel}, it suffices to show that
\[
d_{\text{TV}}(\{W_k\}_{k=0}^m, \{\bar{W}_k\}_{k=0}^m) \leq \zeta_m + m \sup_{x \in \Omega}d_{\text{TV}}(P_\Omega(x,x+\cdot),\mu).
\]
Let $Z_k$ be chosen such that $\{W_k\}_{k=0}^m$ and $\{\bar{W}_k\}_{k=0}^m$ are optimally coupled under the total variation metric, that is,
\[
d_{\text{TV}}(\{W_k\}_{k=0}^m, \{\bar{W}_k\}_{k=0}^m)
= \mathbb{P}(\{W_k\}_{k=0}^m \neq \{\bar{W}_k\}_{k=0}^m).
\]
Conditioning on $W_k \in \Omega$, there is
\begin{align*}
d_{\text{TV}}(\{W_k\}_{k=0}^m, \{\bar{W}_k\}_{k=0}^m)
&\leq \zeta_m + \mathbb{P}(\{W_k\}_{k=0}^m \neq \{\bar{W}_k\}_{k=0}^m \vert \{W_k\}_{k=0}^m \in \Omega^m)
\\&\leq \zeta_m + \sum_{j=1}^m\mathbb{P}(W_j \neq \bar{W}_j \vert \{W_k\}_{k=0}^m \in \Omega^m, \{W_k\}_{k=0}^{j-1} = \{\bar{W}_k\}_{k=0}^{j-1}),
\end{align*}
which implies (\ref{eq:FTIID}).
\end{proof}
\begin{proof}[Proof of Corollary \ref{cor:BulkExponent}]
By the hypotheses, for $\bar{c}_m = m^{-1}(c_1+\cdots+c_m)$, $m^{-1} \sum_{k=1}^m \mu^k(B_r) \geq \bar{c}_m r^{\alpha}$ for any $0 < r < r_0$. Let $\epsilon > 0$ be arbitrary. Note that if $r \leq \rho_\epsilon \coloneqq \min\{1,r_0,(\bar{c}_m / 3^{D+2})^{1/\epsilon}\}$, 
\[
\frac{1}{m} \sum_{k=1}^m \mu^k(B_r) \geq 3^{D+2} r^{\alpha + \epsilon}.
\]
Therefore, considering the first term on the right hand side of (\ref{eq:FTIID}), since $\rho_\epsilon \leq 1$,
\[
\int_{0}^{\rho_\epsilon}\sqrt{(D+2)\log3-\log\left(\frac{1}{m}\sum_{k=1}^{m}\mu^{k}(B_{r})\right)} \dd r \leq \sqrt{\alpha + \epsilon}\int_{0}^{1} \sqrt{\log\frac{1}{r}}\dd r.
\]
The result now follows from Corollary \ref{cor:FTIID}.
\end{proof}
}
\fi

\end{document}